\documentclass{article}

\PassOptionsToPackage{numbers, compress}{natbib}
 \usepackage[preprint]{neurips_2026}

\usepackage[utf8]{inputenc} 
\usepackage[T1]{fontenc}    
\usepackage{hyperref}       
\usepackage{url}            
\usepackage{booktabs}       
\usepackage{amsfonts}       
\usepackage{nicefrac}       
\usepackage{microtype}      
\usepackage{xcolor}         
\usepackage{amsmath}
\usepackage{amssymb}
\usepackage{graphicx}
\usepackage{subcaption}
\usepackage{mathtools}   
\usepackage{algorithm}
\usepackage{algpseudocode}

\newcommand{\bind}{\otimes}
\newcommand{\bundle}{\oplus}
\newcommand{\inv}{\dagger}
\newcommand{\E}{\mathbb{E}}
\newcommand{\mat}[1]{\mathbf{#1}}
\newcommand{\R}{\mathbb{R}}
\newcommand{\T}{\mathsf{T}}
\newcommand{\cov}{\operatorname{Cov}}
\DeclarePairedDelimiter{\norm}{\lVert}{\rVert}

\renewcommand{\H}{\operatorname{H}} 
\newcommand{\var}{\operatorname{Var}}
  \usepackage{amsthm}
\newtheorem{proposition}{Proposition}
\newcommand{\I}[2]{\mathbb{I}\!\left(#1;\,#2\right)}
\newtheorem{remark}{Remark}

\newcommand{\sg}[1]{\operatorname{sg}(#1)}

\title{Disentanglement with Holographic Reduced Representations}

\author{
  Jhonny J. Velasquez Olivera \\
  Virginia Tech \\
  \texttt{jhonnyv19@vt.edu} \\
  \And
  Christo K. Thomas \\
  Worcester Polytechnic Institute \\
  cthomas2@wpi.edu \\
  \And
  Walid Saad \\
  Virginia Tech \\
  \texttt{walids@vt.edu} \\
}

\begin{document}

\maketitle

\begin{abstract}
The process of disentanglement, that focuses on separating the factors of variation in a dataset using neural networks, is a long standing challenge in machine learning. Prior solutions to this problem included the design of variational autoencoders and generative adversarial networks that use concepts from variational inference and information-theoretic constraints in their loss functions, respectively. In contrast to these prior works that rely on continuous representations in their models, we propose a new design that naturally views disentangled representations as symbolic structures due to the compositional nature of the relationship between the concepts that make up a sample from a distribution. However, learning discrete symbolic structures using traditional neural network architectures while maintaining differentiability is challenging, often requiring complex architectures to accomplish. To this end, in this paper, we propose an unsupervised learning algorithm that makes use of holographic reduced representations (HRR) for disentanglement using neural networks. We find that the unbinding operation, defined over HRR vectors, yields a suitable inductive bias for the separation of factors, and yields competitive results compared to other baselines, as measured using latent traversals and disentanglement metrics. We complement these empirical findings with an information-theoretic analysis of the HRR unbinding channel. We prove that unbinding induces approximately independent symbol-value pairs (also called slots) and derive a per-slot capacity bound that quantifies how many distinct symbolic concepts the representation can reliably encode, providing a quantitative account of the inductive bias toward disentanglement. The disentangled representations produced in this process differ from other autoencoder based models in that the individual latent units are vectors themselves which are summed together, differing from the paradigm of latent units behaving as scalar dimensions of low dimensional vectors. We show that this type of HRR representation is more robust to noise than other disentangled representations and can maintain good reconstruction performance across a range of SNRs, while simultaneously gracefully degrading in the recoverable semantic content. Our results thus show that the use of vector symbolic architectures (VSA), such as HRRs, may hold a promising potential for representation learning, paving the way toward exploring new ways in which the symbolic benefits of VSAs can be used to represent data with neural networks.   
\end{abstract}

\section{Introduction}

The pursuit of learning efficient representations for data has been of interest to the deep learning field for many years, and led to the emergence of the field of representation learning~\cite{bengioRepresentationLearningReview2013}. Understanding representations of data within neural networks can shed light into how to construct better inductive biases to train them, and provide new insights into how neural networks interact with the data they use. Disentangled representation learning is a popular approach to the more general representation learning problem whose focus is on how to disentangle the generative factors that make up a dataset. Perhaps the earliest work on this was a more heavily regularized variational autoencoder (VAE)~\cite{kingmaAutoEncodingVariationalBayes2022a}, called the $\beta$-VAE~\cite{higgins2017betavae}. Following the $\beta$-VAE, a line of work emerged that attempted to further improve on its disentanglement abilities leading to works such as the total correlation (TC) VAE~\cite{chenIsolatingSourcesDisentanglement2018}, and the Factor-VAE~\cite{pmlr-v80-kim18b}, each of which build on the joint reconstruction and Kullback Leibler (KL) divergence objectives of the original VAE. Alongside these models, a parallel line of work proposed more comprehensive metrics for evaluating disentanglement~\cite{ridgewayLearningDeepDisentangled2018a, eastwood2018a}. Most prior art in disentanglement  has followed the VAE framework, with the exception of InfoGAN, an information-theoretic extension of generative adversarial networks (GANs)~\cite{goodfellowGenerativeAdversarialNets2014, chenInfoGANInterpretableRepresentation2016a}. Recently, there has also been evidence to show that disentanglement can arise using discrete representations~\cite{hsu2023disentanglement}, building on previous evidence that neural networks can learn discrete representations, in general~\cite{vandenoordNeuralDiscreteRepresentation2017}. However,
disentanglement does not arise on its own. Without inductive biases on both
the data and the model, unsupervised disentanglement is
impossible~\cite{JMLR:v21:19-976}. This points to the
inductive bias of each method as the main driver of its disentanglement
quality. Vector symbolic architectures (VSAs) offer a structurally different inductive bias, namely a \emph{compositional vector space}, but their potential for unsupervised disentanglement remains largely unexplored. A prior work explored this intersection and was able to disentangle generative factors using VSAs~\cite{korchemnyi2024symbolicdisentangledrepresentationsimages}. However, the training procedure required carefully constructed data pairs that differ in only one generative factor at a time, limiting the solution of \cite{korchemnyi2024symbolicdisentangledrepresentationsimages} to datasets that span a sufficient number of factor combinations.

A central challenge to using VSAs in representation learning is the difficulty of mapping high-dimensional data to the structured vector spaces that VSAs provide. Mapping data to vectors using VSAs is traditionally done using hand designed methods that attempt to capture the structures apparent in the raw data~\cite{kleykoSurveyHyperdimensionalComputing2022, smetsEncodingFrameworkBinarized2024, 10.1145/3195970.3196060, rennerNeuromorphicVisualScene2024a}. For example, one method of mapping images to a VSA vector involves representing the image as a sum of many vectors, each representing pixel intensity and position~\cite{rennerNeuromorphicVisualScene2024a}. While such methods are interpretable, they can only encode structure that the designer anticipates in advance, leaving little room for the model to discover new representations from data. On the other hand, neural networks have been shown to be able to learn strong representations of data that can be used for downstream tasks such as classification \cite{chenSimpleFrameworkContrastive2020a, DBLP:journals/corr/RadfordMC15, caronEmergingPropertiesSelfSupervised2021b, assranSelfSupervisedLearningImages2023a, grillBootstrapYourOwn2020a, heMaskedAutoencodersAre2022, mikolovEfficientEstimationWord2013a, radfordLearningTransferableVisual2021a}. However, using neural networks to learn VSA representations directly from data remains a relatively unexplored field. Some works have trained neural networks to map data into VSAs~\cite{herscheNeurovectorsymbolicArchitectureSolving2023a, ganesanLearningHolographicReduced2021a, alam2023towards, alamRecastingSelfAttentionHolographic2023a}, but these typically rely on labeled training data~\cite{ganesanLearningHolographicReduced2021a}, generalize poorly, and leave little room for the model to discover its own structure. Despite the rich literature on disentanglement and the long history of VSAs as a representation framework, there is no existing method that learns HRR-structured latent representations from data without supervision, and no existing theory that explains why an HRR structure should produce disentanglement. This paper closes this gap.

The main contribution of this paper is to demonstrate, both empirically and theoretically, that holographic reduced representations (HRRs) provide an effective inductive bias for unsupervised disentanglement, achieving competitive performance on standard benchmarks while admitting a quantitative information-theoretic account of why the bias works. In particular, our key contributions include:
\begin{itemize}
    
    \item We show that training a neural network with a reconstruction objective,
paired with a latent reconstruction process and an HRR structural regularizer,
\emph{produces disentangled representations of the generative factors of a
dataset} without the use of any labels, unlike prior work on learning VSA representations from
data~\cite{korchemnyi2024symbolicdisentangledrepresentationsimages}.

    \item We derive a theoretical upper bound on the mutual information between the
input and the HRR-structured latent, quantifying how many distinct symbolic
concepts the representation can reliably encode. This bound provides a
principled basis for choosing the architectural parameters HRR vector dimension, number of symbols, and the latent codebook size.
    \item Experimental results show that the resulting representations achieve the best aggregate
scores on five of six standard disentanglement metrics (InfoMEC and DCI)
across four datasets, with relative improvements of roughly $5-10\%$ over the
next-best baseline on the core disentanglement metrics, while exhibiting
noise robustness comparable to VQ-VAE and considerably better than other
disentanglement methods. HRR is the only model that achieves both superior
disentanglement and strong noise robustness on the studied benchmarks.  Qualitative analyses of latent
    component swapping further confirm that the learned representation
    aligns with individual generative factors.

\end{itemize}
\section{Preliminaries of HRR vectors}
\label{sec:hrr-prelim}

HRRs are a VSA that represents discrete entities, called \emph{symbols}, as high-dimensional random vectors, and represents structured combinations of those entities through algebraic operations on the vectors. Based on~\citep{plateHolographicReducedRepresentations1995a}, an HRR vector $\mat{x} \in \mathbb{R}^d$ is typically initialized with entries sampled independently as $x_i \sim \mathcal{N}(0,1/d)$, so that distinct randomly initialized vectors are approximately orthogonal for large $d$.

HRRs define two main operations, \emph{binding} and \emph{bundling}. Binding, denoted $\bind$, associates two vectors through circular convolution, producing a representation that is approximately dissimilar to either input. In practice, this operation can be computed efficiently as multiplication in the frequency domain. Bundling, denoted $\bundle$, superimposes vectors through addition and is commonly used to represent sets or compositions of symbols.
A useful property of HRRs is approximate unbinding. Given a bound representation $\mat{z} = \mat{x} \bind \mat{y}$, one can recover a noisy estimate of $\mat{x}$ by applying an approximate inverse of $\mat{y}: \hat{\mat{x}} = \mat{y}^{\inv} \bind \mat{z} \approx \mat{x}$. The approximation error is often called \emph{unbinding noise}. Here, ``inverse'' follows the terminology of the HRR literature rather than denoting an exact algebraic inverse; we defer further details to the appendix and refer readers to prior work on HRR inverses~\cite{ganesanLearningHolographicReduced2021a}, and the original HRR paper~\cite{plateHolographicReducedRepresentations1995a}.

\section{HRRs for disentanglement} \label{sec:hrrs_for_disentanglement}

Disentangled representations aim to capture the structure within a dataset. Although continuous representations, and more recently discrete representations~\cite{hsu2023disentanglement, 10.5555/3692070.3692839}, have been shown to separate some of this structure, we will show that symbolic representations may be better suited for this task. Symbolic structures are useful for representing data with compositional structure, where entities, attributes, and relations combine in meaningful ways. For example, consider an image of a dog in a backyard. Although the image is represented at the pixel level, we naturally interpret the scene in terms of entities, attributes, and relations. There is a dog, the dog is in a backyard, and the dog has attributes such as breed, age, color, and size. This interpretation can be organized as a simple symbolic structure, where the root node captures the main entity, \emph{dog}, and its associated attributes are represented as role-filler pairs such as \emph{breed: Chihuahua} or \emph{size: small}. In a VSA, this kind of structure can be encoded by binding each role vector to its corresponding filler vector and bundling the resulting pairs into a single distributed representation. While this example uses only one level of role-filler binding, more complex hierarchies can be expressed through recursive composition~\cite{plateHolographicReducedRepresentations1995a}. The flat latent representations used in prior continuous and discrete disentanglement work  \cite{hsu2023disentanglement,10.5555/3692070.3692839} do not naturally express this kind of compositional structure. Here, we present a proof of concept that a neural network can be trained to produce such structured vector representations directly from data.

HRRs allow us to impose such a structure on a neural network's latent space using a hetero-associative memory, where pairs of vectors are stored such that one element of the pair can be retrieved using the other as a cue. We define an HRR structured representation as a vector in the form $\mat{z} = \left(\mat{s}_1 \bind \mat{v}_1 \right) \bundle \cdots \bundle \left(\mat{s}_m \bind \mat{v}_m \right)$. Each $\mat{s}_i$ and $\mat{v}_i$ is an HRR vector, where $\bind$ and $\bundle$ are the binding and bundling operations over HRRs. We refer to the $\mat{s}_i$  as symbol vectors, which  act as the cues to the memory and to  $\mat{v}_i$ as value vectors, which  act as the items stored in the memory. We refer to the number of symbol-value pairs 
$m$ as the symbolic length of the representation. The encoder is trained to produce $\mat{z}$ from the data $\mat{x}$. Ideally, the model learns to associate each $\mat{s}_i$ with the most important generative factors of the data. For example, on images of chairs, $\mat{s}_1$ could represent   chair type (sofa, stool, recliner, etc.), $\mat{s}_2$ chair pose, and $\mat{s}_3$ chair size; the corresponding $\mat{v}_i$ would then encode the specific value of each factor (e.g., $\mat{v}_1
$ for ``sofa").  Each bound pair $\mat{s}_i \bind \mat{v}_i$ thus represents the realization
of one generative factor, and the bundled sum
$\mat{z} = \bigoplus_{i=1}^{m} \mat{s}_i \bind \mat{v}_i$ forms a
neuro-symbolic representation of $\mat{x}$. The symbolic length 
$m$ also controls how finely the model can separate generative factors. A larger 
$m$ provides more symbol-value pairs in which to place distinct factors, while a smaller 
$m$ forces the model to group factors together.

\subsection{Model architecture} \label{ssec:model_arch}

Fundamentally, our model is a CNN based autoencoder that learns to map the data, $\mat{x}$, directly to the latent representation, $\mat{z}$, with an HRR unbinding and vector quantization step. Along with the autoencoder we initialize a set of $m$ symbol vectors, $\mathcal{S} =\{\mat{s}_1, \mat{s}_2, \dots, \mat{s}_m\}$, and a codebook of $k \gg m$ value vectors, $\mathcal{C} =\{\mat{c}_1, \mat{c}_2, \dots, \mat{c}_k\}$. The vectors in $\mathcal{S}$ are randomly initialized from the HRR distribution described in Section~\ref{sec:hrr-prelim} and remain frozen throughout training. The codebook $\mathcal{C}$ is also initialized randomly but not from the HRR distribution as the codebook is optimized throughout the training process. 

The encoder first predicts the latent representation, $\mat{z}$, from the data. From the latent representation, we retrieve the vectors in the hetero-associative memory through the unbinding operation defined in Section~\ref{sec:hrr-prelim} by binding the latent once with the inverse of each of the vectors in $\mathcal{S}$. This retrieval produces the noisy value vectors $\{\tilde{\mat{v}}_1, \tilde{\mat{v}}_2, \dots, \tilde{\mat{v}}_m\}$, which each get processed by a small feedforward neural network, referred to as a denoising network, to produce $\{\bar{\mat{v}}_1, \bar{\mat{v}}_2, \dots, \bar{\mat{v}}_m\}$. From here, we do vector quantization as described in~\cite{vandenoordNeuralDiscreteRepresentation2017} to map each $\bar{\mat{v}}_i$ to its nearest Euclidean neighbor in the codebook $\mathcal{C}$, resulting in the final set of value vectors $\{\hat{\mat{v}}_1, \dots, \hat{\mat{v}}_m\}$, where each $\hat{\mat{v}}_i \in \mathcal{C}$. We will write $\bar{\mathbf{V}} \in \mathbb{R}^{m \times d}$ and
$\mathbf{C} \in \mathbb{R}^{k \times d}$ for the matrices whose rows are
the denoised retrievals and the codebook entries, respectively. We bind each $\hat{\mat{v}}_i$ with the corresponding symbol vector that it originally corresponded with to construct $\hat{\mat{z}} = \left(\mat{s}_1 \bind \hat{\mat{v}}_1 \right) \bundle \cdots \bundle \left(\mat{s}_m \bind \hat{\mat{v}}_m \right)$. The decoder then uses $\hat{\mat{z}}$ to reconstruct the original input $\mat{x}$.

\vspace{-2mm}\subsection{Regularization and optimization}

For a matrix $\mathbf{X} \in \mathbb{R}^{n \times d}$ with rows $\mathbf{x}_i$, we define the structural regularizer
\vspace{-0.2em}
\begin{equation}
    \mathcal{R}(\mathbf{X};\, \tau_n, \tau_v) =
        \left(\frac{1}{n}\sum_{i=1}^{n}\|\mathbf{x}_i\|^2 - \tau_n\right)^{\!2}
      + \left(\operatorname{mean}(\mathbf{X})\right)^2
      + \left(\operatorname{Var}(\mathbf{X}) - \tau_v\right)^{\!2},
\end{equation}
where $\operatorname{mean}$ and $\operatorname{Var}$ are computed globally over all entries.  This regularizer pushes a matrix toward zero mean, target per-row squared norm $\tau_n$, and target per-entry variance $\tau_v$, matching the statistical properties expected of HRR vectors and their bundled compositions (Appendix~\ref{app:retrieval}).  We apply $\mathcal{R}$ to three different matrices, with targets chosen so that each matches the relevant HRR distribution, as shown in Table~\ref{tab:regularizers}.

\vspace{-1.5em}
\begin{table}[h]
\caption{Specifications of the three structural regularizers.}
\vspace{0.1em}
\label{tab:regularizers}
\centering
\begin{tabular}{lccccc}
\toprule
Term & $\mathbf{X}$ & $n$ & $\tau_n$ & $\tau_v$ \\
\midrule
$\mathcal{L}_{\text{latent}}$   & $\mathbf{z}^{\top}$ & $1$ & $m$   & $m/d$ \\
$\mathcal{L}_{\text{value}}$    & $\bar{\mathbf{V}}$  & $m$ & $1$   & $1/d$ \\
$\mathcal{L}_{\text{codebook}}$ & $\mathbf{C}$        & $K$ & $1$   & $1/d$ \\
\bottomrule
\end{tabular}
\end{table}
\vspace{-0.8em}

The targets $\tau_v = 1/d$ for $\bar{\mathbf{V}}$ and $\mathbf{C}$ reproduce
the per-component variance of HRR vectors (Section~\ref{sec:hrr-prelim}); the targets
for $\mathbf{z}$ ($\tau_n = m$, $\tau_v = m/d$) match the expected squared norm
and per-component variance of a sum of $m$ bound HRR pairs (Appendix~\ref{app:retrieval}). Because the vector quantization step is non-differentiable, we use the straight-through estimator~\cite{bengio2013estimatingpropagatinggradientsstochastic} to propagate gradients through it.
    
\paragraph{Loss function}

The model is trained with an unsupervised auto-encoding objective, using the AdamW optimization algorithm~\cite{loshchilov2018decoupled}. The total loss combines a binary cross-entropy reconstruction loss
$\mathcal{L}_{\text{recon}}$, the vector quantization (VQ) and commitment
losses from the original VQ-VAE~\cite{vandenoordNeuralDiscreteRepresentation2017},
and the three structural regularizers defined above. Following~\cite{vandenoordNeuralDiscreteRepresentation2017},
we define

\vspace{-1.1em}
\[
\mathcal{L}_{\text{VQ}} = \|\text{sg}[\bar{\mat{v}}] - \mathbf{c}||^2 , \, \mathcal{L}_{\text{commit}} = \|\bar{\mat{v}} - \text{sg}[\mathbf{c}]||^2,
\]
\vspace{-1.1em}

where $\text{sg}[\cdot]$ is the stop gradient operator, $\bar{\mat{v}}$ is the denoised retrieval, and $\mathbf{c}$ is the nearest Euclidean neighbor in the codebook $\mathcal{V}$. $\mathcal{L}_{\text{VQ}}$ allows gradients to reach the codebook, while $\mathcal{L}_{\text{commit}}$ allows gradients to reach the encoder. Both terms ensure the codebook and the encoder work together to converge on mutually optimal codebook entries and latent predictions, respectively.
The total objective is
\begin{equation}
    \mathcal{L} = \underbrace{\mathcal{L}_{\text{recon}}
                + \lambda_{\text{vq}}\mathcal{L}_{\text{VQ}}
                + \lambda_{c}\mathcal{L}_{\text{commit}}}_{\text{reconstruction}}
                + \underbrace{\lambda_{\text{lat}}\mathcal{L}_{\text{latent}}
                + \lambda_{\text{val}}\mathcal{L}_{\text{value}}
                + \lambda_{\text{cb}}\mathcal{L}_{\text{codebook}}}_{\text{regularization}},
\end{equation}
where all $\lambda \geq 0$ are scalar weights. The hyperparameters used for all runs are listed in Table~\ref{tab:model_hyperparams}.

\subsection{Capacity of the HRR latent channel}
\label{sec:capacity}

Having specified the model and its training objective, we now turn to an
information-theoretic analysis of the HRR latent channel, which clarifies
how much information the symbolic latent $\hat{\mathbf{z}}$ can carry about
the input $\mathbf{x}$ and motivates the design choices made in our experiments.
In what follows, we refer to the 
$i$-th symbol-value pair $(\mathbf{s}_i, \mathbf{v}_i)$ as the 
$i$-th slot. The unbinding analysis in Appendix~\ref{app:retrieval} shows that, when retrieving the value vector
bound to slot~$i$, the result decomposes as

\vspace{-1em}
\begin{equation}
\tilde{\mat{v}}_i \;=\; (1+\xi)\,\mat{v}_i \;+\; \boldsymbol{\eta}_i \;+\!\!\sum_{k\neq i}\boldsymbol{\lambda}_i^{(k)},
\label{eq:retrieval-decomp}
\end{equation}
\vspace{-1em}

with intrinsic noise $\boldsymbol{\eta}_i$ and cross-talk $\boldsymbol{\lambda}_i^{(k)}$ from the other $m-1$ bound pairs.
Treating $\xi\to 0$ in expectation, the per-component signal power is $1/d$ and the per-component
noise power is $m/d$, giving an SNR of $1/m$ per dimension. Each retrieval thus passes through
$d$ parallel AWGN sub-channels, each with the same SNR. We use this to bound the total information
the symbolic latent $\hat{\mat{z}}$ can carry about the input $\mat{x}$.
 
\begin{proposition}
\label{prop:capacity}
Let $\hat{\mat{z}}=\sum_{i=1}^{m}\mat{s}_i\bind \hat{\mat{v}}_i$ be the symbolic latent, where each $\hat{\mat{v}}_i$ is the quantized value
selected from a codebook of size $k$. Under the unbinding noise model of
Appendix~\ref{app:retrieval}, the mutual information between input and latent satisfies

\vspace{-1em}
\begin{equation}
\I{\mat{x}}{\hat{\mat{z}}} \;\le\; \min \,\Bigl(\, m\cdot\tfrac{d}{2}\log\!\bigl(1+\tfrac{1}{m}\bigr),\; m\log k\,\Bigr).
\label{eq:capacity-bound}
\end{equation}
\end{proposition}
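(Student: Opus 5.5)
The bound is a minimum of two terms, so I would prove $\I{\mat{x}}{\hat{\mat{z}}}$ is at most each of them separately: a combinatorial (entropy) bound giving $m\log k$, and a channel-capacity bound giving $m\cdot\tfrac{d}{2}\log(1+\tfrac1m)$. In both I would use that the symbol vectors $\mathcal{S}$ are frozen, hence deterministic given the model, and that $\hat{\mat{z}}$ is a deterministic function of the tuple $(\hat{\mat{v}}_1,\dots,\hat{\mat{v}}_m)$.

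\emph{Codebook term.} The chain $\mat{x}\to(\hat{\mat{v}}_1,\dots,\hat{\mat{v}}_m)\to\hat{\mat{z}}$ is a Markov chain because $\hat{\mat{z}}=\sum_i \mat{s}_i\bind\hat{\mat{v}}_i$. The data-processing inequality gives $\I{\mat{x}}{\hat{\mat{z}}}\le \H(\hat{\mat{z}})\le \H(\hat{\mat{v}}_1,\dots,\hat{\mat{v}}_m)$. Each $\hat{\mat{v}}_i\in\mathcal{C}$ takes at most $k$ values. Subadditivity of entropy then gives $\H(\hat{\mat{v}}_1,\dots,\hat{\mat{v}}_m)\le\sum_i\H(\hat{\mat{v}}_i)\le m\log k$. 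This step is routine, and the discreteness makes $\H(\hat{\mat{z}})$ a genuine discrete entropy.

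\emph{Channel term.} Here I would use the retrieval model of Appendix~\ref{app:retrieval} as stated in \eqref{eq:retrieval-decomp}. The information about $\mat{x}$ that reaches $\hat{\mat{z}}$ must pass through the $m$ noisy retrievals $\tilde{\mat{v}}_1,\dots,\tilde{\mat{v}}_m$: the denoiser and the quantizer are deterministic functions of $\tilde{\mat{v}}_i$, so we have a Markov chain $\mat{x}\to\mat{z}\to(\tilde{\mat{v}}_i)_i\to(\bar{\mat{v}}_i)_i\to(\hat{\mat{v}}_i)_i\to\hat{\mat{z}}$. Under the noise model, each slot is $\tilde{\mat{v}}_i=\mat{v}_i+\mat{n}_i$ with $\xi\to0$. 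The per-component signal power is $1/d$, enforced by $\mathcal{L}_{\text{value}}$ and the HRR prior. The effective noise $\mat{n}_i=\boldsymbol{\eta}_i+\sum_{k\ne i}\boldsymbol{\lambda}_i^{(k)}$ has per-component power $m/d$ and is modeled as Gaussian and independent of $\mat{v}_i$. I would bound $\I{\mat{x}}{\hat{\mat{z}}}\le \I{(\mat{v}_i)_i}{(\tilde{\mat{v}}_i)_i}\le\sum_i \I{\mat{v}_i}{\tilde{\mat{v}}_i}$, and handle each slot as $d$ parallel AWGN sub-channels. Gaussian noise is the worst case and Gaussian inputs maximize mutual information under a power constraint, so each coordinate contributes at most $\tfrac12\log(1+\tfrac{1/d}{m/d})=\tfrac12\log(1+\tfrac1m)$. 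Summing over the $d$ coordinates and the $m$ slots gives the first term.

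\emph{Main obstacle.} The delicate step is justifying the subadditivity $\I{(\mat{v}_i)}{(\tilde{\mat{v}}_i)}\le\sum_i\I{\mat{v}_i}{\tilde{\mat{v}}_i}$ and the treatment of the noise as additive, independent, and Gaussian. The cross-talk terms $\boldsymbol{\lambda}_i^{(k)}$ depend on the other slots' values, so the noise is not truly independent of the signal. I would handle this by working explicitly under the appendix's approximate-independence model. That model treats the noises across slots and coordinates as independent and independent of the inputs, which is the sense in which ``unbinding induces approximately independent slots''. Under it the joint channel factorizes into a memoryless product channel, whose mutual information is at most the sum of the per-use capacities. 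I would also note that the per-coordinate power constraint should be averaged over coordinates, with Jensen's inequality applied to $\log(1+\cdot)$, so that unequal power across coordinates cannot exceed $\tfrac d2\log(1+\tfrac1m)$. Taking the minimum of the two bounds completes the argument.
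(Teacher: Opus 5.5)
Your argument holds under the idealization you adopt, but it is organized differently from the paper's. The paper gets both terms from a single per-slot decomposition. After data processing it writes $\I{\mat{x}}{\hat{\mat{V}}_{1:m}}\le\sum_{i}\I{\mat{x}}{\hat{\mat{v}}_i}+\sum_{i\ge 2}\I{\hat{\mat{v}}_i}{\hat{\mat{V}}_{<i}\mid\mat{x}}$. It bounds the conditional total correlation by Proposition~\ref{prop:slot-indep} as $\mathcal{O}(m^2/(t^4 d))$. It then bounds each $\I{\mat{x}}{\hat{\mat{v}}_i}$ both by the AWGN capacity $\tfrac{d}{2}\log(1+\tfrac{1}{m})$ and by $\H(\hat{\mat{v}}_i)\le\log k$. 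Your codebook half, $\I{\mat{x}}{\hat{\mat{z}}}\le\H(\hat{\mat{z}})\le\H(\hat{\mat{V}}_{1:m})\le m\log k$, is cleaner than the paper's. It uses no slot independence, carries no error term, and holds for the actual network, not only under the noise model. In the paper, even the $m\log k$ branch formally inherits the $\mathcal{O}(m^2/(t^4 d))$ correction, which Step~4 silently drops.

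For the channel half the trade-off is reversed. Your product-channel step $\I{(\mat{v}_i)_i}{(\tilde{\mat{v}}_i)_i}\le\sum_i\I{\mat{v}_i}{\tilde{\mat{v}}_i}$ requires the slot noises to be exactly independent of each other and of the inputs. The appendix model does not say this. It computes nonzero cross-slot blocks $(\Sigma_{ij})_{ab}=\tfrac{1}{d}(\mat{v}_j\bind\mat{v}_i)_{a+b}$. It also shows that without the variance floor $t^2$ the joint retrieval covariance is singular, which makes the total correlation infinite. So exact independence is an extra assumption you are adding, not a property of the model. The paper's route needs only the approximate independence of Proposition~\ref{prop:slot-indep} and makes its cost explicit. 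Strictly, though, it then establishes the bound only up to the additive $\mathcal{O}(m^2/(t^4 d))$ term, and a faithful version of your channel argument should carry that term as well.

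One smaller correction: ``Gaussian noise is the worst case'' points the wrong way for an upper bound. At fixed noise variance, Gaussian noise \emph{minimizes} capacity. Your per-coordinate bound $\tfrac12\log(1+\tfrac1m)$ therefore rests on the noise being modeled as Gaussian, together with Gaussian inputs maximizing output entropy under the power constraint. Your Jensen averaging over coordinates is exactly the paper's use of total power $P=1$ in $\tfrac{d}{2}\log(1+P/(d\sigma^2))$.
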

\vspace{-1em}

\begin{proof}
The proof proceeds in four steps. The data-processing inequality reduces
$\I{\mat{x}}{\hat{\mat{z}}}$ to a sum of per-slot mutual informations once
approximate slot independence (Proposition~\ref{prop:slot-indep}) is
established. Each per-slot term is then bounded by (a) the capacity of the
$d$-dimensional AWGN unbinding channel, $\tfrac{d}{2}\log(1+\mathrm{SNR})$
with $\mathrm{SNR}=1/m$, and (b) the codebook entropy $\log k$. Taking the
minimum of the two gives the bound. See Appendix~\ref{app:proofs} for the
full proof.
\end{proof}
\paragraph{Slot--dimension tradeoff}
The continuous part of the bound in~\eqref{eq:capacity-bound} is
\[
\frac{md}{2}\log(1+1/m),
\]
or equivalently \(\frac d2\log(1+1/m)\) per slot. Thus, increasing the number of
slots $m$ provides more slots, but also lowers
the effective per-slot SNR because each retrieved slot receives cross-talk from the other slots. For large $m$, $\log(1+1/m)\approx 1/m$, so the total continuous capacity approaches $\frac{d}{2}$. In this regime, adding more slots does not substantially increase the total amount of information that can be carried by the HRR latent, rather, it primarily divides the available capacity across more noisier retrieval channels.

This formalizes the empirical observation that unbinding performance degrades
as \(m\) increases at fixed \(d\) (Appendix~\ref{app:experimental-setup}). It also
motivates using a modest latent-to-source overcomplete ratio in our experiments, as the number of slots should be large enough to cover the generative factors, but not so large that each slot's effective retrieval channel becomes dominated by cross-talk noise. This motivates the $1.5\times$ latent-to-source ratio used in
Section~\ref{sec:disentanglement_experiments}.
 
\paragraph{Codebook sizing}

The second term, \(m\log k\), is the maximum discrete rate available after quantization. When
\[
\log k < \frac{d}{2} \log(1+1/m),
\]
the codebook resolution is the active bottleneck, and increasing \(k\) can increase the representational rate. Once
\[
\log k \geq \frac{d}{2} \log(1+1/m),
\]
the HRR retrieval channel becomes the bottleneck, so further increasing $k$ cannot increase the leading-order capacity bound. The codebook perplexity curves in Figure~\ref{fig:codebook_utilization} are consistent with this interpretation. Harder datasets such as Isaac3D and MPI3D-C drive higher codebook perplexity, suggesting that these datasets require more discrete states per slot and are therefore more likely to operate near the codebook-resolution constraint.
\subsection{Approximate slot independence}
\label{sec:slot-indep}

We write $q(\tilde{\mathbf{v}}_1,\ldots,\tilde{\mathbf{v}}_m\mid \mathbf{x})$
for the joint distribution of the unbound retrievals conditional on an input
$\mathbf{x}$, where the randomness is induced by the HRR initialization of the
symbol vectors $\mathbf{s}_i \in \mathcal{S}$. Although these symbols are fixed
after model initialization, treating them as random in the analysis allows us to
quantify the typical dependence induced by HRR cross-talk.

The proof of Proposition~\ref{prop:capacity} uses the fact that the retrieved
slots become approximately independent in high dimension. This dependence is
not exactly zero at finite $d$: the cross-talk term
$\boldsymbol{\lambda}_i^{(k)}$ in~\eqref{eq:retrieval-decomp} couples slot~$i$
to slot~$k$ through products of the form
$\mathbf{s}_i^{\dagger}\bind \mathbf{s}_k\bind \mathbf{v}_k$. However, the
resulting off-diagonal covariance blocks are small. In particular, Appendix~\ref{app:off_diagonal_blocks}
shows that for $i\neq j$,
\[
(\Sigma_{ij})_{ab}
=
\frac{1}{d}(\mathbf{v}_j\bind\mathbf{v}_i)_{a+b},
\]
so the off-diagonal covariance is generated by a circular convolution of
independent HRR vectors. Consequently, the aggregate off-diagonal covariance
vanishes with $d$.

\begin{proposition}[Approximate slot independence]
\label{prop:slot-indep}
Let $q(\tilde{\mathbf{v}}_1,\dots,\tilde{\mathbf{v}}_m \mid \mathbf{x})$ be the joint distribution of unbound
retrievals under the HRR noise model of Appendix~\ref{app:retrieval}, and let
$q_i(\tilde{\mathbf{v}}_i \mid \mathbf{x})$ be its marginals. With posterior
variance floor $t^2>0$, 
\begin{equation}
D_{\mathrm{KL}}\!\bigl(q(\tilde{\mathbf{V}}_{1:m}\mid \mathbf{x})\,\big\|\,
\textstyle\prod_{i=1}^{m}q_i(\tilde{\mathbf{v}}_i\mid \mathbf{x})\bigr)
\;=\;
\mathcal{O}\!\left(\frac{m^2}{t^4 d}\right),
\label{eq:slot-indep}
\end{equation}
which vanishes as $d\to\infty$ at fixed $m$ and fixed $t>0$.
\end{proposition}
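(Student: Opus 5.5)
The plan is to model the stacked retrievals $\tilde{\mathbf{V}}_{1:m}\in\mathbb{R}^{md}$, conditional on $\mathbf{x}$, as a jointly Gaussian vector. This is the HRR noise model of Appendix~\ref{app:retrieval}: each slot is an AWGN sub-channel, and cross-talk is a sum of many weakly dependent products of $\mathcal{N}(0,1/d)$ entries. The covariance $\Sigma$ has diagonal blocks $\Sigma_{ii}$ and off-diagonal blocks $\Sigma_{ij}$ given by $(\Sigma_{ij})_{ab}=\frac{1}{d}(\mathbf{v}_j\bind\mathbf{v}_i)_{a+b}$. For Gaussians, the KL divergence between the joint and the product of marginals is the Gaussian total correlation,
\[
D_{\mathrm{KL}} \;=\; \tfrac12\Bigl(\textstyle\sum_i \log\det\Sigma_{ii} - \log\det\Sigma\Bigr) \;=\; -\tfrac12\log\det\bigl(I + E\bigr),
\]
where $D=\mathrm{blockdiag}(\Sigma_{ii})$ and $E = D^{-1/2}(\Sigma-D)D^{-1/2}$ has zero diagonal blocks. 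The means cancel because the marginals share them.

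I would impose the posterior variance floor by requiring $\Sigma_{ii}\succeq t^2 I$. This gives $\|D^{-1/2}\|_{\mathrm{op}}^2\le 1/t^2$, and so $\|E\|_F^2\le \|\Sigma-D\|_F^2/t^4$. Since $\operatorname{tr}E=0$, a second-order expansion $-\log\det(I+E) = \tfrac12\operatorname{tr}(E^2)+O(\|E\|^3)$ gives
\[
D_{\mathrm{KL}}\;\le\; \tfrac14\|E\|_F^2\,(1+o(1)) \;\le\; \frac{1}{4t^4}\sum_{i\neq j}\|\Sigma_{ij}\|_F^2\,(1+o(1)).
\]
The expansion is valid once $\|E\|_{\mathrm{op}}\le 1/2$. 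Alternatively, I would use the bound $-\log\det(I+E)\le \|E\|_F^2$ for $\|E\|_{\mathrm{op}}\le1/2$ and zero trace, which follows from $x-\log(1+x)\le x^2$ for $x\ge -1/2$.

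The remaining task is to bound $\|\Sigma_{ij}\|_F^2$. Each row index $a$ of the Hankel-type block collects all entries of $\mathbf{w}=\mathbf{v}_j\bind\mathbf{v}_i$, so $\|\Sigma_{ij}\|_F^2 = \frac{1}{d^2}\cdot d\,\|\mathbf{w}\|^2=\frac{1}{d}\|\mathbf{w}\|^2$. For independent HRR vectors, $\E\|\mathbf{v}_j\bind\mathbf{v}_i\|^2 = d\cdot\frac{1}{d}=1$, since each component is a sum of $d$ products with variance $1/d^2$. Hence $\E\|\Sigma_{ij}\|_F^2 = 1/d$. Summing over the $m(m-1)$ ordered pairs gives $\E\,D_{\mathrm{KL}} \lesssim m^2/(4t^4 d)$, which is the claimed $\mathcal{O}(m^2/(t^4 d))$. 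I would add a concentration step (chi-square-type tails for $\|\mathbf{w}\|^2$) to upgrade this from an expectation to a typical-realization statement.

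The main obstacle is justifying the regime $\|E\|_{\mathrm{op}}\le 1/2$, i.e.\ controlling the operator norm and not just the Frobenius norm. A Hankel/circulant block has operator norm equal to the largest Fourier magnitude of $\mathbf{w}$ divided by $d$. That is $\frac1d\max_f|\hat v_j(f)\hat v_i(f)|$, which is $O(\log d/d)$ with high probability. Summing over $m$ blocks in each block-row gives $\|E\|_{\mathrm{op}}\le m\log d/(t^2 d)\to0$ at fixed $m,t$. I would try this Fourier-diagonalization argument first. If the mixed circulant/Hankel structure makes it awkward, I would fall back to a Gershgorin-type bound on block rows.
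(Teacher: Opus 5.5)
Your proposal is correct and follows the paper's route. The paper writes the Gaussian total correlation as $-\tfrac12\log\det(I+M)$ with a zero-trace, block-whitened off-diagonal matrix $M$, bounds it to second order by $\|M\|_F^2/\bigl(4(1-\|M\|_{\mathrm{op}})^2\bigr)$, and controls it through $\|\Sigma_{ij}\|_F^2=\tfrac1d\|\mathbf{v}_j\otimes\mathbf{v}_i\|^2$ and the Fourier estimate $\|\Sigma_{ij}\|_{\mathrm{op}}=\mathcal{O}(\log d/d)$, exactly as you do. The one thing the paper makes explicit that you leave implicit is the source of the floor: the raw HRR covariance has $\Sigma_{ii}\approx (m/d)I$ and is actually singular, so the paper adds independent posterior noise $\tau^2 I_{md}$ to the whole covariance, which gives your condition on the diagonal blocks while leaving the off-diagonal blocks $\Sigma_{ij}$ unchanged.
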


\vspace{-2mm}
\begin{proof}
The chain rule identifies the KL divergence above with the total correlation
among the retrieved slots,
\[
D_{\mathrm{KL}}\!\bigl(q(\tilde{\mathbf{V}}_{1:m}\mid \mathbf{x})\,\big\|\,
\textstyle\prod_i q_i(\tilde{\mathbf{v}}_i\mid \mathbf{x})\bigr)
=
\sum_{i=2}^{m}
I(\tilde{\mathbf{v}}_i;\tilde{\mathbf{V}}_{<i}\mid \mathbf{x}).
\]
We bound this total correlation directly rather than bounding each conditional
mutual information separately. Under the Gaussian approximation to the joint
retrieval law, the KL between the joint distribution and the product of its
marginals has a closed-form log-determinant expression. Writing
$\bar{\Sigma}$ for the block-diagonal covariance obtained by removing the
off-diagonal covariance blocks, and $\Delta=\Sigma-\bar{\Sigma}$ for the
off-diagonal part, the trace term cancels because $\Delta$ has zero diagonal
blocks. After block-diagonal whitening, the KL reduces to a log-determinant involving
\[
\mat{M}=(\bar{\Sigma}+t^2 \mat{I})^{-1/2}\Delta(\bar{\Sigma}+t^2 \mat{I})^{-1/2}.
\]
The posterior variance floor gives $\|(\bar{\Sigma}+t^2 \mat{I})^{-1/2}\|_{\mathrm{op}}\leq t^{-1}$, while the HRR off-diagonal block estimates give $\|\Delta\|_F^2=\mathcal{O}(m^2/d)$. Therefore $\|\mat{M}\|_F^2=\mathcal{O}(m^2/(t^4d))$. A second-order expansion of $-\log\det(\mat{I}+\mat{M})$, together with the small-operator-norm condition on $\mat{M}$, then yields the stated bound. See Appendix~\ref{app:slot_independence_proof} for the full derivation.
\vspace{-2mm}
\end{proof}
Together, Propositions~\ref{prop:capacity} and~\ref{prop:slot-indep} give a partial answer to the question of \emph{why} the HRR structure favors disentangled representations. Two mechanisms act on the encoder. First, slots are statistically near-independent under unbinding (Proposition~\ref{prop:slot-indep}), so the encoder can optimize each slot largely independently of the others, rather than coping with cross-slot noise correlations. Second, attaining the capacity bound of Proposition~\ref{prop:capacity} requires non-redundant slot usage by placing distinct generative factors in distinct slots. Quantization then discretizes each slot's content and pushes the encoder toward a finite, factor-aligned set of codes.

\section{Experimental results and analysis}
\label{sec:experimental_results_analysis}

In the experiments, we will quantify the disentanglement performance of our proposed HRR model compared to previous models proposed for disentanglement, show that different slots in our representation correspond to different generative factors, and show the robustness to noise of our HRR representation. All together, these results demonstrate that HRR-inspired representations can achieve good disentanglement quality while offering a robustness to noise not seen in prior disentanglement works.

\subsection{Disentanglement performance} \label{sec:disentanglement_experiments}

We qualitatively measure our model's disentanglement performance using the InfoMEC metrics, proposed in a prior work~\cite{hsu2023disentanglement}, because it uses information-theoretic metrics, mutual information, to measure disentanglement quality as opposed to using feature importance from learned classifiers and regression models, as done in DCI, an older disentanglement metric~\cite{eastwood2018a}. However we still report the DCI scores to complement our InfoMEC scores. For consistency we use the same baselines that were proposed \cite{hsu2023disentanglement}. Specifically, our baselines include, the $\beta$-VAE \cite{higgins2017betavae}, the TC-VAE \cite{chenIsolatingSourcesDisentanglement2018}, the VQ-VAE \cite{vandenoordNeuralDiscreteRepresentation2017}, and QLAE \cite{hsu2023disentanglement}. Consistent with \cite{hsu2023disentanglement}, we use Shapes3D \cite{3dshapes18a}, MPI3D \cite{NEURIPS2019_d97d404b} (specifically the complex real world shape variant), and NVIDIA's Falcor3D and Isaac3D datasets \cite{NVlabsHighresdisentanglementdatasets2025a}. Consistent with prior works\cite{hsu2023disentanglement}, we sweep a single hyperparameter for each (model, dataset) pair and use the best performing hyperparameter over two seeds. For more details on the sweep, we refer to Table~\ref{tab:baselines_sweep_params} of Appendix~\ref{app:experimental-setup}. We note the best performing hyperparameter and re-run each model on each dataset with its best hyperparameter 5 times from which we construct a confidence interval. As is consistent with other disentanglement works, we train on the entire dataset without holding out a validation or test set~\cite{JMLR:v21:19-976}. The purpose of this is that, in disentanglement, we are mostly concerned with how the model can learn to disentangle the different latent factors, not combating overfitting. 

For each run, we sample a random subset of the dataset (without removing it from the training data) that we use to evaluate the disentanglement metrics. To ensure fairness, each model uses the same encoder and decoder architecture, and the same optimization parameters, consistent with the work of \cite{JMLR:v21:19-976}. For each dataset, each model is initialized with 1.5$\times$ latent units as there are sources. More details, such as model architecture and optimization hyperparameters can be  found in Table~\ref{tab:cnn_architecture} and Table~\ref{tab:optim_hparams} in Appendix~\ref{app:experimental-setup}. Details on the bolding criteria for the results can also be found in Appendix~\ref{app:experimental-setup}.

{
\setlength{\tabcolsep}{6pt}
\begin{table}[t]
    \caption{Disentanglement scores over $n=5$ seeds. InfoMEC cells show (InfoM InfoE InfoC); bold marks the best per column using the CI-overlap criterion. Aggregated is the mean across all datasets.}
    \label{tab:main_results}
    \centering
    \scriptsize
    \begin{tabular}{lccccc}
        \toprule
        Model & Aggregated & Shapes3D & Falcor3D & Isaac3D & MPI3D-C \\
        \cmidrule(lr){2-6}
        & \multicolumn{5}{c}{InfoMEC $:=$ (InfoM InfoE InfoC) $\uparrow$} \\
        \midrule

$\beta$-VAE & $(0.54\ 0.57\ 0.45)$ & $(0.59\ 0.81\ 0.47)$ & $(\mathbf{0.64}\ 0.68\ \mathbf{0.59})$ & $(0.54\ 0.56\ 0.46)$ & $(0.37\ 0.23\ 0.29)$ \\
$\beta$-TCVAE & $(0.56\ 0.63\ 0.51)$ & $(0.61\ 0.83\ 0.51)$ & $(\mathbf{0.59}\ 0.65\ \mathbf{0.55})$ & $(0.63\ 0.62\ \mathbf{0.57})$ & $(0.41\ 0.42\ 0.39)$ \\
VQ-VAE & $(0.57\ \mathbf{0.88}\ 0.47)$ & $(0.58\ \mathbf{1.00}\ 0.44)$ & $(\mathbf{0.50}\ \mathbf{0.86}\ 0.40)$ & $(0.70\ \mathbf{0.86}\ 0.52)$ & $(0.49\ \mathbf{0.82}\ \mathbf{0.53})$ \\
QLAE & $(0.65\ 0.84\ 0.49)$ & $(0.77\ \mathbf{1.00}\ 0.52)$ & $(\mathbf{0.61}\ 0.79\ \mathbf{0.46})$ & $(0.66\ 0.81\ 0.49)$ & $(\mathbf{0.56}\ 0.78\ 0.48)$ \\
HRR (ours) & $(\mathbf{0.68}\ 0.88\ \mathbf{0.54})$ & $(\mathbf{0.85}\ \mathbf{1.00}\ \mathbf{0.61})$ & $(\mathbf{0.63}\ 0.81\ \mathbf{0.45})$ & $(\mathbf{0.73}\ \mathbf{0.89}\ \mathbf{0.55})$ & $(\mathbf{0.51}\ \mathbf{0.83}\ \mathbf{0.53})$
 \\
        \midrule
        & \multicolumn{5}{c}{DCI $:=$ (D I C) $\uparrow$} \\
        \midrule

$\beta$-VAE & $(0.31\ 0.28\ 0.85)$ & $(0.45\ \mathbf{0.38}\ \mathbf{0.97})$ & $(0.23\ 0.22\ 0.77)$ & $(0.28\ 0.25\ 0.86)$ & $(0.29\ \mathbf{0.28}\ \mathbf{0.79})$ \\
$\beta$-TCVAE & $(0.31\ 0.29\ 0.85)$ & $(0.40\ \mathbf{0.33}\ 0.95)$ & $(0.44\ \mathbf{0.43}\ \mathbf{0.86})$ & $(0.29\ 0.26\ \mathbf{0.86})$ & $(0.12\ 0.12\ 0.73)$ \\
VQ-VAE & $(0.62\ 0.34\ 0.86)$ & $(0.71\ \mathbf{0.35}\ \mathbf{0.96})$ & $(\mathbf{0.63}\ 0.33\ 0.83)$ & $(\mathbf{0.64}\ \mathbf{0.36}\ \mathbf{0.91})$ & $(0.49\ \mathbf{0.34}\ 0.73)$ \\
QLAE & $(0.59\ 0.31\ 0.88)$ & $(0.76\ \mathbf{0.35}\ \mathbf{0.99})$ & $(0.50\ 0.28\ 0.84)$ & $(0.59\ 0.31\ \mathbf{0.92})$ & $(0.51\ \mathbf{0.29}\ \mathbf{0.78})$ \\
HRR (ours) & $(\mathbf{0.67}\ \mathbf{0.37}\ \mathbf{0.89})$ & $(\mathbf{0.88}\ \mathbf{0.41}\ \mathbf{1.00})$ & $(0.58\ \mathbf{0.37}\ \mathbf{0.85})$ & $(\mathbf{0.68}\ \mathbf{0.36}\ \mathbf{0.92})$ & $(\mathbf{0.55}\ \mathbf{0.34}\ \mathbf{0.77})$
 \\
        \bottomrule
    \end{tabular}
    \vspace{5pt}
\end{table}
}

As seen in Table~\ref{tab:main_results}, our HRR based disentanglement method outperforms most baselines across the InfoM metric, only surpassed by QLAE on the MPI3D-C dataset, with competitive performance on Falcor3D with $\beta$-VAE. Our model is also able to match or exceed baselines on InfoE and InfoC scores all datasets except on Falcor3D for the InfoE and InfoC metrics, where the VQ-VAE and $\beta$-TCVAE reach the highest scores, respectively. However, in aggregate, our method outperforms, or meets, other baselines across all metrics, providing a 4.6\% improvement in InfoM, and 5.9\% improvement in InfoC, over the second highest scores in the respective categories. The pattern across metrics is consistent with the analysis of
Section~\ref{sec:capacity}. HRR's strongest gains are on InfoM (modularity),
where the per-slot independence induced by unbinding
(Proposition~\ref{prop:slot-indep}) directly biases the encoder toward placing
distinct factors in distinct slots. The smaller gains on InfoE and InfoC,
particularly on Falcor3D where our proposed model is outperformed by VQ-VAE and $\beta$-TCVAE, indicate that
explicitness and compactness are governed less by the slot structure and
more by the choice of decoder and codebook size.

\subsection{Latent robustness to noise when decoding} \label{ssec:robustness_experiments}

Another quantity that we measure is the robustness to noise of the different latent representations. This provides insight into how much a latent representation could be corrupted by external noise while still being able to be accurately decoded. We posit that robust representations will play an important role in the upcoming era of physical AI, as we deploy machine learning algorithms into the real world. Research on representations has largely taken place in controlled settings, and has ignored the challenges associated with real world use. Noisy representations will be an important challenge to overcome, and should be learned in such a way that they can tolerate corruption. For instance, two embodied agents wishing to collaborate should be able to understand each other despite corruption in a message, especially in critical scenarios. Robust representations are also of interest in latent-based world models, since preserving the semantic content of internal representations is crucial for their performance. A more robust representation could allow for more error toleration when making predictions in latent space since the result may still be able to be decoded accurately. To this end, we measure the reconstruction peak-signal-to-noise-ratio (PSNR) of representations under zero mean Gaussian noise, at varying signal-to-noise (SNR) ratios. We apply noise to the latent representation from +20 dB SNR down to -20 dB SNR and measure the PSNR between the resulting reconstruction and the input. As seen in Fig.~\ref{fig:gaussian_noise_plot}, the proposed HRR model and VQ-VAE models both outperform the other baselines, with the former performing the best. This is because VQ-VAE's optimization allows for codebook entries to spread apart in
Euclidean space. A noise perturbation must therefore be large enough
to push a quantized vector closer to a different codebook entry before
the discrete code changes, leaving the recovered code unchanged for
moderate noise levels. A visualization of the reconstruction performance can be seen in Fig.~\ref{fig:gaussian_noise_grid_3dshapes}, in which we see that the HRR and VQ-VAE both are able to maintain their semantic content under most noise intensities. We include more results for other datasets in Appendix~\ref{app:additional_noise_results}.

\begin{figure}[t]
    \centering
    \includegraphics[width=0.9\linewidth]{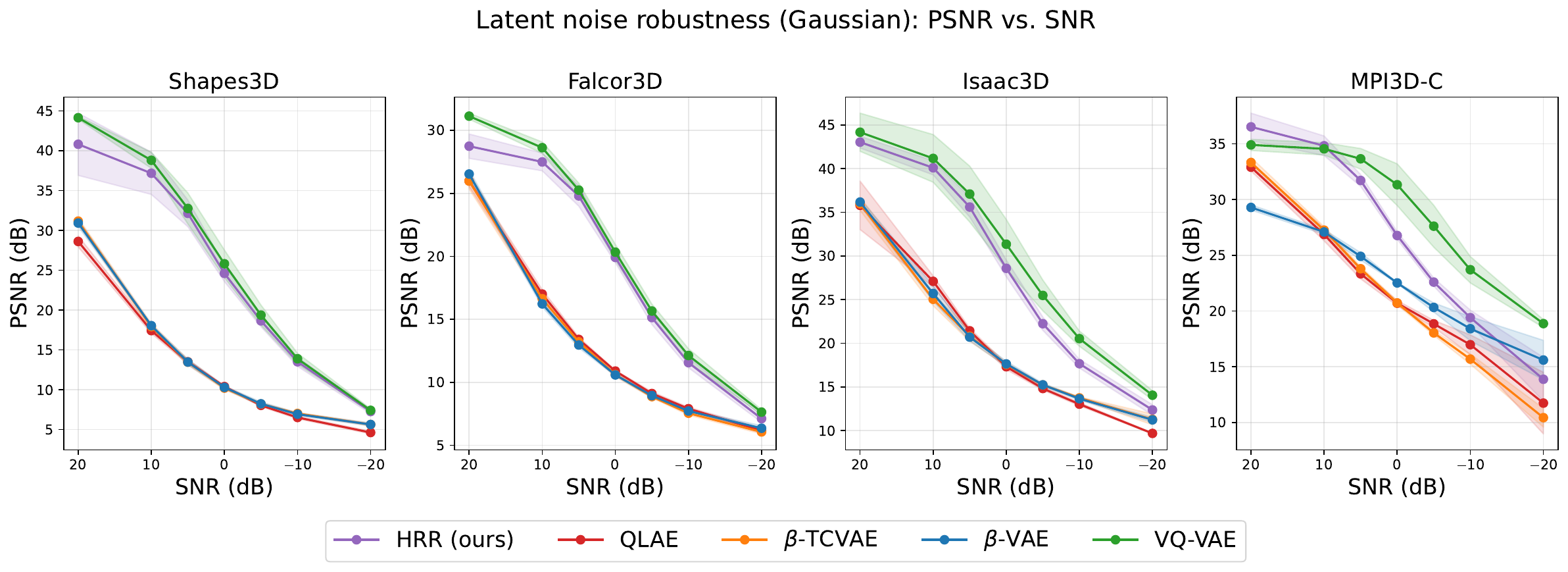}
    \caption{Average performance of each model, as measured by PSNR, under varying levels of Gaussian noise intensity. The gap between HRR and VQ-VAE is small across most datasets, and both models perform considerably better than the other baselines. Shaded regions represent a 95\% confidence interval.}
    \label{fig:gaussian_noise_plot}
\vspace{-3mm}\end{figure}

\newpage

\begin{figure}[t]
    \centering
    \includegraphics[width=0.8\linewidth]{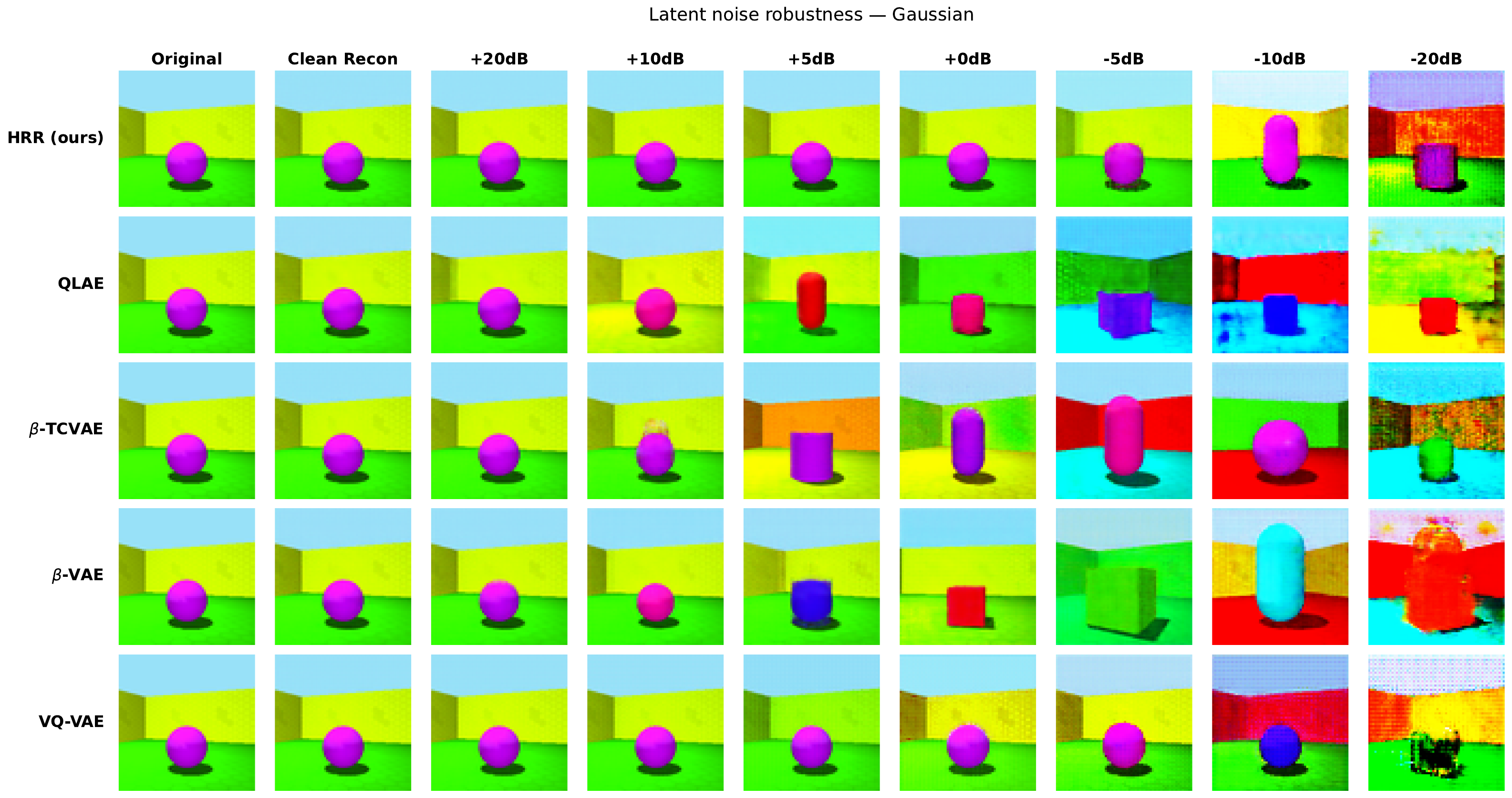}
    \caption{Each row is a visualization of how reconstruction quality degrades as noise intensity increases for each model. For each model, the seed with the highest InfoM score was used for evaluation. The HRR model and VQ-VAE maintain their semantic content the best. Signal power is estimated over the entire validation set.}
    \label{fig:gaussian_noise_grid_3dshapes}
\end{figure}
\vspace*{-4em}
\subsection{Latent component swapping}\label{sec:swaps}
Because of the vector quantization process that assigns value vectors to each slot in our latent variables, traditional latent traversals (where scalar dimensions are varied through an interval) cannot be used as a fair qualitative method of assessment for our method. Instead, we opt to show the results of what happens when you swap latent units between two representations from the dataset, i.e, inject a latent factor from one representation into the other. For truly disentangled representations, it would be reasonable to expect a degree of modularity (in the qualitative sense, not the quantitative one used as a metric) between the representations, such that the latent units are compositional. 

Fig.~\ref{fig:component_swap_3dshapes} compares latent component swaps
for HRR and other baselines. We make the following observations here. First, HRR's swaps consistently modify a single
generative factor at a time while leaving others intact. Where QLAE's
swaps occasionally produce visible artifacts in unrelated factors, HRR
maintains the unswapped attributes more cleanly. Second, HRR preserves
global scene semantics across nearly all swap positions, while QLAE
occasionally alters the orientation of objects during a swap. The concentration of changes within a single generative factor follows from the approximate slot independence established in Proposition~\ref{prop:slot-indep}. If the per-slot
retrievals were tightly coupled, swapping one slot would propagate
changes across others, but HRR's slot-by-slot swapping remain contained. From Fig.~\ref{fig:component_swap_3dshapes}, we can make two key observations.  First, these results provide qualitative support for Proposition~\ref{prop:slot-indep}. If the per-slot retrievals were tightly coupled, swapping one component would propagate changes across the others, but HRR's swaps largely preserve unrelated factors. Second, the qualitative gap with QLAE is informative because both methods quantize their latents but only HRR imposes a symbolic binding structure, suggesting that the binding-and-bundling architecture may be important to enable single-factor edits.

\begin{figure}[t]
    \centering
    \includegraphics[width=0.8\linewidth]{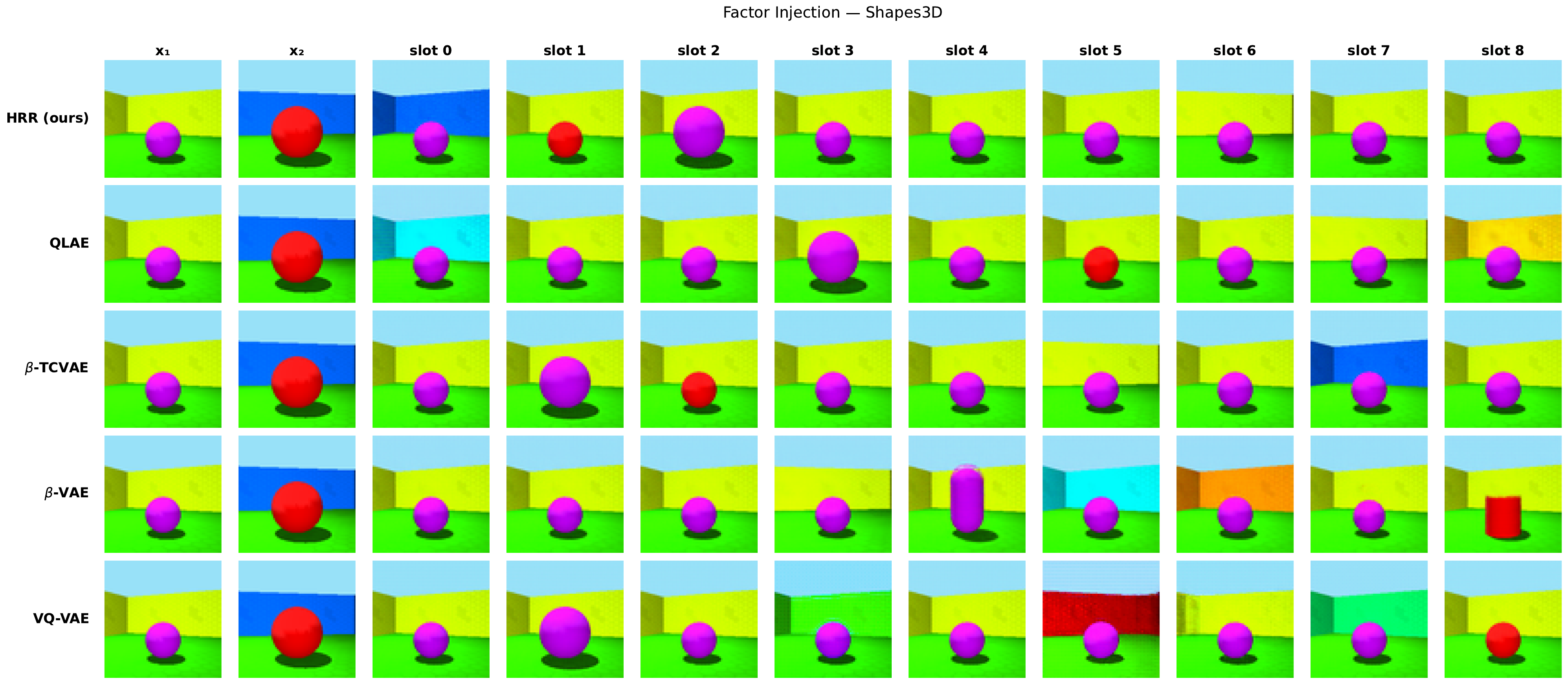}
    \caption{Latent component swaps performance for highest performing model relative to InfoM score. The slot 0 column represents taking the first latent unit from $x_2$, using it for the first latent unit for $x_1$, and decoding the result. Slot 1 does the same, but for the second latent unit, and so on. Each slot is replaced with its original value after visualization, so we expect each decoded result to manipulate either one, or no, aspects of the image.}
    \label{fig:component_swap_3dshapes}
\vspace{-2mm}
\end{figure}

\vspace{-2mm}
\subsection{Latent interpolation}

A natural extension of component swapping is to swap slots cumulatively rather
than individually. Starting from a source image's latent $\mathbf{z}_1$, we
progressively replace slots with the corresponding slots from a target image's
latent $\mathbf{z}_2$. At step $i$, slots $1$ through $i$ have been swapped from
$\mathbf{z}_2$. For a disentangled representation, the decoded sequence should
walk from the source image to the target one factor at a time, with each step
changing only the most recently swapped factor.
Figure~\ref{fig:interpolation_3dshapes} visualizes this for the HRR-based model and all the baselines, evaluated on the same image. Cumulative swapping is a more stringent test of slot independence than the single-component swap experiment of Section~\ref{sec:swaps}.  In this scenario, errors from earlier swaps could distort later ones if slots were entangled, but each step in the HRR-based model's sequence changes one factor cleanly, providing qualitative evidence consistent with Proposition~\ref{prop:slot-indep}. We note that this is the natural notion of interpolation for a quantized symbolic latent, since smooth blending of codebook entries is not meaningful and a discrete walk through factor configurations is what compositional structure should support.

\begin{figure}[t]
    \centering
    \includegraphics[width=0.9\linewidth]{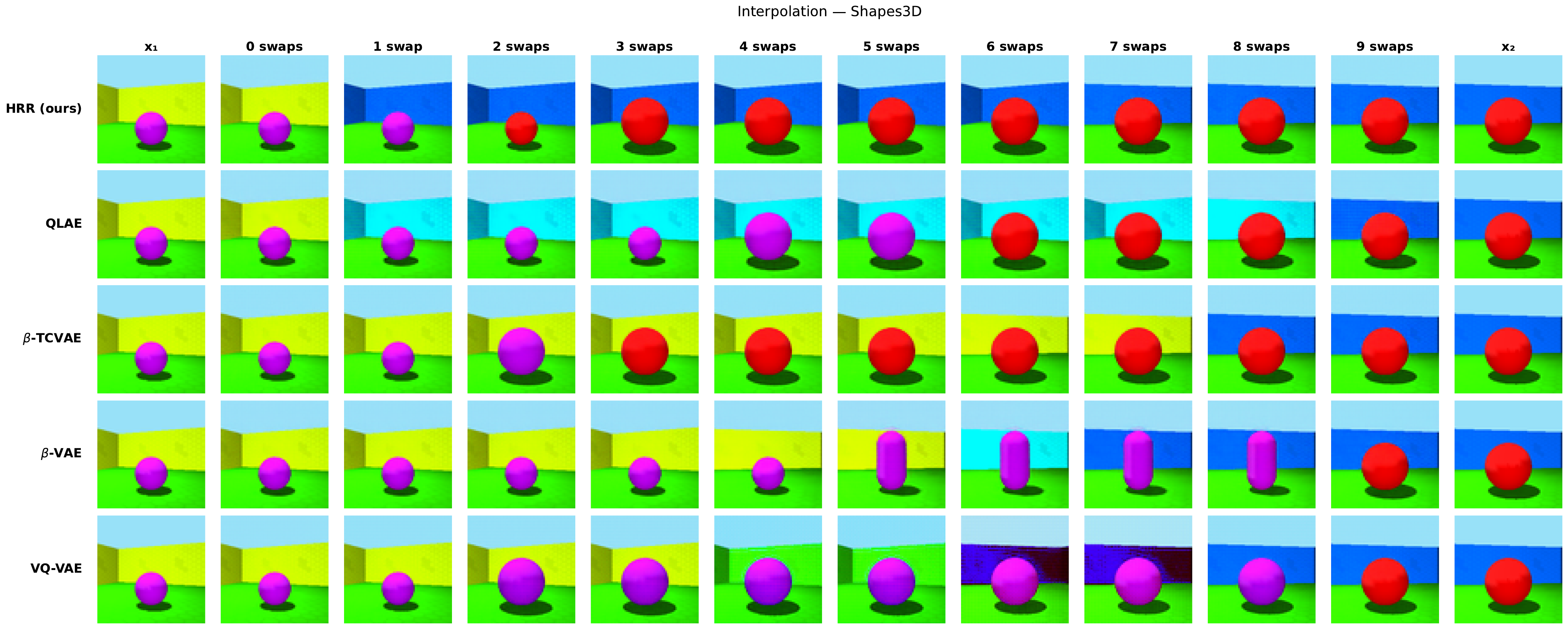}
    \caption{Latent interpolation swaps components progressively until the representation is fully transformed into that of the second image. We see that in the interpolation, the HRR-based model does not introduce any extraneous generative factors that were not present in the original two images. For instance, no decoded results for the HRR introduce a light blue wall, as the QLAE and $\beta$-VAE do.}
    \label{fig:interpolation_3dshapes}
\vspace{-1.0em}\end{figure}

\vspace{-4mm}\section{Discussion}
\label{sec:discussion}\vspace{-2mm}

In this paper, we have proposed using HRRs for disentanglement and shown that the resulting representations match or exceed strong baselines on the InfoMEC and DCI metrics across four standard datasets while being more robust to additive latent noise than every disentanglement-focused baseline. To the best of our knowledge, this is the first work to use HRRs for disentanglement without supervision, and the first to train a neural network to produce HRR-structured latents directly from data.
Sections~\ref{sec:capacity} and~\ref{sec:slot-indep} provide a partial mechanism for why the HRR structure favors disentanglement.  Approximate slot independence under the unbinding channel (Proposition~\ref{prop:slot-indep}) lets the encoder optimize each slot near-independently, while the per-slot capacity bound (Proposition~\ref{prop:capacity}) penalizes redundant slot usage. Together with the quantization inductive bias identified in~\cite{hsu2023disentanglement}, these two pressures favor placing distinct generative factors in distinct slots. These results are a good foundation for explaining the disentanglement that emerges from the proposed HRR-based model. Tighter constants, a formal connection to identifiability, and an empirical study of the $d/m$ ratio are natural next steps that would further validate these results. Beyond disentanglement, our results indicate that VSA-inspired representations are worth investigating more broadly in the deep learning field. The robustness to latent noise observed here may be particularly relevant for emerging use cases, such as physical-AI. In physical settings, representations must tolerate the noise that is fundamental in sensors and wireless channels. Robustness will be an important factor in ensuring representations remain useful despite corruption. Future work could explore whether the symbolic structure of HRRs translates into compositional generalization on factor combinations unseen during training, which prior work has noted as a weakness of standard disentanglement methods~\cite{montero2021the}.

\newpage

\bibliography{references}

@inproceedings{10.5555/3692070.3692839,
  title = {Tripod: Three Complementary Inductive Biases for Disentangled Representation Learning},
  booktitle = {Proceedings of the 41st International Conference on Machine Learning},
  author = {Hsu, Kyle and Hamid, Jubayer Ibn and Burns, Kaylee and Finn, Chelsea and Wu, Jiajun},
  year = 2024,
  series = {{{ICML}}'24},
  publisher = {JMLR.org},
  address = {Vienna, Austria},
  articleno = {769}
}

@misc{3dshapes18a,
  title = {{{3D}} Shapes Dataset},
  author = {Burgess, Chris and Kim, Hyunjik},
  year = 2018
}

@inproceedings{alamRecastingSelfAttentionHolographic2023a,
  title = {Recasting {{Self-Attention}} with {{Holographic Reduced Representations}}},
  booktitle = {Proceedings of the 40th {{International Conference}} on {{Machine Learning}}},
  author = {Alam, Mohammad Mahmudul and Raff, Edward and Biderman, Stella and Oates, Tim and Holt, James},
  year = 2023,
  month = jul,
  pages = {490--507},
  publisher = {PMLR},
  issn = {2640-3498},
  urldate = {2026-05-07}
}

@inproceedings{assranSelfSupervisedLearningImages2023a,
  title = {Self-{{Supervised Learning}} from {{Images}} with a {{Joint-Embedding Predictive Architecture}}},
  booktitle = {2023 {{IEEE}}/{{CVF Conference}} on {{Computer Vision}} and {{Pattern Recognition}} ({{CVPR}})},
  author = {Assran, Mahmoud and Duval, Quentin and Misra, Ishan and Bojanowski, Piotr and Vincent, Pascal and Rabbat, Michael and LeCun, Yann and Ballas, Nicolas},
  year = 2023,
  month = jun,
  pages = {15619--15629},
  issn = {2575-7075},
  doi = {10.1109/CVPR52729.2023.01499},
  urldate = {2026-05-07}
}

@misc{bengio2013estimatingpropagatinggradientsstochastic,
  title = {Estimating or Propagating Gradients through Stochastic Neurons for Conditional Computation},
  author = {Bengio, Yoshua and L{\'e}onard, Nicholas and Courville, Aaron},
  year = 2013,
  eprint = {1308.3432},
  primaryclass = {cs.LG},
  archiveprefix = {arXiv}
}

@article{bengioRepresentationLearningReview2013,
  title = {Representation {{Learning}}: {{A Review}} and {{New Perspectives}}},
  shorttitle = {Representation {{Learning}}},
  author = {Bengio, Y. and Courville, A. and Vincent, P.},
  year = 2013,
  month = aug,
  journal = {IEEE Transactions on Pattern Analysis and Machine Intelligence},
  volume = {35},
  number = {8},
  pages = {1798--1828},
  issn = {0162-8828, 2160-9292},
  doi = {10.1109/TPAMI.2013.50},
  urldate = {2026-05-07},
  copyright = {https://ieeexplore.ieee.org/Xplorehelp/downloads/license-information/IEEE.html}
}

@inproceedings{caronEmergingPropertiesSelfSupervised2021b,
  title = {Emerging {{Properties}} in {{Self-Supervised Vision Transformers}}},
  booktitle = {2021 {{IEEE}}/{{CVF International Conference}} on {{Computer Vision}} ({{ICCV}})},
  author = {Caron, Mathilde and Touvron, Hugo and Misra, Ishan and Jegou, Herv{\'e} and Mairal, Julien and Bojanowski, Piotr and Joulin, Armand},
  year = 2021,
  month = oct,
  pages = {9630--9640},
  issn = {2380-7504},
  doi = {10.1109/ICCV48922.2021.00951},
  urldate = {2026-05-07}
}

@inproceedings{chenIsolatingSourcesDisentanglement2018,
  title = {Isolating {{Sources}} of {{Disentanglement}} in {{Variational Autoencoders}}},
  booktitle = {Advances in {{Neural Information Processing Systems}}},
  author = {Chen, Ricky T. Q. and Li, Xuechen and Grosse, Roger B and Duvenaud, David K},
  year = 2018,
  volume = {31},
  publisher = {Curran Associates, Inc.},
  urldate = {2026-05-07}
}

@inproceedings{chenSimpleFrameworkContrastive2020a,
  title = {A {{Simple Framework}} for {{Contrastive Learning}} of {{Visual Representations}}},
  booktitle = {Proceedings of the 37th {{International Conference}} on {{Machine Learning}}},
  author = {Chen, Ting and Kornblith, Simon and Norouzi, Mohammad and Hinton, Geoffrey},
  year = 2020,
  month = nov,
  pages = {1597--1607},
  publisher = {PMLR},
  issn = {2640-3498},
  urldate = {2026-05-07}
}

@inproceedings{DBLP:journals/corr/RadfordMC15,
  title = {Unsupervised Representation Learning with Deep Convolutional Generative Adversarial Networks},
  booktitle = {4th International Conference on Learning Representations, {{ICLR}} 2016, San Juan, Puerto Rico, May 2-4, 2016, Conference Track Proceedings},
  author = {Radford, Alec and Metz, Luke and Chintala, Soumith},
  editor = {Bengio, Yoshua and LeCun, Yann},
  year = 2016,
  bibsource = {dblp computer science bibliography, https://dblp.org}
}

@inproceedings{eastwood2018a,
  title = {A Framework for the Quantitative Evaluation of Disentangled Representations},
  booktitle = {International Conference on Learning Representations},
  author = {Eastwood, Cian and Williams, Christopher K. I.},
  year = 2018
}

@inproceedings{ganesanLearningHolographicReduced2021a,
  title = {Learning with {{Holographic Reduced Representations}}},
  booktitle = {Advances in {{Neural Information Processing Systems}}},
  author = {Ganesan, Ashwinkumar and Gao, Hang and Gandhi, Sunil and Raff, Edward and Oates, Tim and Holt, James and McLean, Mark},
  year = 2021,
  volume = {34},
  pages = {25606--25620},
  publisher = {Curran Associates, Inc.},
  urldate = {2026-05-07}
}

@inproceedings{goodfellowGenerativeAdversarialNets2014,
  title = {Generative {{Adversarial Nets}}},
  booktitle = {Advances in {{Neural Information Processing Systems}}},
  author = {Goodfellow, Ian J. and {Pouget-Abadie}, Jean and Mirza, Mehdi and Xu, Bing and {Warde-Farley}, David and Ozair, Sherjil and Courville, Aaron and Bengio, Yoshua},
  year = 2014,
  volume = {27},
  publisher = {Curran Associates, Inc.},
  urldate = {2026-05-07}
}

@inproceedings{grillBootstrapYourOwn2020a,
  title = {Bootstrap {{Your Own Latent}} - {{A New Approach}} to {{Self-Supervised Learning}}},
  booktitle = {Advances in {{Neural Information Processing Systems}}},
  author = {Grill, Jean-Bastien and Strub, Florian and Altch{\'e}, Florent and Tallec, Corentin and Richemond, Pierre and Buchatskaya, Elena and Doersch, Carl and Avila Pires, Bernardo and Guo, Zhaohan and Gheshlaghi Azar, Mohammad and Piot, Bilal and {kavukcuoglu}, koray and Munos, Remi and Valko, Michal},
  year = 2020,
  volume = {33},
  pages = {21271--21284},
  publisher = {Curran Associates, Inc.},
  urldate = {2026-05-07}
}

@inproceedings{heMaskedAutoencodersAre2022,
  title = {Masked {{Autoencoders Are Scalable Vision Learners}}},
  booktitle = {2022 {{IEEE}}/{{CVF Conference}} on {{Computer Vision}} and {{Pattern Recognition}} ({{CVPR}})},
  author = {He, Kaiming and Chen, Xinlei and Xie, Saining and Li, Yanghao and Doll{\'a}r, Piotr and Girshick, Ross},
  year = 2022,
  month = jun,
  pages = {15979--15988},
  issn = {2575-7075},
  doi = {10.1109/CVPR52688.2022.01553},
  urldate = {2026-05-07}
}

@article{herscheNeurovectorsymbolicArchitectureSolving2023a,
  title = {A Neuro-Vector-Symbolic Architecture for Solving {{Raven}}'s Progressive Matrices},
  author = {Hersche, Michael and Zeqiri, Mustafa and Benini, Luca and Sebastian, Abu and Rahimi, Abbas},
  year = 2023,
  month = mar,
  journal = {Nature Machine Intelligence},
  volume = {5},
  number = {4},
  pages = {363--375},
  issn = {2522-5839},
  doi = {10.1038/s42256-023-00630-8},
  urldate = {2026-05-07}
}

@inproceedings{higgins2017betavae,
  title = {Beta-{{VAE}}: {{Learning}} Basic Visual Concepts with a Constrained Variational Framework},
  booktitle = {International Conference on Learning Representations},
  author = {Higgins, Irina and Matthey, Loic and Pal, Arka and Burgess, Christopher and Glorot, Xavier and Botvinick, Matthew and Mohamed, Shakir and Lerchner, Alexander},
  year = 2017
}

@inproceedings{hsu2023disentanglement,
  title = {Disentanglement via Latent Quantization},
  booktitle = {Thirty-Seventh Conference on Neural Information Processing Systems},
  author = {Hsu, Kyle and Dorrell, Will and Whittington, James C. R. and Wu, Jiajun and Finn, Chelsea},
  year = 2023
}

@article{JMLR:v21:19-976,
  title = {A Sober Look at the Unsupervised Learning of Disentangled Representations and Their Evaluation},
  author = {Locatello, Francesco and Bauer, Stefan and Lucic, Mario and Raetsch, Gunnar and Gelly, Sylvain and Sch{\"o}lkopf, Bernhard and Bachem, Olivier},
  year = 2020,
  journal = {Journal of Machine Learning Research},
  volume = {21},
  number = {209},
  pages = {1--62}
}

@misc{kingmaAutoEncodingVariationalBayes2022a,
  title = {Auto-{{Encoding Variational Bayes}}},
  author = {Kingma, Diederik P. and Welling, Max},
  year = 2022,
  month = dec,
  number = {arXiv:1312.6114},
  eprint = {1312.6114},
  primaryclass = {stat},
  publisher = {arXiv},
  doi = {10.48550/arXiv.1312.6114},
  urldate = {2026-05-07},
  archiveprefix = {arXiv}
}

@inproceedings{chenInfoGANInterpretableRepresentation2016a,
  title = {{{InfoGAN}}: {{Interpretable Representation Learning}} by {{Information Maximizing Generative Adversarial Nets}}},
  shorttitle = {{{InfoGAN}}},
  booktitle = {Advances in {{Neural Information Processing Systems}}},
  author = {Chen, Xi and Duan, Yan and Houthooft, Rein and Schulman, John and Sutskever, Ilya and Abbeel, Pieter},
  year = 2016,
  volume = {29},
  publisher = {Curran Associates, Inc.},
  urldate = {2026-05-07}
}

@inproceedings{10.1145/3195970.3196060,
  title = {Hierarchical Hyperdimensional Computing for Energy Efficient Classification},
  booktitle = {Proceedings of the 55th Annual Design Automation Conference},
  author = {Imani, Mohsen and Huang, Chenyu and Kong, Deqian and Rosing, Tajana},
  year = 2018,
  series = {Dac '18},
  publisher = {Association for Computing Machinery},
  address = {New York, NY, USA},
  doi = {10.1145/3195970.3196060},
  articleno = {108},
  isbn = {978-1-4503-5700-5}
}

@article{kleykoSurveyHyperdimensionalComputing2022,
  title = {A {{Survey}} on {{Hyperdimensional Computing}} Aka {{Vector Symbolic Architectures}}, {{Part I}}: {{Models}} and {{Data Transformations}}},
  shorttitle = {A {{Survey}} on {{Hyperdimensional Computing}} Aka {{Vector Symbolic Architectures}}, {{Part I}}},
  author = {Kleyko, Denis and Rachkovskij, Dmitri A. and Osipov, Evgeny and Rahimi, Abbas},
  year = 2022,
  month = dec,
  journal = {ACM Comput. Surv.},
  volume = {55},
  number = {6},
  pages = {130:1--130:40},
  issn = {0360-0300},
  doi = {10.1145/3538531},
  urldate = {2026-05-07}
}

@misc{korchemnyi2024symbolicdisentangledrepresentationsimages,
  title = {Symbolic Disentangled Representations for Images},
  author = {Korchemnyi, Alexandr and Kovalev, Alexey K. and Panov, Aleksandr I.},
  year = 2024,
  eprint = {2412.19847},
  primaryclass = {cs.CV},
  archiveprefix = {arXiv}
}

@inproceedings{loshchilov2018decoupled,
  title = {Decoupled Weight Decay Regularization},
  booktitle = {International Conference on Learning Representations},
  author = {Loshchilov, Ilya and Hutter, Frank},
  year = 2019
}

@inproceedings{montero2021the,
  title = {The Role of Disentanglement in Generalisation},
  booktitle = {International Conference on Learning Representations},
  author = {Montero, Milton Llera and Ludwig, Casimir JH and Costa, Rui Ponte and Malhotra, Gaurav and Bowers, Jeffrey},
  year = 2021
}

@misc{NVlabsHighresdisentanglementdatasets2025a,
  title = {{{NVlabs}}/{{High-res-disentanglement-datasets}}},
  year = 2025,
  month = nov,
  urldate = {2026-05-07},
  copyright = {CC-BY-4.0},
  howpublished = {NVIDIA Research Projects}
}

@inproceedings{paszkePyTorchImperativeStyle2019a,
  title = {{{PyTorch}}: {{An Imperative Style}}, {{High-Performance Deep Learning Library}}},
  shorttitle = {{{PyTorch}}},
  booktitle = {Advances in {{Neural Information Processing Systems}}},
  author = {Paszke, Adam and Gross, Sam and Massa, Francisco and Lerer, Adam and Bradbury, James and Chanan, Gregory and Killeen, Trevor and Lin, Zeming and Gimelshein, Natalia and Antiga, Luca and Desmaison, Alban and Kopf, Andreas and Yang, Edward and DeVito, Zachary and Raison, Martin and Tejani, Alykhan and Chilamkurthy, Sasank and Steiner, Benoit and Fang, Lu and Bai, Junjie and Chintala, Soumith},
  year = 2019,
  volume = {32},
  publisher = {Curran Associates, Inc.},
  urldate = {2026-05-07}
}

@article{plateHolographicReducedRepresentations1995a,
  title = {Holographic Reduced Representations},
  author = {Plate, T.A.},
  year = 1995,
  month = may,
  journal = {IEEE Transactions on Neural Networks},
  volume = {6},
  number = {3},
  pages = {623--641},
  issn = {1941-0093},
  doi = {10.1109/72.377968},
  urldate = {2026-05-07}
}

@inproceedings{pmlr-v80-kim18b,
  title = {Disentangling by Factorising},
  booktitle = {Proceedings of the 35th International Conference on Machine Learning},
  author = {Kim, Hyunjik and Mnih, Andriy},
  editor = {Dy, Jennifer and Krause, Andreas},
  year = 2018,
  month = jul,
  series = {Proceedings of Machine Learning Research},
  volume = {80},
  pages = {2649--2658},
  publisher = {PMLR}
}

@inproceedings{radfordLearningTransferableVisual2021a,
  title = {Learning {{Transferable Visual Models From Natural Language Supervision}}},
  booktitle = {Proceedings of the 38th {{International Conference}} on {{Machine Learning}}},
  author = {Radford, Alec and Kim, Jong Wook and Hallacy, Chris and Ramesh, Aditya and Goh, Gabriel and Agarwal, Sandhini and Sastry, Girish and Askell, Amanda and Mishkin, Pamela and Clark, Jack and Krueger, Gretchen and Sutskever, Ilya},
  year = 2021,
  month = jul,
  pages = {8748--8763},
  publisher = {PMLR},
  issn = {2640-3498},
  urldate = {2026-05-07}
}

@inproceedings{ridgewayLearningDeepDisentangled2018a,
  title = {Learning {{Deep Disentangled Embeddings With}} the {{F-Statistic Loss}}},
  booktitle = {Advances in {{Neural Information Processing Systems}}},
  author = {Ridgeway, Karl and Mozer, Michael C},
  year = 2018,
  volume = {31},
  publisher = {Curran Associates, Inc.},
  urldate = {2026-05-07}
}

@inproceedings{singh2023neural,
  title = {Neural Systematic Binder},
  booktitle = {The Eleventh International Conference on Learning Representations},
  author = {Singh, Gautam and Kim, Yeongbin and Ahn, Sungjin},
  year = 2023
}

@article{smetsEncodingFrameworkBinarized2024,
  title = {An Encoding Framework for Binarized Images Using Hyperdimensional Computing},
  author = {Smets, Laura and Van Leekwijck, Werner and Tsang, Ing Jyh and Latr{\'e}, Steven},
  year = 2024,
  month = jun,
  journal = {Frontiers in Big Data},
  volume = {7},
  publisher = {Frontiers},
  issn = {2624-909X},
  doi = {10.3389/fdata.2024.1371518},
  urldate = {2026-05-07}
}

@inproceedings{vandenoordNeuralDiscreteRepresentation2017,
  title = {Neural {{Discrete Representation Learning}}},
  booktitle = {Advances in {{Neural Information Processing Systems}}},
  author = {{van den Oord}, Aaron and Vinyals, Oriol and {kavukcuoglu}, koray},
  year = 2017,
  volume = {30},
  publisher = {Curran Associates, Inc.},
  urldate = {2026-05-07}
}

@article{rennerNeuromorphicVisualScene2024a,
    title = {Neuromorphic visual scene understanding with resonator networks},
    volume = {6},
    copyright = {2024 The Author(s), under exclusive licence to Springer Nature Limited},
    issn = {2522-5839},
    url = {https://www.nature.com/articles/s42256-024-00848-0},
    doi = {10.1038/s42256-024-00848-0},
    language = {en},
    number = {6},
    urldate = {2026-04-29},
    journal = {Nature Machine Intelligence},
    publisher = {Nature Publishing Group},
    author = {Renner, Alpha and Supic, Lazar and Danielescu, Andreea and Indiveri, Giacomo and Olshausen, Bruno A. and Sandamirskaya, Yulia and Sommer, Friedrich T. and Frady, E. Paxon},
    month = jun,
    year = {2024},
    pages = {641--652},
}

@misc{mikolovEfficientEstimationWord2013a,
    title = {Efficient {Estimation} of {Word} {Representations} in {Vector} {Space}},
    url = {http://arxiv.org/abs/1301.3781},
    doi = {10.48550/arXiv.1301.3781},
    urldate = {2026-04-13},
    publisher = {arXiv},
    author = {Mikolov, Tomas and Chen, Kai and Corrado, Greg and Dean, Jeffrey},
    month = sep,
    year = {2013},
    note = {arXiv:1301.3781 [cs]},
}

@inproceedings{alam2023towards,
    title = {Towards generalization in subitizing with neuro-symbolic loss using holographic reduced representations},
    url = {https://openreview.net/forum?id=AOAP8sLYdt},
    booktitle = {Neuro-symbolic learning and reasoning in the era of large language models},
    author = {Alam, Mohammad Mahmudul and Raff, Edward and Oates, Tim},
    year = {2023},
}

@inproceedings{NEURIPS2019_d97d404b,
    title = {On the transfer of inductive bias from simulation to the real world: a new disentanglement dataset},
    volume = {32},
    url = {https://proceedings.neurips.cc/paper/2019/file/d97d404b6119214e4a7018391195240a-Paper.pdf},
    booktitle = {Advances in neural information processing systems},
    publisher = {Curran Associates, Inc.},
    author = {Gondal, Muhammad Waleed and Wuthrich, Manuel and Miladinovic, Djordje and Locatello, Francesco and Breidt, Martin and Volchkov, Valentin and Akpo, Joel and Bachem, Olivier and Schölkopf, Bernhard and Bauer, Stefan},
    editor = {Wallach, H. and Larochelle, H. and Beygelzimer, A. and dAlché-Buc, F. and Fox, E. and Garnett, R.},
    year = {2019},
}
\bibliographystyle{unsrtnat}

\newpage

\appendix

\section{Additional details on model design} 

The denoising network described in Section~\ref{ssec:model_arch} is implemented as a single hidden layer feed forward neural network, with a hidden layer size that is $2\times$ the size of the input and that uses leaky relu activation. The codebook design was inspired from a prior work in slot attention~\cite{singh2023neural}. Specifically, we maintain $k$ "seed" vectors of dimension $d$, same as $\mat{z}$, and generate the codebook by processing each seed vector through a single hidden layer feed forward neural network with a hidden layer size that is the same size as the input and also uses leaky relu activation. We take the output of this neural network to be the approximate-HRR codebook vectors which we regularize.

A step that we took to improve training stability was the inclusion of multiplication constant at the output of the encoder and input to the decoder. Given the structural inductive biases imposed on the encoder, directly producing a zero-mean latent with $m/d$ per-component variance would push the model towards having small weights, potentially leading to training instability and vanishing gradients. Thus, we multiply the output of the encoder by $\sqrt{m/d}$ to ensure a higher variance output by the encoder. Similarly, since the reconstructed latent will also have approximately $m/d$ variance, we multiply the input to the decoder by $\sqrt{d/m}$ to transform it to have approximately unit variance for the decoder's input. We treat the output of the encoder the same way as the output of the encoder, and apply a $\sqrt{1/d}$ scaling to the produced codebook vectors. Although we observe in Figure~\ref{fig:latent_codebook_stats} of Appendix~\ref{app:hrr_initialization_importance} that the empirical variance during training is not exactly as assumed, we found that including the normalization constant improved training stability, but other choices may also have the same effect.

\section{Algorithm details}

This section covers more details on our training algorithm that we presented in Section~\ref{sec:hrrs_for_disentanglement}. Algorithm~\ref{alg:hrrvq} describes the latent reconstruction process that occurs at the bottleneck of the autoencoder. Algorithm~\ref{alg:hrr-train} references Algorithm~\ref{alg:hrrvq} in the broader training loop. 

\begin{algorithm}
\caption{HRR-VQ: Slot Retrieval, Denoising, and Quantization}
\label{alg:hrrvq}
\begin{algorithmic}[1]
\Require Normalized latent $\mat{z} \in \mathbb{R}^d$,
         frozen symbol vectors $\mathcal{S} = \{\mat{s}_1, \ldots, \mat{s}_m\}$,
         codebook $\mat{C} \in \mathbb{R}^{k \times d}$ with rows $\mat{c}_1, \ldots, \mat{c}_k$,
         denoising network $h : \mathbb{R}^d \to \mathbb{R}^d$
\Ensure Quantized latent $\hat{\mat{z}}$,
        quantized value vectors $\{\hat{\mat{v}}_i\}$,
        denoised retrievals $\{\bar{\mat{v}}_i\}$,
        assignment indices $\{j_i\}$
\For{$i = 1, \ldots, m$}
    \State $\tilde{\mat{v}}_i \gets \mat{s}_i^{-1} \bind \mat{z}$
    \Comment{HRR unbind: retrieve noisy value vector for slot $i$}
\EndFor
\For{$i = 1, \ldots, m$}
    \State $\bar{\mat{v}}_i \gets h(\tilde{\mat{v}}_i)$
    \Comment{Denoising network}
    \State $j_i \gets \arg\min_{j} \|\bar{\mat{v}}_i - \mat{c}_j\|^2$
    \Comment{Nearest-codebook-vector lookup}
    \State $\hat{\mat{v}}_i \gets \mat{c}_{j_i}$
    \Comment{Quantized value vector}
    \State $\mat{v}^{\mathrm{st}}_i \gets \bar{\mat{v}}_i + \sg{\hat{\mat{v}}_i - \bar{\mat{v}}_i}$
    \Comment{Straight-through estimator}
\EndFor
\State $\hat{\mat{z}} \gets \sum_{i=1}^{m} \mat{s}_i \bind \mat{v}^{\mathrm{st}}_i$
\Comment{Rebind and bundle all slots}
\State \Return $\hat{\mat{z}},\;\{\hat{\mat{v}}_i\},\;\{\bar{\mat{v}}_i\},\;\{j_i\}$
\end{algorithmic}
\end{algorithm}

\begin{algorithm}
\caption{Pseudocode for optimizing the HRR Autoencoder}
\label{alg:hrr-train}
\begin{algorithmic}[1]
\Require Dataset $\mathcal{D}$, batch size $b$,
         AdamW hyperparameters $(\alpha, \beta_1, \beta_2, \mathrm{wd})$,
         loss weights $\boldsymbol{\lambda} \coloneqq
           (\lambda_r,\,\lambda_{\mathrm{vq}},\,\beta,\,\lambda_z,\,\lambda_c)$,
         number of symbols $m$, codebook size $k$, latent dimension $d$
\State \textbf{initialize} CNN encoder $f:\mathcal{X} \to \mathbb{R}^d$,
       CNN decoder $g:\mathbb{R}^d \to \mathcal{X}$,
       denoising network $h:\mathbb{R}^d \to \mathbb{R}^d$
\State \textbf{initialize} seed matrix $\mat{S} \in \mathbb{R}^{k \times d}$,
       codebook generator $\varphi:\mathbb{R}^d \to \mathbb{R}^d$
\State \textbf{sample and freeze}
       $\mathcal{S} = \{\mat{s}_i\}_{i=1}^{m},\quad \mat{s}_i \overset{\mathrm{i.i.d.}}{\sim} \mathcal{N}(\mat{0},\,       
  \tfrac{1}{d}\mat{I}_d)$
\Comment{Random HRR symbol vectors; never updated}
\While{$(f, g, h, \mat{S}, \varphi)$ has not converged}
    \For{$n = 1, \ldots, b$}
        \State $\mat{x} \sim \mathcal{D}$
        \State $\mat{z} \gets f(\mat{x})\cdot\sqrt{m/d}$
        \Comment{Encode and normalize latent variance}
        \State $\mat{C} \gets \varphi(\mat{S})\cdot\sqrt{1/d}$
        \Comment{Generate codebook from seed matrix}
        \State $\hat{\mat{z}},\;\{\hat{\mat{v}}_i\},\;\{\bar{\mat{v}}_i\},\;\{j_i\}
               \gets \textsc{HRR-VQ}(\mat{z},\,\mathcal{S},\,\mat{C},\,h)$
        \Comment{Algorithm~\ref{alg:hrrvq}}
        \State $\hat{\mat{x}} \gets g(\hat{\mat{z}}\cdot\sqrt{d/m})$
        \Comment{Denormalize and decode}
        \State $\mathcal{L}_{\mathrm{recon}} \gets \lambda_r\cdot\mathrm{BCE}(\hat{\mat{x}},\,\mat{x})$
        \State $\mathcal{L}_{\mathrm{vq}} \gets \lambda_{\mathrm{vq}}\cdot
               \tfrac{1}{m}\sum_{i}\bigl\|\hat{\mat{v}}_i - \sg{\bar{\mat{v}}_i}\bigr\|^2$
        \Comment{Codebook toward denoised retrievals}
        \State $\mathcal{L}_{\mathrm{commit}} \gets \beta\cdot
               \tfrac{1}{m}\sum_{i}\bigl\|\sg{\hat{\mat{v}}_i} - \bar{\mat{v}}_i\bigr\|^2$
        \Comment{Encoder commits to codebook}
        \State $R_z \gets \mathbb{E}[\mat{z}]^2
               + \bigl(\mathrm{Var}(\mat{z}) - \tfrac{m}{d}\bigr)^{\!2}
               + \bigl(\|\mat{z}\|^2 - m\bigr)^{\!2}$
        \State $R_v \gets \mathbb{E}[\bar{\mat{v}}]^2
               + \bigl(\mathrm{Var}(\bar{\mat{v}}) - \tfrac{1}{d}\bigr)^{\!2}
               + \Bigl(\tfrac{1}{m}\textstyle\sum_i\|\bar{\mat{v}}_i\|^2 - 1\Bigr)^{\!2}$
        \State $R_c \gets \mathbb{E}[\mat{C}]^2
               + \bigl(\mathrm{Var}(\mat{C}) - \tfrac{1}{d}\bigr)^{\!2}
               + \Bigl(\tfrac{1}{k}\textstyle\sum_j\|\mat{c}_j\|^2 - 1\Bigr)^{\!2}$
        \State $\mathcal{L}_{\mathrm{reg}} \gets \lambda_z(R_z + R_v) + \lambda_c\cdot R_c$
        \State $\mathcal{L}^{(n)} \gets \mathcal{L}_{\mathrm{recon}}
               + \mathcal{L}_{\mathrm{vq}}
               + \mathcal{L}_{\mathrm{commit}}
               + \mathcal{L}_{\mathrm{reg}}$
    \EndFor
    \State $(f,g,h) \gets \mathrm{AdamW}\!\left(
               \nabla_{\!f,g,h}\,\tfrac{1}{b}\textstyle\sum_n\mathcal{L}^{(n)},\;
               (f,g,h),\;\alpha,\;\beta_1,\;\beta_2,\;\mathrm{wd}\right)$
    \State $(\mat{S},\varphi) \gets \mathrm{AdamW}\!\left(
               \nabla_{\!\mat{S},\varphi}\,\tfrac{1}{b}\textstyle\sum_n\mathcal{L}^{(n)},\;
               (\mat{S},\varphi),\;2\alpha,\;\beta_1,\;\beta_2,\;\mathrm{wd}\right)$
    \Comment{Codebook LR $= 2\alpha$}
\EndWhile
\end{algorithmic}
\end{algorithm}

\newpage

\section{Properties and structure of the HRR}
\label{app:retrieval}

In the following sections, we show that retrieval through the unbinding operation over HRRs introduces two terms that describe noise from different sources within the representation, and show that the particular variance used to initialize the HRR is not an essential component to preserving per-component SNR.

\subsection{Distribution of a retrieval} \label{ssec:retrieval_distribution}

It is possible to derive the distribution of a retrieval from the unbinding operation. This can be done through the circular correlation of unbinding. A version of this example was originally shown in ~\cite{plateHolographicReducedRepresentations1995a} and also revisited in ~\cite{ganesanLearningHolographicReduced2021a}. Under the circular correlation definition of binding (circular convolution with the inverse would be equivalent),

\[
\mathbf{c}^\dagger \otimes (\mathbf{c} \otimes \mathbf{x}) =
\begin{bmatrix}
x_0(c_0^2 + c_1^2 + c_2^2)
+ x_1 c_0 c_2 + x_2 c_0 c_1 + x_1 c_0 c_1 \\
\quad + x_2 c_1 c_2 + x_1 c_1 c_2 + x_2 c_0 c_2 \\[6pt]

x_1(c_0^2 + c_1^2 + c_2^2)
+ x_0 c_0 c_1 + x_2 c_0 c_2 + x_0 c_0 c_2 \\
\quad + x_2 c_1 c_2 + x_0 c_1 c_2 + x_2 c_0 c_1 \\[6pt]

x_2(c_0^2 + c_1^2 + c_2^2)
+ x_0 c_0 c_1 + x_1 c_1 c_2 + x_0 c_1 c_2 \\
\quad + x_1 c_0 c_2 + x_0 c_0 c_2 + x_1 c_0 c_1
\end{bmatrix}
\]

\[
=
\begin{bmatrix}
x_0(1 + \xi) + \eta_0 \\
x_1(1 + \xi) + \eta_1 \\
x_2(1 + \xi) + \eta_2
\end{bmatrix}
= (1 + \xi)\,\mathbf{x} + \boldsymbol{\eta}.
\]

Here, $\xi = (c_0^2 + \cdots + c_d^2) - 1$. Through the central limit theorem and by assuming independence between $c_i$ and $x_i$, we can show that $\xi \sim \mathcal{N}(0, 2/d)$, and $\eta_i = \mathcal{N}(0, (d-1)/d^2)$. For the average case, the $1 + \xi$ term can be taken to 1, since $\E\left[ \|\mat{c}\|^2 \right]=1$, which leaves the final result to be our intended retrieval target plus a noise term.   

These results can be extended to our case for retrieval from a sum of bound vectors. We add another pair to the quantity from the previous example, $\mathbf{c}^\dagger \otimes \left[(\mathbf{c} \otimes \mathbf{x}) + (\mathbf{d} \otimes \mathbf{y})\right]$. For compactness, denote $w_i = (\mathbf{c} \otimes \mathbf{x})_i$, and $v_i = (\mathbf{d} \otimes \mathbf{y})_i$. The retrieval can be expressed as below.

\[
\begin{aligned}
\mathbf{c}^\dagger \otimes \left[(\mathbf{c} \otimes \mathbf{x}) + (\mathbf{d} \otimes \mathbf{y})\right] 
&=
\begin{bmatrix}
c_0 (w_0 + v_0) + c_1 (w_1 + v_1) + c_2 (w_2 + v_2) \\
\cdots \\
\cdots
\end{bmatrix} \\
&=
\begin{bmatrix}
c_0 (c_0 x_0 + c_1 x_2 + c_2 x_1 + d_0 y_0 + d_1 y_2 + d_2 y_1) \\
\quad + c_1 (c_0 x_1 + c_1 x_0 + c_2 x_2 + d_0 y_1 + d_1 y_0 + d_2 y_2 \\
\quad + c_2 (c_0 x_2 + c_1 x_1 + c_2 x_0 + d_0 y_2 + d_1 y_1 + d_2 y_0 \\
\cdots \\
\cdots
\end{bmatrix} \\
&=
\begin{bmatrix}
(c_0^2 + c_1^2 + c_2^2) \ x_0 \\
\quad + c_0 c_1 x_2 + c_0 c_2 x_1 + c_0 d_0 y_0 + c_0 d_1 y_2 + c_0 d_2 y_1 \\
\quad + c_0 c_1 x_1 + c_1 c_2 x_2 + c_1 d_0 y_1 + c_1 d_1 y_0 + c_1 d_2 y_2 \\
\quad + c_0 c_2 x_2 + c_1 c_2 x_1 + c_2 d_0 y_2 + c_2 d_1 y_1 + c_2 d_2 y_0 \\
\cdots \\
\cdots
\end{bmatrix} \\
&=
\begin{bmatrix}
x_0 (1 + \xi) + \eta_0 + \lambda_0 \\
x_1 (1 + \xi) + \eta_1 + \lambda_1 \\
x_2 (1 + \xi) + \eta_2 + \lambda_2
\end{bmatrix} \\
&= (1 + \xi) \mathbf{x} + \boldsymbol{\eta} + \boldsymbol{\lambda}.
\end{aligned}
\]

The final expression, for the $m=2$ case, results in the same expression for the $m=1$ case, with an extra term for the cross talk noise due to unbinding. Each $\lambda_i$ has $d^2$ terms, each like $c_j d_k y_l$. Assuming independence between terms, $c_j d_k y_l \sim \mathcal{N}(0, 1/d^3)$. Since there are $d^2$ terms, $\lambda_i = \mathcal{N}(0, 1/d)$. In general, for the retrieval of a vector from a latent $\mathbf{z}$ with $m$ bound terms, 

\[
\hat{\mathbf{x}} = (\mathbf{c}^{(i)})^\dagger \otimes \mathbf{z} = \underbrace{(1+\xi)\mathbf{x}}_{\text{retrieval}} + \underbrace{\boldsymbol{\eta}}_{\text{intrinsic noise}} + \underbrace{\sum_{k\neq i}^m \boldsymbol{\lambda}^{(k)}}_{\text{cross-talk noise}},
\]

where $\mathbf{c}^{(i)}$ is the cue associated with a target we are trying to retrieve, and $\boldsymbol{\lambda}^{(k)}$ is the cross talk term due to the $k$'th bound pair in the representation.

Thus, we can interpret the retrieval as being composed of 3 main parts, the intended retrieval target, the intrinsic noise, and the cross talk noise. Since our signal and noise terms are both zero mean, we can quantify the per component signal power as $S=1/d$, and the per component noise power as $N = \frac{d-1}{d^2} + \frac{m-1}{d}=\frac{m}{d}-\frac{1}{d^2}$. For large $d$, we have $N \approx \frac{m}{d}$. Then the per component SNR for a retrieval is

\[
\text{SNR} = \frac{1/d}{m/d} = \frac{1}{m}.
\]

\subsection{Importance of the HRR initialization} \label{app:hrr_initialization_importance}

The analysis in Appendix~\ref{ssec:retrieval_distribution} shows that the zero-mean requirement for the HRR vectors ensures that all the noise terms are also zero-mean. The per component variance of $1/d$ ensures that the expected squared norm of HRR vectors is one. This allows us to ensure that the norm of a representation does not grow unbounded as it is bound with more vectors. This was a desirable property for the original intended purpose of the HRR, which was to be able to create representations of arbitrary depth~\cite{plateHolographicReducedRepresentations1995a}. For our case, this requirement is not strictly necessary since we only use HRR vectors to create bundles of $m$ bound pairs, not for binding arbitrary amounts. If we were to instead consider the case in which HRR vector components were instead sampled from $\mathcal{N}(0, c)$, where $c > 0$. Then we can find that the binding variance becomes 

\[
\var((a \otimes b)_i) = d \cdot c^2.
\]

For retrieval, the per component cross talk variances becomes

\[
\var(\lambda_i) = m d^2 c^3,
\]

and the intrinsic noise becomes

\[
\var(\eta_i) = d(d-1)c^3 \approx d^2\cdot c^3.
\]

The per component signal power becomes $S = \var((c_0^2 + \cdots + c_d^2) \cdot x_i) = (2dc^2+c^2 d^2) \cdot c \approx c^2 d^2 \cdot c$, for $d \gg c$, and thus the SNR is

\[
\text{SNR} = \frac{d^2c^3}{(m-1) d^2 c^3 + d^2c^3} = \frac{d^2c^3}{md^2 c^3} = \frac{1}{m},
\]

thus showing that the per component SNR actually remains unchanged with respect to the variance of the atomic HRR vectors. This shows that the $1/d$ variance requirement from the original HRR implementation may not be necessary for our purposes, and thus any reasonable choice, $c$ for the per component variance may also work. However, the original choice of variance is retained for consistency with the original HRR implementation and other works using the HRR. The results in Fig.~\ref{fig:latent_codebook_stats} support this finding. The model learns to keep the zero mean requirement as it is essential for the unbinding process to work. The variances drift from the $1/d$ constant associated with the HRR, but this finding supports the analysis above, as the $1/d$ requirement is not strict. This leaves room for the model to converge on a separate variance that is more optimal for that particular dataset, which we believe is what the figure reflects.

\newpage 

\begin{figure}[t]
    \centering
    \includegraphics[width=0.9\linewidth]{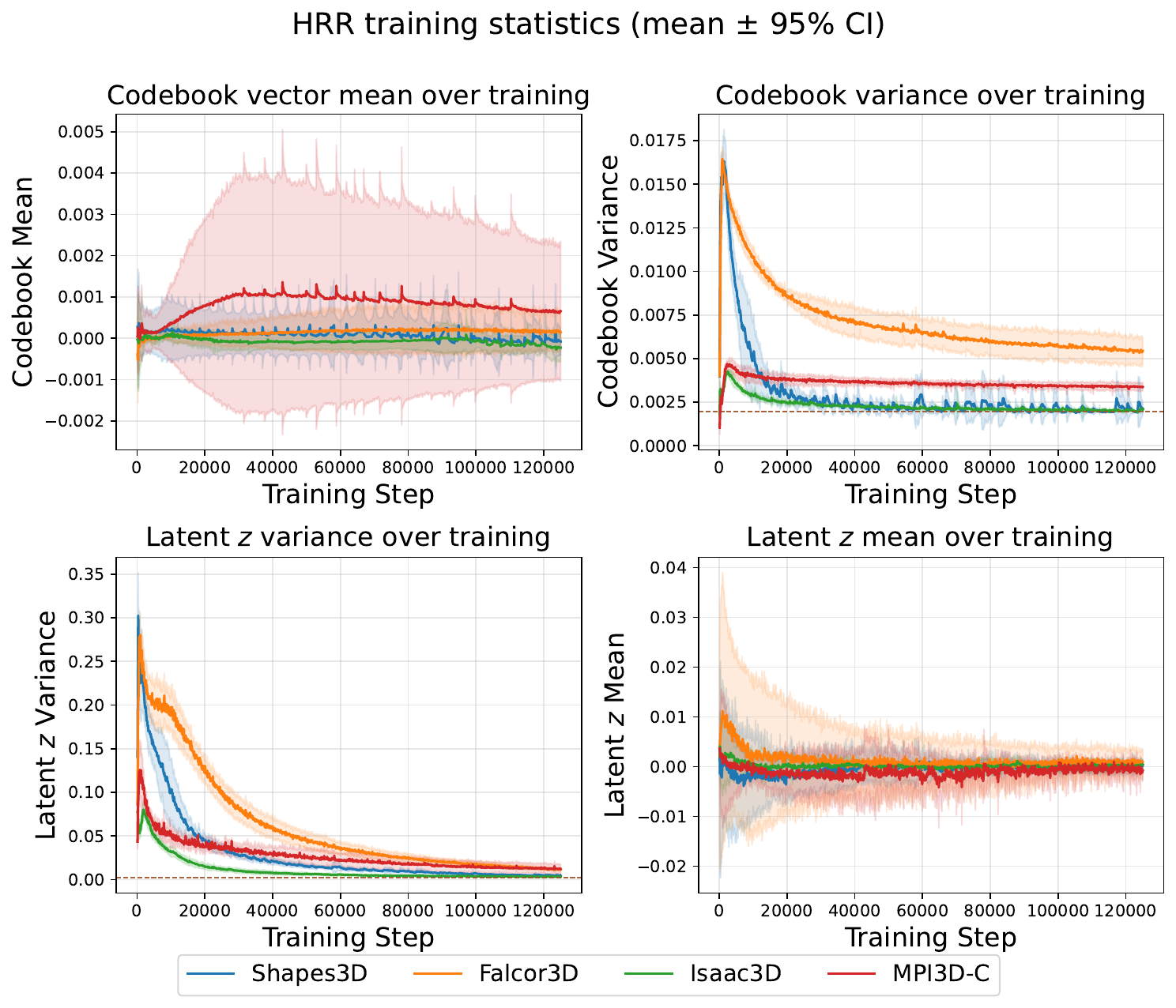}
    \caption{Mean and variance over training iterations, for both the codebook vectors and the latent representations. Most statistics converge on stable values as training progresses.}
    \label{fig:latent_codebook_stats}
\end{figure}

\newpage

\begin{figure}[t]
    \centering
    \includegraphics[width=0.8\linewidth]{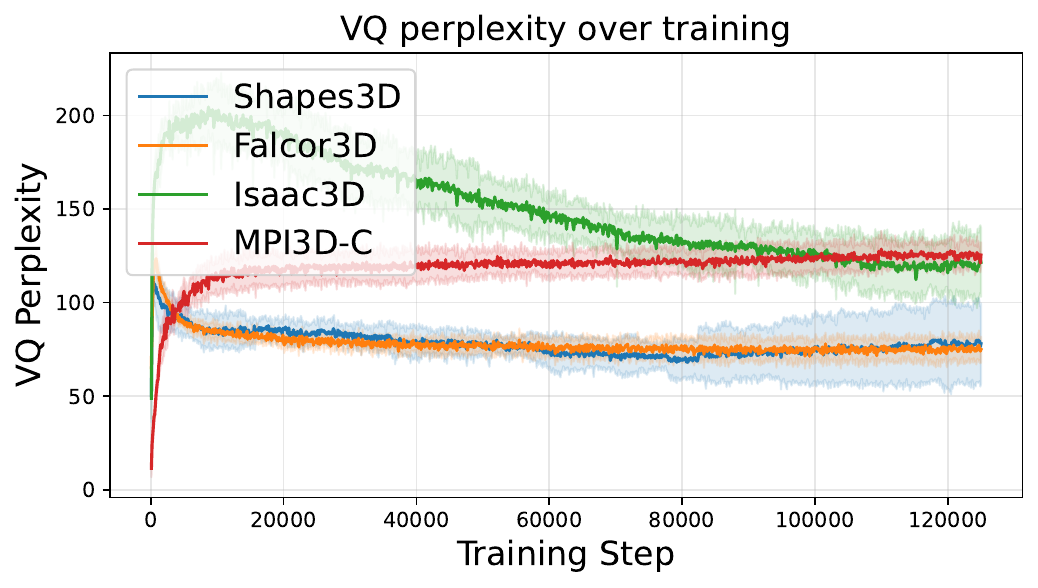}
    \caption{Codebook usage over training iterations, measured by perplexity. We can see that the model learns to utilize the codebook vectors, with the harder datasets having higher codebook usage.}
    \label{fig:codebook_utilization}
\end{figure}

\clearpage

\section{Proofs of theoretical results}
\label{app:proofs}
 
Next, we prove Propositions~\ref{prop:capacity} and~\ref{prop:slot-indep} from
Sections~\ref{sec:capacity} and~\ref{sec:slot-indep}. In this section, expectations are with respect to
the HRR initialization distribution (each component i.i.d.\ $\mathcal{N}(0,1/d)$) and the data
distribution, as appropriate. We write $\tilde{\mat{V}}_{1:m}$ for the tuple
$(\tilde{\mat{v}}_1,\dots,\tilde{\mat{v}}_m)$ and $\tilde{\mat{V}}_{<i}$ for
$(\tilde{\mat{v}}_1,\dots,\tilde{\mat{v}}_{i-1})$.
 
\subsection{Proof of proposition~\ref{prop:slot-indep} (approximate slot independence)}\label{app:slot_independence_proof}
 
By the chain rule of probability,
\begin{equation}
\log\frac{q(\tilde{\mat{V}}_{1:m}\mid \mat{x})}{\prod_{i}q_i(\tilde{\mat{v}}_i\mid \mat{x})}
\;=\; \sum_{i=2}^{m} \log\frac{q_i(\tilde{\mat{v}}_i\mid \tilde{\mat{V}}_{<i},\mat{x})}{q_i(\tilde{\mat{v}}_i\mid \mat{x})}.
\label{eq:chain-rule}
\end{equation}
The term where $i=1$ is identically zero since the conditioning set is empty. Taking expectations under
$q(\tilde{\mat{V}}_{1:m}\mid \mat{x})$,
\begin{equation}
D_{\mathrm{KL}}\!\bigl(q\,\|\,\textstyle\prod_i q_i\bigr)
\;=\; \sum_{i=2}^{m} \mathbb{E}_q\!\left[\log\frac{q_i(\tilde{\mat{v}}_i\mid \tilde{\mat{V}}_{<i},\mat{x})}{q_i(\tilde{\mat{v}}_i\mid \mat{x})}\right]
\;=\; \sum_{i=2}^{m} \I{\tilde{\mat{v}}_i}{\tilde{\mat{V}}_{<i}\mid \mat{x}}.
\label{eq:sum-of-mi}
\end{equation}

We bound the right-hand side of~\eqref{eq:sum-of-mi} by treating the total correlation
as a single aggregate quantity, rather than bounding each term
$\I{\tilde{\mat{v}}_i}{\tilde{\mat{V}}_{<i}\mid \mat{x}}$ individually. As we discuss in Section~\ref{sec:reduction}, bounding each term in isolation leads to expressions
involving the smallest eigenvalue of a conditioning block, which is not accessible from
the operator norms we can establish. Instead, we work with the closed-form KL
representation of the full sum and bound it through a single log-determinant expression.
This approach requires an understanding of the block covariance structure of the joint
retrieval $\tilde{\mat{V}}_{1:m}$, which we develop next.

\subsection{Covariance Block Structure}

The mean of the joint retrieval distribution $q(\tilde{\mat{V}}_{1:m}\mid \mat{x})$ is
$(\mat{v}_1(\mat{x}),\dots,\mat{v}_m(\mat{x}))$. The block covariance matrix
$\Sigma \in \R^{md \times md}$, with block entries

\[
\Sigma=
\begin{bmatrix}
    \Sigma_{11} & \Sigma_{12} & \cdots & \Sigma_{1m} \\
    \Sigma_{21} & \Sigma_{22} & \cdots & \Sigma_{2m} \\
    \vdots      & \vdots      & \ddots & \vdots      \\
    \Sigma_{m1} & \cdots      & \cdots & \Sigma_{mm} \\
\end{bmatrix},
\]

where any entry $\Sigma_{ij} = \E \left[ \tilde{\mat{v}}_i \tilde{\mat{v}}_j^\T \right] \in \R^{d \times d}$. Now, recall that the structure of a retrieval can be expressed as
\[
\tilde{\mat{v}}_i = (1+\xi_i) \mat{v}_i + \mat{\eta}_i + \sum_{k\neq i} \mat{\lambda}^{(ik)},
\]
where $\mat{\lambda}^{(ik)}$ represents the cross-talk introduced from $\mat{s}_i^\inv \bind (\mat{s}_k \bind \mat{v}_k)$. From the definition of the covariance, we have
\begin{align*}
    \cov \left(\tilde{\mat{v}}_i, \tilde{\mat{v}}_j \mid \tilde{\mat{V}}_{1:m}, \mat{x} \right)
    &= \E \left[(\tilde{\mat{v}}_i - \mat{v}_i) ((\tilde{\mat{v}}_j - \mat{v}_j)^\T \mid \tilde{\mat{V}}_{1:m}, \mat{x} \right] \\
    &= \E \left[\epsilon_i \epsilon_j^\T \middle\vert \tilde{\mat{V}}_{1:m}, \mat{x} \right] \\
    &= \E \left[\left(\xi_i \mat{v}_i + \mat{\eta}_i + \sum_{\ell \neq i} \mat{\lambda}^{(ik)}\right) \left(\xi_j \mat{v}_j + \mat{\eta}_j + \sum_{r \neq j} \mat{\lambda}^{(jr)}\right)^\T 
    \middle\vert \tilde{\mat{V}}_{1:m}, \mat{x} \right] \\ 
\end{align*}

\subsubsection{Diagonal Blocks}

We begin by analyzing the diagonal blocks of the block covariance matrix, $\Sigma$. Specifically, we'd like to find the structure of an entry in $\Sigma_{ii}$, defined as
\[
(\Sigma_{ii})_{ab} = \E \left[\left(\xi_i \mat{v}_i + \mat{\eta}_i + \sum_{\ell \neq i} \mat{\lambda}^{(i \ell)}\right)_a \cdot \left(\xi_i \mat{v}_i + \mat{\eta}_i + \sum_{r \neq i} \mat{\lambda}^{(ir)}\right)_b 
    \, \middle\vert \tilde{\mat{V}}_{1:m}, \mat{x} \right].
\]
We begin by computing the expected values of the terms in the resulting product.

\begin{enumerate}
    \item $\E \left[ (\xi_i \mat{v}_i)_a  \cdot (\xi_i \mat{v}_i)_b\right] = (\mat{v}_i)_a (\mat{v}_i)_b \, \E \left[ \xi_i^2 \right] =(\mat{v}_i)_a (\mat{v}_i)_b \cdot \frac{2}{d} $

    \item $\E \left[ (\mat{\eta}_i)_a (\mat{\eta}_i)_b \right] \neq 0$

    \item $\E \left[ (\xi_i \mat{v}_i)_a (\mat{\eta}_i)_b \right] = (\mat{v}_i)_a\E \left[ \xi_i (\mat{\eta}_i)_b \right] = 0$
    
    \item $\E \left[ (\mat{\eta}_i)_a (\mat{\lambda}^{(i \ell)})_b \right] =\E \left[ (\mat{\eta}_i)_a (\mat{s}_i^\inv \bind \mat{s}_{\ell} \bind \mat{v}_{\ell})_b \right] $

    Observe that $(\mat{\eta}_i)_a$ will contain traces of elements of $\mat{s}_i$, so the unbinding operation will introduce terms with squared elements of $\mat{s}_i$. However, these terms will still have elements of $\mat{s}_{\ell}$ to the first power, which in expectation, from the HRR distribution, will result in 
    \[
    \E \left[ (\mat{\eta}_i)_a (\mat{\lambda}^{(i \ell)})_b \right] = 0.
    \]
    
    \item $\E \left[ (\xi_i \mat{v}_i)_a (\mat{\lambda}^{(i \ell)})_b \right]$

    Applying the same reasoning as in 4), entries from $\mat{s}_{\ell}$ will remain in the product to first power, resulting in
    \[
    \E \left[ (\xi_i \mat{v}_i)_a (\mat{\lambda}^{(i \ell)})_b \right] = 0.
    \]
    
    \item $\E \left[ (\mat{\lambda}^{(i \ell)})_a (\mat{\lambda}^{(ir)})_b \right] \neq 0$

    For $\ell \neq r$, the expected value will evaluate to zero due to the elements of $\mat{s}_{\ell}$ and $\mat{s}_r$ appearing to first degree, however, the case where $\ell = r$ will not be zero since squared elements of the symbols will appear.
\end{enumerate}

From the analysis above, we see that the covariance, $\Sigma_{ii}$, can be expressed as
\[
\Sigma_{ii} = \E \left[ (\xi_i \mat{v}_i + \mat{\eta}_i) (\xi_i \mat{v}_i + \mat{\eta}_i)^\T  \right] + \sum_{\ell \neq i} \E \left[ \mat{\lambda}^{(i \ell)} (\mat{\lambda}^{(i \ell)})^\T  \right],
\]
as the other cross-terms cancel. Then, applying the definitions of circular convolution and circular correlation, we can express an entry from a single cross-talk contribution. Letting $\mat{s}_k^{(i)}$ denote the $k$'th entry of $\mat{s}_i$, we have
\begin{align*}
    \E \left[ (\mat{\lambda}^{(i \ell)})_a (\mat{\lambda}^{(i \ell)})_b  \right]
    &= \E \left[ \left( \sum_{k=0}^{d-1} \sum_{m=0}^{d-1} \mat{s}^{(i)}_k \mat{s}^{(\ell)}_m (\mat{v}_{\ell})_{k+a-m} \right) \cdot \left( \sum_{k'=0}^{d-1} \sum_{m'=0}^{d-1} \mat{s}^{(i)}_{k'} \mat{s}^{(\ell)}_{m'} (\mat{v}_{\ell})_{k' +b-m'} \right) \right] \\
    &= \E \left[ \sum_{k=0}^{d-1} \sum_{m=0}^{d-1} \sum_{k'=0}^{d-1} \sum_{m'=0}^{d-1} \mat{s}^{(i)}_k \mat{s}^{(\ell)}_m \mat{s}^{(i)}_{k'} \mat{s}^{(\ell)}_{m'} (\mat{v}_{\ell})_{k+a-m} (\mat{v}_{\ell})_{k' +b-m'} \right] \\
    &= \sum_{k=0}^{d-1} \sum_{m=0}^{d-1} \sum_{k'=0}^{d-1} \sum_{m'=0}^{d-1} \E \left[ \mat{s}^{(i)}_k \mat{s}^{(\ell)}_m \mat{s}^{(i)}_{k'} \mat{s}^{(\ell)}_{m'} \right] (\mat{v}_{\ell})_{k+a-m} (\mat{v}_{\ell})_{k' +b-m'} \\
    &= \sum_{k=0}^{d-1} \sum_{m=0}^{d-1} \E \left[ \mat{s}^{(i)}_k \mat{s}^{(\ell)}_m \mat{s}^{(i)}_{k} \mat{s}^{(\ell)}_{m} \right] (\mat{v}_{\ell})_{k+a-m} (\mat{v}_{\ell})_{k +b-m} \\
    &= \frac{1}{d^2} \sum_{k=0}^{d-1} \sum_{m=0}^{d-1} (\mat{v}_{\ell})_{k+a-m} (\mat{v}_{\ell})_{k +b-m} \\
    &= \frac{1}{d^2} \sum_{k=0}^{d-1} \sum_{n=0}^{d-1} (\mat{v}_{\ell})_{a+n} (\mat{v}_{\ell})_{b+n} \\
    &= \frac{1}{d} \sum_{n=0}^{d-1} (\mat{v}_{\ell})_{a+n} (\mat{v}_{\ell})_{b+n}, \\
\end{align*}
where $n=k-m \mod d$. Thus, the contribution of the cross-talk term can be written as
\begin{align*}
\sum_{\ell \neq i} \E \left[ (\mat{\lambda}^{(i \ell)})_a (\mat{\lambda}^{(i \ell)})_b \right]
&= \sum_{\ell \neq i} \frac{1}{d} \sum_{n=0}^{d-1} (\mat{v}_{\ell})_{a+n} (\mat{v}_{\ell})_{b+n}. \\
\end{align*}

Evaluating the non-crosstalk contributions, we have that

\begin{align*}
    \E \left[ (\xi_i \mat{v}_i + \mat{\eta}_i)_a (\xi_i \mat{v}_i + \mat{\eta}_i)_b  \right]
    &= \E \left[ (\xi_i \mat{v}_i)_a (\xi_i \mat{v}_i)_b + (\mat{\eta}_i)_a (\mat{\eta}_i)_b \right] \\
    &= (\mat{v}_i)_a (\mat{v}_i)_b \, \E\left[ \xi_i^2 \right] + \E \left[ (\mat{\eta}_i)_a (\mat{\eta}_i)_b \right] \\
    &= (\mat{v}_i)_a (\mat{v}_i)_b \cdot \frac{2}{d} + \E \left[ (\mat{\eta}_i)_a (\mat{\eta}_i)_b \right].
\end{align*}

We can derive an expression for $(\mat{\eta}_i)_a$ directly from the unbinding definition,
\begin{align*}
    (\tilde{\mat{v}}_i)_a 
    &= \sum_{k=0}^{d-1} \sum_{m=0}^{d-1} \mat{s}^{(i)}_k \mat{s}^{(i)}_m (\mat{v}_i)_{k+a-m} \\
    &= \sum_{k=0}^{d-1} \left(\mat{s}^{(i)}_k \mat{s}^{(i)}_k (\mat{v}_i)_{k+a-k} + \sum_{m \neq k} \mat{s}^{(i)}_k \mat{s}^{(i)}_m (\mat{v}_i)_{k+a-m} \right) \\
    &= \sum_{k=0}^{d-1} \left( \left(\mat{s}^{(i)}_k\right)^2 (\mat{v}_i)_{a} + \sum_{m \neq k} \mat{s}^{(i)}_k \mat{s}^{(i)}_m (\mat{v}_i)_{k+a-m} \right) \\
    &= \left[ \left(\mat{s}^{(i)}_1\right)^2 + \left(\mat{s}^{(i)}_2\right)^2 + \cdots + \left(\mat{s}^{(i)}_d\right)^2 \right] (\mat{v}_i)_{a} + \sum_{k=0}^{d-1} \sum_{m \neq k}^{d-1} \mat{s}^{(i)}_k \mat{s}^{(i)}_m (\mat{v}_i)_{k+a-m}  \\
    &= \| \mat{s}_i \|^2 (\mat{v}_i)_{a} + \sum_{k=0}^{d-1} \sum_{m \neq k}^{d-1} \mat{s}^{(i)}_k \mat{s}^{(i)}_m (\mat{v}_i)_{k+a-m}.  \\
\end{align*}
So we have that
\begin{align*}
    \E \left[ (\mat{\eta}_i)_a (\mat{\eta}_i)_b \right] 
    &= 
    \E \left[ \left( \sum_{k=0}^{d-1} \sum_{m \neq k}^{d-1} \mat{s}^{(i)}_k \mat{s}^{(i)}_m (\mat{v}_i)_{k+a-m} \right) \cdot 
    \left( \sum_{k'=0}^{d-1} \sum_{m' \neq k'}^{d-1} \mat{s}^{(i)}_{k'} \mat{s}^{(i)}_{m'} (\mat{v}_i)_{k'+b-m'} \right) \right] \\
    &= 
    \E \left[ 
    \sum_{k=0}^{d-1} \sum_{m \neq k}^{d-1} \sum_{k'=0}^{d-1} \sum_{m' \neq k'}^{d-1} \mat{s}^{(i)}_k \mat{s}^{(i)}_m \mat{s}^{(i)}_{k'} \mat{s}^{(i)}_{m'} 
    (\mat{v}_i)_{k+a-m} (\mat{v}_i)_{k'+b-m'} \right] \\
    &=  
    \sum_{k=0}^{d-1} \sum_{m \neq k}^{d-1} \sum_{k'=0}^{d-1} \sum_{m' \neq k'}^{d-1} 
    \E \left[ \mat{s}^{(i)}_k \mat{s}^{(i)}_m \mat{s}^{(i)}_{k'} \mat{s}^{(i)}_{m'} \right]
    (\mat{v}_i)_{k+a-m} (\mat{v}_i)_{k'+b-m'} \\
\end{align*}
Observe that the expectation is non-zero only when $k=k',m=m'$ and when $k=m', m = k'$, thus, 
\begin{align*} 
    \E \left[ (\mat{\eta}_i)_a (\mat{\eta}_i)_b \right] 
    &= \sum_{k=0}^{d-1} \sum_{m \neq k}^{d-1} \E \left[ \mat{s}^{(i)}_k \mat{s}^{(i)}_m \mat{s}^{(i)}_{k} \mat{s}^{(i)}_{m} \right] (\mat{v}_i)_{k+a-m} (\mat{v}_i)_{k+b-m} \\
    & \qquad + \sum_{k=0}^{d-1} \sum_{m \neq k}^{d-1} \E \left[ \mat{s}^{(i)}_k \mat{s}^{(i)}_m \mat{s}^{(i)}_{m} \mat{s}^{(i)}_{k} \right] (\mat{v}_i)_{k+a-m} (\mat{v}_i)_{m+b-k} \\
    &= \frac{1}{d^2} \sum_{k=0}^{d-1} \sum_{m \neq k}^{d-1} (\mat{v}_i)_{k+a-m} (\mat{v}_i)_{k+b-m} + \frac{1}{d^2} \sum_{k=0}^{d-1} \sum_{m \neq k}^{d-1} (\mat{v}_i)_{k+a-m} (\mat{v}_i)_{m+b-k} \\
    &= \frac{1}{d^2} \left[ \sum_{k=0}^{d-1} \sum_{m \neq k}^{d-1} (\mat{v}_i)_{k+a-m} (\mat{v}_i)_{k+b-m} + \sum_{k=0}^{d-1} \sum_{m \neq k}^{d-1} (\mat{v}_i)_{k+a-m} (\mat{v}_i)_{m+b-k} \right] \\
    &= \frac{1}{d} \left[ \sum_{n \neq 0} (\mat{v}_i)_{a+n} (\mat{v}_i)_{b+n} + \sum_{n \neq 0} (\mat{v}_i)_{a+n} (\mat{v}_i)_{b-n} \right]. \\
\end{align*}

Finally, we can state
\begin{align*}
    (\Sigma_{ii})_{ab} = \underbrace{(\mat{v}_i)_a (\mat{v}_i)_b \cdot \frac{2}{d}}_{\E \left[ (\xi_i \mat{v}_i)_a (\xi_i \mat{v}_i)_b \right]} 
    &+ \underbrace{\sum_{\ell \neq i} \frac{1}{d} \sum_{n=0}^{d-1} (\mat{v}_{\ell})_{a+n} (\mat{v}_{\ell})_{b+n}}_{\sum_{\ell \neq i} \E \left[ (\mat{\lambda}^{(i \ell)})_a (\mat{\lambda}^{(i \ell)})_b \right]} \\
    & \qquad+ \underbrace{\frac{1}{d} \left[ \sum_{n \neq 0} (\mat{v}_i)_{a+n} (\mat{v}_i)_{b+n} + \sum_{n \neq 0} (\mat{v}_i)_{a+n} (\mat{v}_i)_{b-n} \right]}_{\E \left[ (\mat{\eta}_i)_a (\mat{\eta}_i)_b \right]}.
\end{align*}

Observe that when $a=b$, and when $d \gg 0$, we'll have, for the average case
\begin{align*}
    (\Sigma_{ii})_{aa} 
    &= (\mat{v}_i)_a^2 \cdot \frac{2}{d} + \sum_{\ell \neq i} \frac{1}{d} \sum_{n=0}^{d-1} (\mat{v}_{\ell})_{a+n}^2 + \frac{1}{d} \left[ \sum_{n \neq 0} (\mat{v}_i)_{a+n}^2 + \sum_{n \neq 0} (\mat{v}_i)_{a+n} (\mat{v}_i)_{b-n} \right] \\
    &= \underbrace{(\mat{v}_i)_a^2}_{1/d} \cdot \frac{2}{d} + \frac{1}{d} \sum_{\ell \neq i} \underbrace{\|\mat{v}_{\ell}\|^2}_{\approx 1} + \frac{1}{d} \left[ \underbrace{\left(\|\mat{v}_i\|^2 - (\mat{v}_i)_a^2 \right)}_{\approx 1 - 1/d} + \underbrace{\sum_{n \neq 0} (\mat{v}_i)_{a+n} (\mat{v}_i)_{b-n}}_{\approx 0} \right] \\
    &\approx \frac{2}{d^2} + \frac{m-1}{d} + \frac{1}{d} - \frac{1}{d^2} \\
    &= \frac{1}{d^2} + \frac{m}{d} \\
    &\approx \frac{m}{d}.
\end{align*}
The approximations defined under the braces come from the fact that elements of $\mat{v}_i$ are sampled from the HRR distribution, $\mathcal{N}(0, 1/d)$.

In the alternative case, for $a \neq b$, we have $(\Sigma_{ii})_{ab} \approx 0$ due to the products of terms that have zero mean. Thus, we have the approximation
\[
\Sigma_{ii} \approx \frac{m}{d} \mat{I}.
\]
\subsubsection{Off-Diagonal Blocks} \label{app:off_diagonal_blocks}

For the off-diagonal blocks, we are interested in 

\[
(\Sigma_{ij})_{ab} = \E \left[\left(\xi_i \mat{v}_i + \mat{\eta}_i + \sum_{\ell \neq i} \mat{\lambda}^{(i \ell)}\right)_a \cdot \left(\xi_j \mat{v}_j + \mat{\eta}_j + \sum_{r \neq j} \mat{\lambda}^{(jr)}\right)_b 
    \, \middle\vert \tilde{\mat{V}}_{1:m}, \mat{x} \right].
\]

By going through cases again, we make the following observations.

\begin{enumerate}
    \item $\E \left[ (\xi_i \mat{v}_i)_a  \cdot (\xi_j \mat{v}_j)_b\right] = (\mat{v}_i)_a (\mat{v}_j)_b \, \E \left[ \xi_i \xi_j \right] = 0$

    \item $\E \left[ (\mat{\eta}_i)_a (\mat{\eta}_j)_b \right] = 0$

    \item $\E \left[ (\xi_i \mat{v}_i)_a (\mat{\eta}_j)_b \right] = (\mat{v}_i)_a\E \left[ \xi_i (\mat{\eta}_j)_b \right] = 0$
    
    \item $\E \left[ (\mat{\eta}_i)_a (\mat{\lambda}^{(jr)})_b \right] = 0 $

    We have already seen that for the case when $i = j$, the expectation is zero. Additionally, when $i \neq j$, the expectation is still zero, as entries of $\mat{s}_i$, $\mat{s}_j$, and $\mat{s}_r$, all appear at first power.
    
    \item $\E \left[ (\xi_i \mat{v}_i)_a (\mat{\lambda}^{(jr)})_b \right] = 0$

    Applying the same logic as 4), we see that this expectation is also zero.

    \item $\E \left[ (\mat{\lambda}^{(i \ell)})_a (\mat{\lambda}^{(jr)})_b \right]$

    For this case, the expectation will be non-zero only for the case when $\ell = j$ and $r=i$.
\end{enumerate}

Examining case six closer, similarly as before, we can write
\begin{align*}
    (\Sigma_{ij})_{ab} 
    &= \sum_{k=0}^{d-1} \sum_{m=0}^{d-1} \sum_{k'=0}^{d-1} \sum_{m'=0}^{d-1} 
    \E \left[ \mat{s}_k^{(i)} \mat{s}_m^{(j)} \mat{s}_{k'}^{(j)} \mat{s}_{m'}^{(i)}  \right] (\mat{v}_j)_{k+a-m} (\mat{v}_i)_{k'+b-m'} \\
    &= \sum_{k=0}^{d-1} \sum_{m=0}^{d-1} 
    \E \left[ \mat{s}_k^{(i)} \mat{s}_m^{(j)} \mat{s}_{m}^{(j)} \mat{s}_{k}^{(i)}  \right] (\mat{v}_j)_{k+a-m} (\mat{v}_i)_{m+b-k} \\
    &= \frac{1}{d^2}\sum_{k=0}^{d-1} \sum_{m=0}^{d-1} 
    (\mat{v}_j)_{k+a-m} (\mat{v}_i)_{m+b-k} \\
    &= \frac{1}{d} \sum_{n=0}^{d-1}
    (\mat{v}_j)_{a+n} (\mat{v}_i)_{b-n}. \\
\end{align*}
This establishes that entries of $\Sigma_{ij}$ are at least $\mathcal{O}(\frac{1}{d})$. Then, observe that from the definition of circular convolution, we can rewrite the final equality as
\begin{align*}
    (\Sigma_{ij})_{ab}
    &= \frac{1}{d} \sum_{n=0}^{d-1} (\mat{v}_j)_{a+n} (\mat{v}_i)_{b-n} \\
    &= \frac{1}{d} (\mat{v}_j \bind \mat{v}_i)_{a+b}.
\end{align*}
We can then express the squared Frobenius norm of $\Sigma_{ij}$ as
\begin{align*}
    \| \Sigma_{ij} \|^2_F
    &= \sum_{a=1}^d \sum_{b=1}^d (\Sigma_{ij})_{ab}^2 \\
    &= \frac{1}{d^2} \sum_{a=0}^{d-1} \sum_{b=0}^{d-1} |(\mat{v}_j \bind \mat{v}_i)_{a+b}|^2 \\
    &= \frac{1}{d^2} \cdot d  \sum_{c=0}^{d-1} |(\mat{v}_j \bind \mat{v}_i)_{c}|^2 \\
    &= \frac{1}{d} \sum_{c=0}^{d-1} |(\mat{v}_j \bind \mat{v}_i)_{c}|^2 \\
    &= \frac{1}{d} \|(\mat{v}_j \bind \mat{v}_i)\|^2. \\
\end{align*}
Since each entry of $(\mat{v}_j \bind \mat{v}_i)$ has variance $\frac{1}{d}$, we can say that $(\mat{v}_j \bind \mat{v}_i)_{c} = \mathcal{O}(\frac{1}{\sqrt{d}})$. From this, we can establish that
\begin{align*}
    \|(\mat{v}_j \bind \mat{v}_i)\|^2 
    &= \sum_{c=0}^{d-1} |(\mat{v}_j \bind \mat{v}_i)_{c}|^2 \\
    &= \sum_{c=0}^{d-1} \mathcal{O}\left(\frac{1}{d} \right) \\
    &= \mathcal{O} \left( 1 \right). \\
\end{align*}
Thus, we have that
\[
    \| \Sigma_{ij} \|^2_F
    = \frac{1}{d} \underbrace{\|(\mat{v}_j \bind \mat{v}_i)\|^2}_{\mathcal{O} \left( 1 \right)},
\]
which implies that $\| \Sigma_{ij} \|^2_F$ is $\mathcal{O} \left( \frac{1}{d} \right)$.

\subsubsection{Sharpened Off-Diagonal Operator Norm}

In general, for a matrix $\mat{A}$, we define the operator norm as $\norm{\mat{A}}_{\mathrm{op}} = \max_{\|\mat{x}\| \neq 0} \frac{\|\mat{A}\mat{x}\|}{\|\mat{x}\|}$. We also establish a sharper bound on $\norm{\Sigma_{ij}}_{\mathrm{op}}$. While the Frobenius bound gives $\norm{\Sigma_{ij}}_{\mathrm{op}} \leq \norm{\Sigma_{ij}}_F = \mathcal{O}(1/\sqrt{d})$, the circulant structure of $\Sigma_{ij}$ yields a tighter estimate. Recall $(\Sigma_{ij})_{ab} = \tfrac{1}{d}(\mat{v}_j \bind \mat{v}_i)_{a+b}$, i.e.\ $\Sigma_{ij} = \tfrac{1}{d}\,\mathrm{H}(\mat{v}_j \bind \mat{v}_i)$ where $\mathrm{H}(\mat{g})_{ab} = g_{(a+b)\bmod d}$. The matrix $\mathrm{H}(\mat{g})$ is a circulant composed with the index-reversal permutation, $\mathrm{H}(\mat{g}) = P\,\mathrm{Circ}(\mat{g})$ with $P$ orthogonal, so its singular values coincide with those of $\mathrm{Circ}(\mat{g})$, namely the moduli $\{|\hat{g}_k|\}$ of the DFT of $\mat{g}$. Let 
\[
\mat{g} = \mat{v}_j \bind \mat{v}_i,
\qquad 
g_\ell = (\mat{v}_j \bind \mat{v}_i)_\ell,
\]
with all subscripts interpreted modulo \(d\). Then
\[
(\Sigma_{ij})_{ab}
=
\frac{1}{d} g_{a+b},
\]
so
\[
\Sigma_{ij}
=
\frac{1}{d}
\begin{bmatrix}
g_0      & g_1      & g_2      & \cdots & g_{d-1} \\
g_1      & g_2      & g_3      & \cdots & g_0     \\
g_2      & g_3      & g_4      & \cdots & g_1     \\
\vdots   & \vdots   & \vdots   & \ddots & \vdots  \\
g_{d-1}  & g_0      & g_1      & \cdots & g_{d-2}
\end{bmatrix}.
\]
Equivalently,
\[
\Sigma_{ij}
=
\frac{1}{d}\,\mathrm{H}(\mat{g}),
\qquad
\mathrm{H}(\mat{g})_{ab}
=
g_{(a+b)\bmod d}.
\]
Using the convention
\[
\mathrm{Circ}(\mat{g})_{ab}
=
g_{(b-a)\bmod d},
\]
we have
\[
\mathrm{Circ}(\mat{g})
=
\begin{bmatrix}
g_0      & g_1      & g_2      & \cdots & g_{d-1} \\
g_{d-1}  & g_0      & g_1      & \cdots & g_{d-2} \\
g_{d-2}  & g_{d-1}  & g_0      & \cdots & g_{d-3} \\
\vdots   & \vdots   & \vdots   & \ddots & \vdots  \\
g_1      & g_2      & g_3      & \cdots & g_0
\end{bmatrix}.
\]
Now define the index-reversal permutation matrix \(P\) by
\[
P_{ab}
=
\mathbf{1}\{b \equiv -a \pmod d\}.
\]
Explicitly,
\[
P
=
\begin{bmatrix}
1      & 0      & 0      & \cdots & 0      \\
0      & 0      & 0      & \cdots & 1      \\
0      & 0      & \dots  & 1      & 0      \\
\vdots & \vdots & \vdots & \vdots & \vdots \\
0      & 1      & 0      & \cdots & 0
\end{bmatrix}.
\]
Therefore,
\[
\mathrm{H}(\mat{g})
=
P\,\mathrm{Circ}(\mat{g}),
\]
and hence
\[
\Sigma_{ij}
=
\frac{1}{d}P\,\mathrm{Circ}(\mat{v}_j \bind \mat{v}_i).
\]
Using that binding multiplies DFT coefficients pointwise,
\[
\norm{\Sigma_{ij}}_{\mathrm{op}}
= \tfrac{1}{d}\max_{k} \bigl|\widehat{(\mat{v}_j \bind \mat{v}_i)}_k\bigr|
= \tfrac{1}{d}\max_{k} |\hat{\mat{v}}_{j,k}|\,|\hat{\mat{v}}_{i,k}|.
\]
For $\mat{v} \sim$ the HRR distribution, each transform coefficient is complex Gaussian with
$\E|\hat{\mat{v}}_k|^2 = 1$, so with high probability over the codebook
$\max_k |\hat{\mat{v}}_k|^2 = \mathcal{O}(\log d)$, giving
\[
\norm{\Sigma_{ij}}_{\mathrm{op}} = \mathcal{O}\!\left(\tfrac{\log d}{d}\right).
\]

\subsection{A Posterior Precision Floor}

The log-determinant reduction developed below requires the joint covariance $\Sigma$ to be
nonsingular. We first show that, for the idealized noiseless retrieval model, $\Sigma$ is in
fact singular. The key point is that the retrieved slots follow an exact deterministic linear constraint. Recall that the latent structure that we use takes the form
\[
\mat{M}
=
\sum_{k=1}^{m} \mat{s}_k \bind \mat{v}_k,
\]
and the retrieval of slot \(i\) is
\[
\tilde{\mat{v}}_i
=
\mat{s}_i^{\inv} \bind \mat{M}.
\]
We analyze this relation in the Fourier domain, where circular convolution and
correlation diagonalize. For a real vector \(\mat{u}\in\R^d\), write its discrete
Fourier transform as
\[
\hat{u}_\omega
=
\sum_{n=0}^{d-1} u_n e^{-2\pi i \omega n/d},
\qquad
\omega = 0,\dots,d-1.
\]
The circular convolution theorem gives
\[
\widehat{(\mat{a}\bind \mat{b})}_\omega
=
\hat{a}_\omega \hat{b}_\omega.
\]
Similarly, for the unbinding operation
\[
(\mat{c}^{\inv}\bind \mat{y})_j
=
\sum_{k=0}^{d-1} c_k y_{k+j},
\]
a direct computation gives
\[
\widehat{(\mat{c}^{\inv}\bind \mat{y})}_\omega
=
\overline{\hat{c}_\omega}\,\hat{y}_\omega.
\]
Therefore,
\[
\widehat{\mat{M}}_\omega
=
\sum_{k=1}^{m} \hat{s}_{k,\omega}\hat{v}_{k,\omega}.
\]
Define the shared Fourier coefficient
\[
Z_\omega
:=
\widehat{\mat{M}}_\omega
=
\sum_{k=1}^{m} \hat{s}_{k,\omega}\hat{v}_{k,\omega}.
\]
Then the retrieved slot satisfies
\[
\widehat{\tilde{\mat{v}}}_{i,\omega}
=
\overline{\hat{s}_{i,\omega}} Z_\omega.
\]
Thus every slot's retrieval at frequency \(\omega\) is obtained from the same scalar
\(Z_\omega\), modulated by that slot's own symbol coefficient.

Now weighing each retrieved coefficient by the conjugate of the corresponding target coefficient and summing over slots, we get
\[
\sum_{i=1}^{m}
\overline{\hat{v}_{i,\omega}}\,
\widehat{\tilde{\mat{v}}}_{i,\omega}
=
\sum_{i=1}^{m}
\overline{\hat{v}_{i,\omega}}\,
\overline{\hat{s}_{i,\omega}}\,
Z_\omega.
\]
Factoring out \(Z_\omega\), we obtain
\[
\sum_{i=1}^{m}
\overline{\hat{v}_{i,\omega}}\,
\widehat{\tilde{\mat{v}}}_{i,\omega}
=
Z_\omega
\sum_{i=1}^{m}
\overline{\hat{s}_{i,\omega}\hat{v}_{i,\omega}}
=
Z_\omega
\overline{
\sum_{i=1}^{m}
\hat{s}_{i,\omega}\hat{v}_{i,\omega}
}.
\]
By the definition of \(Z_\omega\), this becomes
\[
\sum_{i=1}^{m}
\overline{\hat{v}_{i,\omega}}\,
\widehat{\tilde{\mat{v}}}_{i,\omega}
=
Z_\omega \overline{Z_\omega}
=
|Z_\omega|^2.
\]
The right-hand side is real-valued for every realization of the symbols. Hence
\[
\operatorname{Im}
\left(
\sum_{i=1}^{m}
\overline{\hat{v}_{i,\omega}}\,
\widehat{\tilde{\mat{v}}}_{i,\omega}
\right)
=
0
\]
identically.

Conditioned on the codebook \(\{\mat{v}_i\}_{i=1}^m\), the coefficients
\(\overline{\hat{v}_{i,\omega}}\) are fixed. Also, the Fourier transform is a fixed
invertible linear map on the real coordinates of each retrieved vector. Therefore
\[
L_\omega(\tilde{\mat{V}}_{1:m})
:=
\operatorname{Im}
\left(
\sum_{i=1}^{m}
\overline{\hat{v}_{i,\omega}}\,
\widehat{\tilde{\mat{v}}}_{i,\omega}
\right)
\]
is a fixed real-linear functional of the joint retrieval vector
\[
\tilde{\mat{V}}_{1:m}
=
(\tilde{\mat{v}}_1,\dots,\tilde{\mat{v}}_m)
\in \R^{md}.
\]
The identity above says that
\[
L_\omega(\tilde{\mat{V}}_{1:m}) = 0
\qquad
\text{almost surely}.
\]
Equivalently, there exists a real vector \(a_\omega\in\R^{md}\), depending only on
the fixed codebook and on the Fourier frequency, such that
\[
a_\omega^\top \tilde{\mat{V}}_{1:m} = 0
\qquad
\text{almost surely}.
\]
Thus
\[
\operatorname{Var}
\left(
a_\omega^\top \tilde{\mat{V}}_{1:m}
\right)
=
a_\omega^\top \Sigma a_\omega
=
0.
\]
Because \(\Sigma\) is a covariance matrix, it is positive semidefinite. Hence
\(a_\omega^\top \Sigma a_\omega=0\) implies
\[
\Sigma a_\omega = 0.
\]
Therefore \(a_\omega\) is a nontrivial null direction of \(\Sigma\), so \(\Sigma\) is
not full rank and
\[
\det \Sigma = 0.
\]

Under the joint Gaussian model, this makes
\(\log\det(\bar{\Sigma}^{\inv}\Sigma)\) diverge to \(-\infty\), so the log-determinant
expression for the total correlation becomes infinite. This singularity is therefore a property
of the idealized noiseless retrieval model, not of the log-determinant reduction itself.

The resolution is that the variational model does not produce deterministic slots.
Each posterior factor \(q_i(\tilde{\mat{v}}_i\mid \mat{x})\) is a non-degenerate
density, since the encoder assigns the slot a finite posterior precision. We model
this posterior uncertainty as independent additive Gaussian noise across slots and
coordinates, with per-coordinate variance \(\tau^2>0\). Under this regularization,
the joint covariance becomes
\[
\Sigma_\tau
=
\Sigma + \tau^2 \mat{I}_{md},
\]
and the block-diagonal marginal covariance becomes
\[
\bar{\Sigma}_\tau
=
\bar{\Sigma} + \tau^2 \mat{I}_{md}.
\]
The off-diagonal blocks \(\Sigma_{ij}\), \(i\neq j\), are unchanged, while every
degenerate direction receives variance \(\tau^2\). Since \(\Sigma\succeq 0\), we have
\[
\Sigma_\tau
=
\Sigma + \tau^2 \mat{I}_{md}
\succ 0.
\]
Thus the regularized covariance is nonsingular, and the log-determinant expression is
well-defined.

Henceforth, in the total-correlation bound, we replace \(\bar{\Sigma}\) by
\(\bar{\Sigma}_\tau = \bar{\Sigma}+\tau^2\mat{I}_{md}\). A useful consequence is that,
because \(\bar{\Sigma}\succeq 0\),
\[
\lambda_{\min}(\bar{\Sigma}+\tau^2\mat{I}_{md})
\geq
\tau^2.
\]
Therefore
\begin{equation}
\norm{
(\bar{\Sigma}+\tau^2\mat{I}_{md})^{-1/2}
}_{\mathrm{op}}
=
\frac{1}{
\sqrt{
\lambda_{\min}(\bar{\Sigma}+\tau^2\mat{I}_{md})
}
}
\leq
\tau^{-1}.
\label{eq:white-bound}
\end{equation}
This gives a posterior precision floor for the whitening operation. Its operator norm is controlled by \(\tau\) alone, and the conditioning of the noiseless diagonal blocks
\(\Sigma_{ii}\) never enters.

\subsection{Reduction to the Log-Determinant} \label{sec:reduction}

\begin{remark}
A natural approach to bounding~\eqref{eq:sum-of-mi} would be to work with each
$\I{\tilde{\mat{v}}_i}{\tilde{\mat{V}}_{<i}\mid \mat{x}}$ individually. For jointly
Gaussian variables, each such term equals $-\tfrac{1}{2}\log\det(\mat{I} - R_i)$, where
$R_i = \Sigma_{ii}^{-1/2}\Sigma_{i,<i}\Sigma_{<i,<i}^{-1}\Sigma_{<i,i}\Sigma_{ii}^{-1/2}$
is the canonical correlation matrix. Bounding $R_i$ via the submultiplicative chain
\[
\norm{R_i}_{\mathrm{op}} \leq \norm{\Sigma_{ii}^{-1/2}}_{\mathrm{op}}^2 \,
\norm{\Sigma_{i,<i}}_{\mathrm{op}}^2 \, \norm{\Sigma_{<i,<i}^{\inv}}_{\mathrm{op}}
\]
would require $\norm{\Sigma_{<i,<i}^{\inv}}_{\mathrm{op}} = 1/\lambda_{\min}(\Sigma_{<i,<i})$,
which is the \emph{smallest} eigenvalue of $\Sigma_{<i,<i}$ and is not controlled by the
operator norms we can establish. Moreover, even granting such a bound, the
submultiplicative product discards the alignment between the ill-conditioned directions
of $\Sigma_{<i,<i}^{\inv}$ and the directions in which $\Sigma_{i,<i}$ carries mass, so
the resulting estimate would not be tight. We therefore do not bound the $R_i$
individually.
\end{remark}

Since we are ultimately interested in the aggregate quantity
established in~\eqref{eq:sum-of-mi},
\[
\sum_{i=2}^{m} \I{\tilde{\mat{v}}_i}{\tilde{\mat{V}}_{<i}\mid \mat{x}}
\;=\; D_{\mathrm{KL}}\!\bigl(q \,\|\, \textstyle\prod_i q_i\bigr),
\]
we bound this total correlation directly. Under the joint Gaussian model with mean
$(\mat{v}_1(\mat{x}),\dots,\mat{v}_m(\mat{x}))$ and covariance $\Sigma$, this KL admits the
closed form
\[
D_{\mathrm{KL}}\!\bigl(q \,\|\, \textstyle\prod_i q_i\bigr)
\;=\; \tfrac{1}{2}\!\left[\operatorname{tr}\!\bigl(\bar{\Sigma}^{\inv}\Sigma\bigr) - md
- \log\det\!\bigl(\bar{\Sigma}^{\inv}\Sigma\bigr)\right],
\]
where $\bar{\Sigma} = \operatorname{blkdiag}(\Sigma_{11},\dots,\Sigma_{mm})$ is the
covariance of the product of marginals $\prod_i q_i$.

Write $\Delta = \Sigma - \bar{\Sigma}$ for the part of $\Sigma$ consisting of its
off-diagonal blocks $\Sigma_{ij}$ ($i \neq j$); by construction $\Delta$ has vanishing
diagonal blocks. Then $\bar{\Sigma}^{\inv}\Sigma = \mat{I} + \bar{\Sigma}^{\inv}\Delta$, and
because $\bar{\Sigma}^{\inv}$ is block-diagonal while $\Delta$ has no diagonal blocks,
\[
\operatorname{tr}\!\bigl(\bar{\Sigma}^{\inv}\Sigma\bigr)
= md + \operatorname{tr}\!\bigl(\bar{\Sigma}^{\inv}\Delta\bigr)
= md + \sum_{i} \operatorname{tr}\!\bigl(\Sigma_{ii}^{\inv}\,\Delta_{ii}\bigr)
= md,
\]
so the trace term cancels exactly. Introducing the symmetric matrix
\[
M \;=\; \bar{\Sigma}^{-1/2}\,\Delta\,\bar{\Sigma}^{-1/2},
\]
which satisfies $\det(\mat{I}+\bar{\Sigma}^{\inv}\Delta) = \det(\mat{I}+M)$ by similarity and $\operatorname{tr}(M) = \operatorname{tr}(\bar{\Sigma}^{\inv}\Delta) = 0$, we are left with
\begin{equation}
\sum_{i=2}^{m} \I{\tilde{\mat{v}}_i}{\tilde{\mat{V}}_{<i}\mid \mat{x}}
\;=\; -\tfrac{1}{2}\log\det(\mat{I}+M)
\;=\; -\tfrac{1}{2}\sum_{k} \log(1+\mu_k),
\label{eq:tc-logdet}
\end{equation}
where $\{\mu_k\}$ are the eigenvalues of $M$. Since $\mat{I}+M = \bar{\Sigma}^{-1/2}\Sigma\, \bar{\Sigma}^{-1/2} \succeq 0$, each eigenvalue satisfies $\mu_k \geq -1$. Note that this reduction requires only the marginal whitening $\bar{\Sigma}^{-1/2}$, and does not invert the conditioning block $\Sigma_{<i,<i}$.

\subsection{Bounding the Total Correlation}

We now bound~\eqref{eq:tc-logdet} with $\bar{\Sigma}$ regularized as above. The eigenvalues of
$M$ obey $\sum_k \mu_k = \operatorname{tr}(M) = 0$ and $\mu_k > -1$. For each $k$, the
second-order Taylor expansion of $\log(1+\cdot)$ with Lagrange remainder gives
\[
\log(1+\mu_k) = \mu_k - \frac{\mu_k^2}{2(1+\zeta_k)^2},
\qquad \zeta_k \ \text{between}\ 0 \ \text{and}\ \mu_k.
\]
In either sign of $\mu_k$ one has $1 + \zeta_k > 1 + \mu_{\min} \geq 1 - \norm{M}_{\mathrm{op}}$,
so provided $\norm{M}_{\mathrm{op}} < 1$,
\[
\log(1+\mu_k) > \mu_k - \frac{\mu_k^2}{2\,(1-\norm{M}_{\mathrm{op}})^2}.
\]
Summing over $k$ and using $\sum_k \mu_k = 0$ and $\sum_k \mu_k^2 = \norm{M}_F^2$,
\begin{equation}
\sum_{i=2}^{m} \I{\tilde{\mat{v}}_i}{\tilde{\mat{V}}_{<i}\mid \mat{x}}
= -\tfrac{1}{2}\sum_k \log(1+\mu_k)
\;\leq\; \frac{\norm{M}_F^2}{4\,(1-\norm{M}_{\mathrm{op}})^2}.
\label{eq:tc-bound}
\end{equation}

We now need to bound the two quantities on the right, $\norm{M}_F^2$, and $\norm{M}_{\mathrm{op}}$. Writing
$W = (\bar{\Sigma} + \tau^2\mat{I})^{-1/2}$ so that $M = W\Delta W$, the
submultiplicativity of the Frobenius norm under operator-norm multiplication together
with~\eqref{eq:white-bound} gives
\[
\norm{M}_F = \norm{W\Delta W}_F \leq \norm{W}_{\mathrm{op}}^2 \,\norm{\Delta}_F
\leq \tau^{-2}\,\norm{\Delta}_F.
\]
Since $\Delta$ collects the off-diagonal blocks, $\norm{\Delta}_F^2 = \sum_{i \neq j}
\norm{\Sigma_{ij}}_F^2$, and substituting the established
$\norm{\Sigma_{ij}}_F^2 = \tfrac{1}{d}\norm{\mat{v}_j \bind \mat{v}_i}^2$ with
$\norm{\mat{v}_j \bind \mat{v}_i}^2 = \mathcal{O}(1)$,
\begin{equation}
\norm{M}_F^2 \;\leq\; \frac{1}{\tau^4}\sum_{i \neq j} \norm{\Sigma_{ij}}_F^2
\;=\; \frac{1}{\tau^4 d}\sum_{i \neq j} \norm{\mat{v}_j \bind \mat{v}_i}^2
\;=\; \mathcal{O}\!\left(\frac{m^2}{\tau^4 d}\right).
\label{eq:Mfro}
\end{equation}
For the operator norm, the same reasoning gives $\norm{M}_{\mathrm{op}} \leq \tau^{-2}
\norm{\Delta}_{\mathrm{op}}$. Applying the triangle inequality and Cauchy-Schwarz to
$\Delta$, which collects all off-diagonal blocks $\Sigma_{ij}$, $i \neq j$,
\[
\norm{\Delta}_{\mathrm{op}} \leq \sqrt{\sum_{i \neq j} \norm{\Sigma_{ij}}_{\mathrm{op}}^2}
= \sqrt{m(m-1)\,\mathcal{O}\!\left(\tfrac{\log^2 d}{d^2}\right)}
= \mathcal{O}\!\left(\tfrac{m \log d}{d}\right),
\]
where we use the sharpened estimate $\norm{\Sigma_{ij}}_{\mathrm{op}} = \mathcal{O}(\log d / d)$
from the preceding section. Therefore $\norm{M}_{\mathrm{op}} = \mathcal{O}\!\bigl(m \log d / (\tau^2 d)\bigr)$. The regularity condition $\norm{M}_{\mathrm{op}} < 1$ of~\eqref{eq:tc-bound} holds
once $d \gtrsim m \log d / \tau^2$, in which regime $(1 - \norm{M}_{\mathrm{op}})^{-2} = 1 +
o(1)$.

Combining~\eqref{eq:tc-bound} and~\eqref{eq:Mfro}, we obtain the final estimate. Conditioned on the codebook, with per-slot posterior precision $\tau^2$ and $d \gtrsim m\log d/\tau^2$,
\[
\boxed{\;\sum_{i=2}^{m} \I{\tilde{\mat{v}}_i}{\tilde{\mat{V}}_{<i}\mid \mat{x}}
\;=\; \mathcal{O}\!\left(\frac{m^2}{\tau^4\, d}\right).\;}
\]
The total correlation among the retrieved slots thus vanishes as $d^{-1}$, grows quadratically in the number of slots, and vanishes with the posterior precision through $\tau^{-4}$.
 
\subsection{Proof of proposition~\ref{prop:capacity} (HRR latent capacity)}
 
We first bound $\I{\mat{x}}{\hat{\mat{z}}}$ by a sum over slots, then bound each term by both an additive white gaussian noise
(AWGN) capacity and a codebook entropy, and take the minimum.
 
\paragraph{Step 1: reduction to a sum}
The quantized symbolic latent $\hat{\mat{z}} = \sum_i \mat{s}_i\bind \hat{\mat{v}}_i$ is a deterministic
function of $(\hat{\mat{v}}_1,\dots,\hat{\mat{v}}_m)$ given the (frozen) symbol vectors. Hence
$\I{\mat{x}}{\hat{\mat{z}}} \le \I{\mat{x}}{\hat{\mat{V}}_{1:m}}$ by the data-processing inequality.
By the chain rule and the identity
$\I{\mat{x}}{\hat{\mat{v}}_i\mid \hat{\mat{V}}_{<i}} - \I{\mat{x}}{\hat{\mat{v}}_i} =
\I{\hat{\mat{v}}_i}{\hat{\mat{V}}_{<i}\mid \mat{x}} - \I{\hat{\mat{v}}_i}{\hat{\mat{V}}_{<i}}$,
\[
\I{\mat{x}}{\hat{\mat{V}}_{1:m}} \;\le\; \sum_{i=1}^{m} \I{\mat{x}}{\hat{\mat{v}}_i}
\;+\; \sum_{i=2}^{m} \I{\hat{\mat{v}}_i}{\hat{\mat{V}}_{<i}\mid \mat{x}}
=
\sum_{i=1}^{m} \I{\mat{x}}{\hat{\mat{v}}_i}
\;+\; \mathcal{O}\left( \frac{m^2}{t^4 d} \right),
\]
where we dropped the term $\I{\hat{\mat{v}}_i}{\hat{\mat{V}}_{<i}}$. Since quantization is applied slotwise, $(\hat{\mathbf{v}}_1,\dots,\hat{\mathbf{v}}_m)$ is a post-processing of $(\tilde{\mathbf{v}}_1,\dots,\tilde{\mathbf{v}}_m)$. Therefore the conditional total correlation among the quantized slots is no larger than that of the continuous retrievals, so Proposition~\ref{prop:slot-indep} applies to the second sum.
 
\paragraph{Step 2: per-slot AWGN bound}
Conditional on the encoder output, slot~$i$ is observed through the unbinding channel $\tilde{\mat{v}}_i = \mat{v}_i(\mat{x}) + \boldsymbol{\epsilon}_i$ where $\boldsymbol{\epsilon}_i$ has covariance $(m/d)\mat{I}_d$ (Appendix~\ref{app:retrieval}). The signal $\mat{v}_i(\mat{x})$ has expected squared norm at most~$1$ (by the HRR distribution and the value regularizer $\mathcal{L}_{\text{value}}$ which targets $\tau_n=1$). The capacity of a $d$-dimensional AWGN channel with average input power $P$ and per-dimension noise variance $\sigma^2$ is $\tfrac{d}{2}\log(1+P/(d\sigma^2))$. Substituting $P=1$ and $\sigma^2 = m/d$ gives per-slot capacity $\tfrac{d}{2}\log(1+\frac{1}{m})$. Quantization can only reduce mutual information (data-processing), so $\I{\mat{x}}{\hat{\mat{v}}_i} \le \I{\mat{x}}{\tilde{\mat{v}}_i} \le \tfrac{d}{2}\log(1+\frac{1}{m})$.
 
\paragraph{Step 3: per-slot codebook bound}
Since $\hat{\mat{v}}_i$ takes values in a finite codebook of size $k$,
$\I{\mat{x}}{\hat{\mat{v}}_i} \le \H(\hat{\mat{v}}_i) \le \log k$.
 
\paragraph{Step 4: combining}
Each term satisfies
$\I{\mat{x}}{\hat{\mat{v}}_i} \le \min\,\bigl(\tfrac{d}{2}\log(1+1/m),\,\log k\bigr)$, so
\[
\I{\mat{x}}{\hat{\mat{z}}} \;\le\; m\cdot \min\,\bigl(\tfrac{d}{2}\log(1+1/m),\,\log k\bigr)
\;=\; \min\,\bigl(m\cdot\tfrac{d}{2}\log(1+1/m),\, m\log k\bigr),
\]
which is the claim. \hfill$\square$
 
\paragraph{Remark on tightness}
The AWGN bound is tight when the encoder produces values $\mat{v}_i(\mat{x})$ that are
Gaussian-distributed with covariance proportional to identity (the capacity-achieving input). In
practice, the combination of the variance regularizer $\mathcal{L}_{\text{value}}$ (which targets
variance $1/d$ per entry, matching the HRR distribution) and quantization toward a learned codebook
drives the encoder toward this regime. The codebook bound $m\log k$ is tight when codebook entries
are used uniformly.
 
\paragraph{Design corollary}
For a target representation rate \(R\) in nats, Proposition~\ref{prop:capacity} requires
\[
m\cdot \frac{d}{2}\log(1+1/m) \ge R
\qquad \text{and} \qquad
m\log k \ge R.
\]
Equivalently,
\[
d \ge \frac{2R}{m\log(1+1/m)}
\qquad \text{and} \qquad
k \ge e^{R/m}.
\]
The first condition captures the continuous HRR retrieval bottleneck, while the second captures
the discrete codebook bottleneck. For large \(m\), since
\(\log(1+1/m)\approx 1/m\), the dimension condition becomes approximately
\[
d \gtrsim 2R.
\]
Thus, increasing $m$ lowers the effective per-slot SNR, but also distributes the target rate across more slots. These effects approximately balance in the large-$m$ regime, causing the continuous HRR capacity to saturate near $d/2$. In contrast, the codebook requirement $k \ge e^{R/m}$ becomes easier to satisfy as $m$ increases, since each slot needs to represent a smaller fraction of the total rate. Together, these conditions suggest to choose $m$ to be large enough to cover the expected number of generative factors, $d$ to be large enough to support the desired total continuous rate, and $k$ large enough to avoid a discrete quantization bottleneck.       

\section{Experimental setup}
\label{app:experimental-setup}

\paragraph{Choice of baselines}

Our baselines are chosen to position HRR against two reference points: established VAE-based disentanglement methods~\cite{higgins2017betavae, chenIsolatingSourcesDisentanglement2018} and methods that share our use of vector quantization~\cite{vandenoordNeuralDiscreteRepresentation2017, hsu2023disentanglement}. 

\paragraph{Hyperparameters for models} As described in Section~\ref{sec:disentanglement_experiments}, to ensure fair comparisons we ran a sweep for each model and dataset over a single hyperparameter, over two seeds. We then computed the average InfoM score across both seeds for each hyperparameter, and took the hyperparameter that had the highest average InfoM scores. The values that were swept over can be found in Table~\ref{tab:baselines_sweep_params}, and are consistent with prior works~\cite{hsu2023disentanglement}. The results of the sweep, the respective hyperparameters used for the main results in Table~\ref{tab:main_results}, can be found in Table~\ref{tab:selected_hyperparameters}.

Our HRR model uses a single set of hyperparameters across all datasets as we found that tuning hyperparameters did not provide much of an improvement in disentanglement performance. The exact values can be found in Table~\ref{tab:model_hyperparams}. An interesting observation early in our investigation is that performance was better when the weight of the regularization on the latent produced by the encoder, $\lambda_{\text{lat}}$, was low. We found similar performance keeping in a range between 0 and $1e-5$. Although we keep it at zero, and thus has no influence on the optimization of our model, we still include $\lambda_{\text{lat}}$ as a design parameter for future works. Although higher weight on $\lambda_{\text{lat}}$ produced latent representations closer to a true expected HRR vector, it tended to trade off disentanglement performance. We hypothesize that giving the encoder more freedom in producing its representations outweighs the benefit of the structure obtained by imposing it. However, even without regularizing the latent, Fig.~\ref{fig:latent_codebook_stats} shows us that the encoder still learns to produce latent representations that are zero mean and that have relatively constant per-dimension variance.

\paragraph{Computational details for experiments}

All models were trained using PyTorch~\cite{paszkePyTorchImperativeStyle2019a} on an internal SLURM cluster. Each training job used a single GPU. Due to limited availability on the shared cluster, runs were scheduled on whichever compatible GPU resources were available, primarily NVIDIA L40S, A30, and V100 GPUs. GPU type affected wall-clock runtime, but all models were trained using the same codebase, datasets, training protocol, evaluation pipeline, and random-seed structure.

On an NVIDIA L40S, a typical HRR training run, as well as the baseline training runs under the same experimental protocol, required slightly over one hour, although runs could take considerably longer on A30 or V100 GPUs. The main results reported in Table~\ref{tab:main_results} required approximately 7 aggregate GPU-days. The codebook-size and latent-dimension comparisons in Appendix~\ref{app:latent_dim_codebook_size_comparisons} required approximately another 7 aggregate GPU-days. The hyperparameter sweeps used to select baseline configurations required approximately 4 aggregate GPU-days. In total, the experiments reported in this paper required approximately 18 aggregate GPU-days. The full research project required additional compute due to preliminary experiments, failed runs, debugging, and exploratory model variants that are not included in the reported results.

After training, we used an NVIDIA GeForce RTX 3090 for inference on trained models and for constructing final figures and visualizations.

\paragraph{Computation of disentanglement metrics with HRRs} Since our final representation, $\hat{\mat{z}}$, is composed of discrete components from the codebook, we use the codebook indices produced by the vector quantization process to compute the disentanglement metrics, as was done for VQ-VAE and QLAE paper~\cite{hsu2023disentanglement}. 

\paragraph{Common experiment variables} All models share the same CNN architecture, inspired from the model used in the QLAE paper~\cite{hsu2023disentanglement}. A detailed description of the autoencoder architecture is in Table~\ref{tab:cnn_architecture}. Additionally, all models share the same optimization configuration, detailed in Table~\ref{tab:optim_hparams}. The HRR model uses one set of hyperparameters for all datasets, which are listed in Table~\ref{tab:model_hyperparams}.

\paragraph{Confidence intervals and bold font criteria}
The results in Table~\ref{tab:main_results} summarize performance over five independent seeds, specifically the sample mean of the scores for each respective metric. For each model, dataset, and metric, we report the sample mean together with a 95\% confidence interval for the mean, in Table~\ref{tab:full_infomec_results}. The interval is centered at the sample mean and uses a Student-$t$ critical value with $n-1$ degrees of freedom:
\[
\bar{x} \pm t_{0.975,n-1}\frac{s}{\sqrt{n}},
\]
where $\bar{x}$ is the sample mean, $s$ is the sample standard deviation computed with Bessel's correction, and $n=5$ is the number of seeds. For bolding, we use a CI-overlap criterion. In each column, we first identify the entry with the highest upper confidence bound. We then use that entry's lower confidence bound as a threshold: any entry whose lower confidence bound meets or exceeds this threshold is also bolded.

\begin{table}[t]
    \centering
    \small
    \begin{tabular}{llccccl}
    \toprule
    \textbf{Stage} & \textbf{Layer} & \textbf{In} & \textbf{Out} & $k$ & $s$ & \textbf{Activation} \\
\midrule
\multicolumn{7}{l}{\textit{Encoder}} \\
\midrule
Conv 1  & Conv2d$^\dagger$       & $C$       & 64        & 4 & 2 & LReLU(0.2) \\
Conv 2  & Conv2d$^\dagger$       & 64        & 128       & 4 & 2 & LReLU(0.2) \\
Conv 3  & Conv2d$^\dagger$       & 128       & 256       & 4 & 2 & LReLU(0.2) \\
Conv 4  & Conv2d$^\dagger$       & 256       & 512       & 4 & 2 & LReLU(0.2) \\
\cmidrule(lr){1-7}
Flatten & ---                    & ---       & 8{,}192   & --- & --- & --- \\
FC 1    & Linear                 & 8{,}192   & 512       & --- & --- & ReLU \\
FC 2    & Linear                 & 512       & $d_z$     & --- & --- & --- \\
\midrule
\multicolumn{7}{l}{\textit{Decoder}} \\
\midrule
Canvas  & Learnable param.\ $\mathbf{h}_0$ & --- & $512\!\times\!4\!\times\!4$ & --- & --- & --- \\
Style   & Linear                 & $d_z$     & 512       & --- & --- & ReLU \\
\cmidrule(lr){1-7}
Up 1    & StyleConvT$^\ddagger$  & 512       & 512       & 4 & 2 & LReLU(0.2) \\
Up 2    & StyleConvT$^\ddagger$  & 512       & 256       & 4 & 2 & LReLU(0.2) \\
Up 3    & StyleConvT$^\ddagger$  & 256       & 128       & 4 & 2 & LReLU(0.2) \\
Up 4    & StyleConvT$^\ddagger$  & 128       & 64        & 4 & 2 & LReLU(0.2) \\
\cmidrule(lr){1-7}
Out     & Conv2d                 & 64        & $C$       & 3 & 1 & --- \\
\bottomrule

    \end{tabular}
    \caption{%
      Architecture and shared training hyperparameters for the CNN autoencoder.
      $d_z$ denotes the latent dimension and $C$ the number of image channels.
      All convolutional layers omit bias terms.
    }
    \label{tab:cnn_architecture}

    \vspace{0.1em}

    \begin{minipage}{0.95\linewidth}
    \footnotesize
    $^\dagger$ Followed by InstanceNorm2d with learnable affine parameters
    ($\gamma$, $\beta$); padding $p=1$.\\
    $^\ddagger$ StyleConvT: ConvTranspose2d ($p=1$) $\to$ LReLU(0.2) $\to$
    InstanceNorm2d (no affine) $\to$ AdaIN, where the per-channel scale and bias
    are predicted from the style vector $\mathbf{w}$ via a linear layer.
    \end{minipage}

    \vspace{5em}

    \begin{tabular}{lc}
    \toprule
    
    \textbf{Hyperparameter} & \textbf{Value} \\
\midrule
Optimizer         & AdamW \\
Max. gradient steps   & 125,000 \\
Learning rate $\eta$     & $3 \times 10^{-4}$ \\
Batch size        & 128 \\
Gradient clip     & 0.5 \\
$\beta_1$         & $0.9$ \\
$\beta_2$         & $0.999$ \\
$\varepsilon$     & $1 \times 10^{-8}$ \\
\bottomrule

    \end{tabular}
    \caption{Optimizer hyperparameters used for all runs}
    \label{tab:optim_hparams}
\end{table}

\begin{table}[t]
\centering
\begin{tabular}{lcc}
  \toprule
  \textbf{Hyperparameter} & \textbf{Symbol} & \textbf{Value} \\
  \midrule
  Latent dimension        & $d_z$               & $512$ \\
  Number of symbols       & $m$                 & $9$ \\
  Codebook size           & $k$                 & $512$ \\
  \midrule
  Commitment cost         & $\beta$             & $0.6$ \\
  Latent reg.\ weight     & $\lambda_{\text{lat}}$  & $0$ \\
  Codebook reg.\ weight   & $\lambda_{\text{cb}}$   & $1 \times 10^{-3}$ \\
  VQ loss weight          & $\lambda_{\text{vq}}$   & $0.25$ \\
  Reconstruction weight   & $\lambda_{\text{rec}}$  & $1.0$ \\
  \midrule
  Learning rate           & $\eta$              & $3 \times  10^{-4}$ \\
  Codebook Learning Rate & $\eta_{\text{cb}}$   & 2$\eta$ \\
  Weight decay            & ---                 & $0$ \\
  \bottomrule
\end{tabular}
\vspace{0.8em}
\caption{Model hyperparameters for the proposed HRR autoencoder. Used for all experiments}
\label{tab:model_hyperparams}
\end{table}

\begin{table}[h]
    \centering
    \begin{tabular}{lll}
        \toprule
        \textbf{method} & \textbf{hyperparameter} & \textbf{values} \\
        \midrule
        $\beta$-VAE~\cite{higgins2017betavae}
            & $\beta = \lambda_{\mathrm{KL}}$
            & $[0.1,\, 0.3,\, 1,\, 3,\, 10]$ \\
        $\beta$-TCVAE~\cite{chenIsolatingSourcesDisentanglement2018}
            & $\beta = \lambda_{\mathrm{total\ correlation}}$
            & $[0.1,\, 0.3,\, 1,\, 3,\, 10]$ \\
        VQ-VAE~\cite{vandenoordNeuralDiscreteRepresentation2017}
            & weight decay
            & $[0.001,\, 0.01,\, 0.1,\, 1]$ \\
        QLAE~\cite{hsu2023disentanglement}
            & weight decay
            & $[0.001,\, 0.01,\, 0.1,\, 1]$ \\
        \bottomrule
    \end{tabular}
    \vspace{0.8em}
    \caption{Regularization hyperparameter tuning for each baseline}
    \label{tab:baselines_sweep_params}
\end{table}

\begin{table}[h]
    \centering
    \small
    \setlength{\tabcolsep}{6pt}
    \renewcommand{\arraystretch}{1.1}
    \begin{tabular}{l c c c c}
        \toprule
        model & Shapes3D & Falcor3D & Isaac3D & MPI3D-C \\
        \midrule
        $\beta$-VAE
    & $\beta = 10$ 
    & $\beta = 10$ 
    & $\beta = 3$ 
    & $\beta = 10$ \\
$\beta$-TCVAE
    & $\beta = 3$ 
    & $\beta = 3$ 
    & $\beta = 3$ 
    & $\beta = 3$ \\
VQ-VAE
    & $\lambda_{\mathrm{wd}} = .001$ 
    & $\lambda_{\mathrm{wd}} = .01$ 
    & $\lambda_{\mathrm{wd}} = .001$ 
    & $\lambda_{\mathrm{wd}} = 1$ \\
QLAE
    & $\lambda_{\mathrm{wd}} = 1$ 
    & $\lambda_{\mathrm{wd}} = 1$ 
    & $\lambda_{\mathrm{wd}} = 1$ 
    & $\lambda_{\mathrm{wd}} = 1$
 \\
        \bottomrule
    \end{tabular}
    \vspace{0.8em}
    \caption{Selected regularization hyperparameters for each model and dataset. Values were chosen from the corresponding hyperparameter sweeps.}
    \label{tab:selected_hyperparameters}
\end{table}

\clearpage

\section{Limitations} \label{app:limitations}

\paragraph{Empirical comparisons} Our empirical comparison is limited by the set of baselines that could be evaluated under a shared experimental protocol. In particular, we considered including Tripod~\cite{10.5555/3692070.3692839}, a recent follow-up to QLAE~\cite{hsu2023disentanglement} that combines multiple inductive biases for disentanglement and reports strong InfoMEC and DCI performance. However, Tripod differs from the other baselines not only in its objective, but also in its architectural assumptions and computational requirements. One component of its objective involves estimating Hessian-based regularization terms during training, which substantially increased training time in our setting. In addition, the architecture proposed for Tripod uses residual connections and differed from the CNN backbone used for the other methods in our comparison.

We attempted to evaluate Tripod in two ways: using the architecture proposed by the original authors, and adapting its Hessian-based regularization to the CNN backbone used in our experiments. The former was prohibitively expensive under our available compute budget, while the latter did not yield stable results consistent with those reported in the original paper. Across five seeds, we observed large variation in InfoM, with some runs reaching high scores and others performing substantially worse. Because we could not determine whether this instability reflected an implementation issue, sensitivity to architectural changes, or hyperparameter dependence, we chose not to include Tripod as a baseline. We believe this is the more conservative choice, since reporting an unreliable reproduction could unfairly understate the performance of Tripod.

As a result, our claims should be interpreted as comparisons against the baselines evaluated under our shared-backbone protocol, rather than as a definitive comparison against all recent disentanglement methods. Our main contribution is the investigation of VSA-inspired representations as an inductive bias for unsupervised disentanglement, rather than establishing absolute state-of-the-art disentanglement performance across all possible architectures and objectives.

\paragraph{Latent-to-source ratio} The choice of using $1.5\times$ latents to sources ratio differs from the $2\times$ ratio that was used in~\cite{hsu2023disentanglement}. The reason for this is that unbinding performance degrades with $m$ in our model since the unbinding noise becomes worse the more symbol-value pairs are bound together. Thus, we use a $1.5\times$ ratio to maintain the difficulty of the problem without penalizing performance of the HRR-based model. 

\paragraph{Increased autoencoder parameter count with HRRs} We point out that due to the larger size of the latent vector produced by our model, the autoencoder inevitably ends up having more parameters due to the larger projection matrices at the bottleneck. In our experiments, we found that this caused the autoencoder for our HRR model to have around 15.4M parameters, while the baselines had around 14.9M parameters. Although this causes the model architectures to not be completely identical, we argue that this is a feature of our model design, as the representations of prior disentanglement works are limited to keeping vector dimensionality close to the number of ground truth factors. Thus they are not able to take advantage of increasing model capacity without also hurting disentanglement performance by creating more latent units than necessary.

\paragraph{Broader Impacts} \label{app:impact_statement}

This work is primarily methodological and studies VSA-inspired inductive biases for unsupervised disentanglement on benchmark datasets. A potential positive impact of this work is that more structured latent representations may improve the interpretability, modularity, and controllability of learned representations. Such properties could be useful in scientific modeling, representation analysis, controllable generation, and downstream systems where understanding the factors encoded by a model is important.

At the same time, improvements in disentangled representation learning may also have negative downstream uses. More controllable latent representations can make it easier to manipulate generated or reconstructed content along semantically meaningful axes. In generative modeling contexts, this could contribute to misuse such as deceptive media editing, impersonation, or other forms of content manipulation. More interpretable representations could also be misused in surveillance or profiling settings if applied to sensitive human data.

\section{Effect of HRR latent dimension and codebook size} \label{app:latent_dim_codebook_size_comparisons}

Although we chose a single configuration for our HRR model for results, we run sweeps over each dataset for 9 combinations of different latent dimension and codebook sizes. The results for each disentanglement metric, modularity, explicitness, and compactness, are included in Figs.~\ref{fig:hrr_infom_sweep}~\ref{fig:hrr_infoe_sweep}~\ref{fig:hrr_infoc_sweep}, respectively.

\begin{figure}[h]
    \centering
    \includegraphics[width=1.0\linewidth]{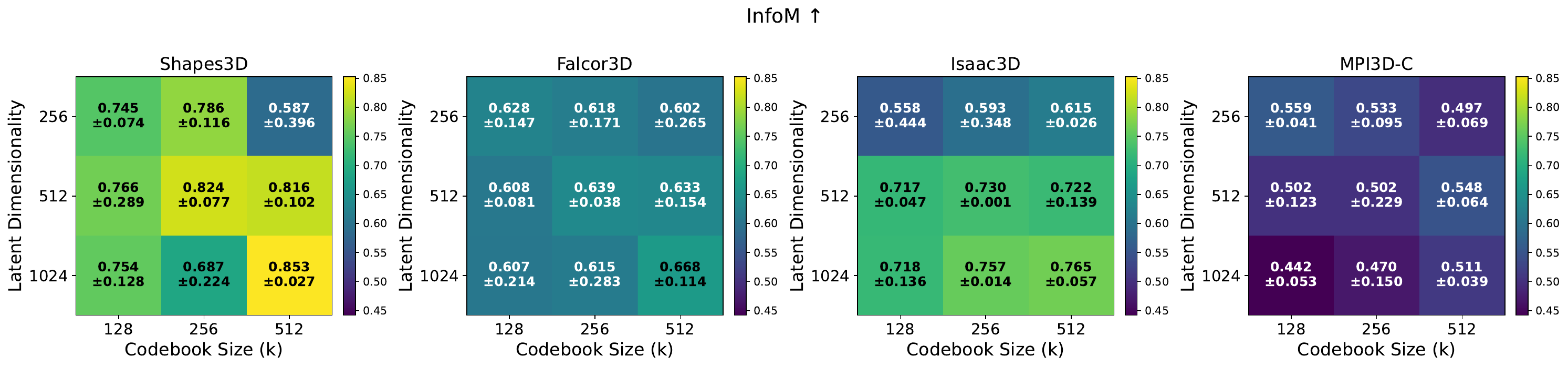}
    \caption{InfoM performance over different combinations of codebook size and latent dimension}
    \label{fig:hrr_infom_sweep}
\end{figure}

\begin{figure}[h]
    \centering
    \includegraphics[width=1.0\linewidth]{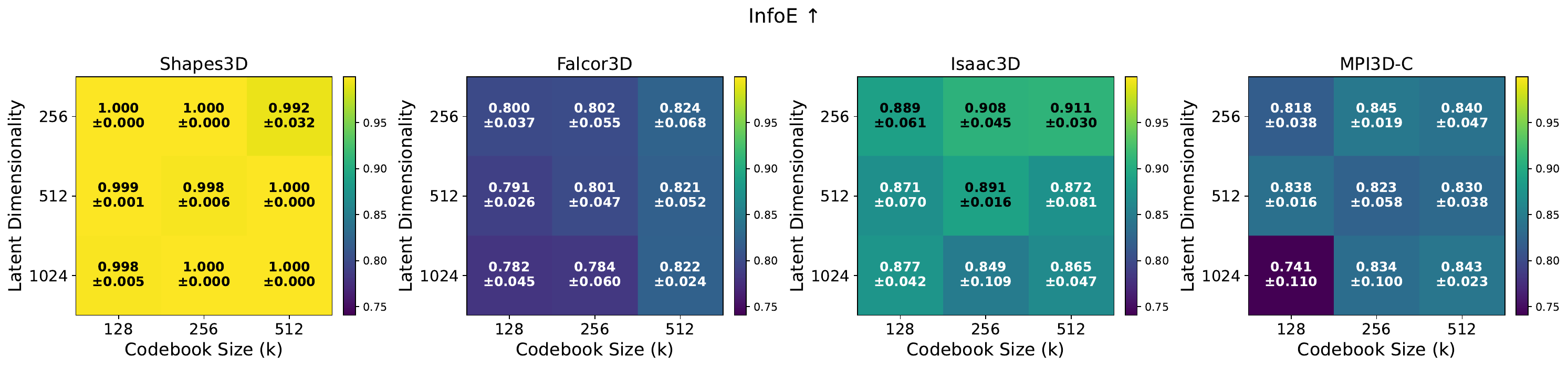}
    \caption{InfoE performance over different combinations of codebook size and latent dimension}
    \label{fig:hrr_infoe_sweep}
\end{figure}

\begin{figure}[t]
    \centering
    \includegraphics[width=1.0\linewidth]{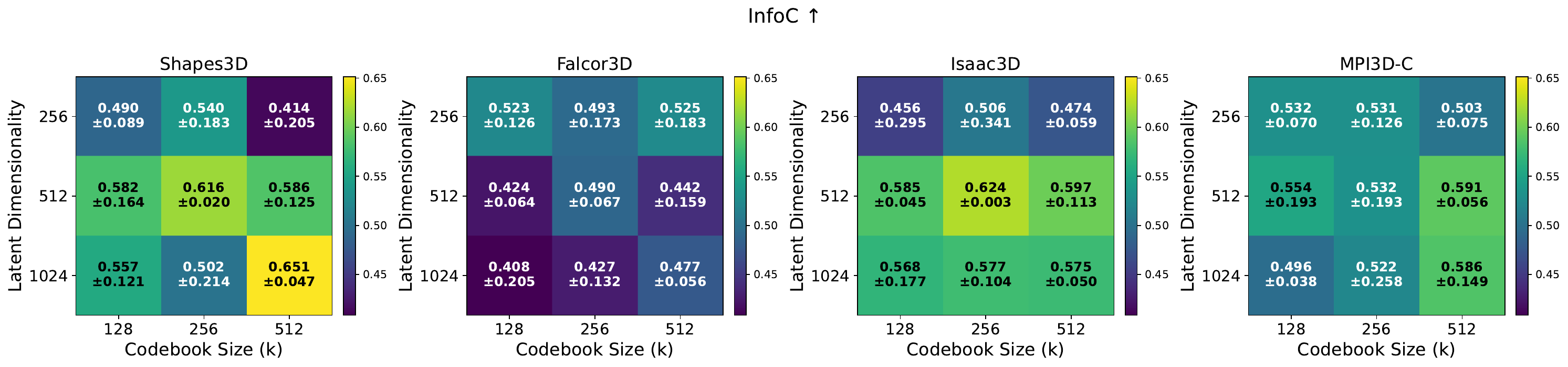}
    \caption{InfoC performance over different combinations of codebook size and latent dimension}
    \label{fig:hrr_infoc_sweep}
\end{figure}

\clearpage

\subsection{Negative results for the HRR model}

In the design of our model we ablated a few design choices but found that they either had no benefit, or ended up hurting the model's performance. We list the most notable of those below.

\begin{enumerate}
    \item Initializing the symbol vectors, $\mathcal{S}$, with an improved scheme proposed in a prior work~\cite{ganesanLearningHolographicReduced2021a}.
    \item Setting the codebook, $\mathcal{C}$, to be static.
    \item Setting the symbol vectors, $\mathcal{S}$ to be learnable.
    \item Removing the pre-processing network described in Section~\ref{ssec:model_arch}.
    \item Directly keeping approximate-HRR vectors in the codebook, as opposed to the parameterized approach described in the previous paragraph.
\end{enumerate}

\section{Complete disentanglement results} \label{app:complete_disentanglement_results}

The full disentanglement results for each dataset, with the mean and 95\% confidence intervals described in Appendix~\ref{app:experimental-setup}, can be found in Table~\ref{tab:full_infomec_results}.

\newpage
\begin{table}[t]
    \centering
    \tiny
    \setlength{\tabcolsep}{5pt}
    
    \begin{tabular}{l c c c c c c c}
        \toprule
        model & InfoM $\uparrow$ & InfoE $\uparrow$ & InfoC $\uparrow$ & D $\uparrow$ & I $\uparrow$ & C $\uparrow$ & PSNR (dB) $\uparrow$ \\
        \midrule
$\beta$-VAE & $0.59 \pm 0.04$ & $0.8 \pm 0.1$ & $0.47 \pm 0.04$ & $0.64 \pm 0.09$ & $\mathbf{0.997 \pm 0.005}$ & $\mathbf{0.55 \pm 0.06}$ & $\mathbf{38.9 \pm 0.3}$ \\
$\beta$-TCVAE & $0.6 \pm 0.1$ & $0.8 \pm 0.1$ & $0.51 \pm 0.04$ & $0.6 \pm 0.2$ & $\mathbf{0.98 \pm 0.03}$ & $\mathbf{0.5 \pm 0.2}$ & $\mathbf{40.8 \pm 0.8}$ \\
VQ-VAE & $0.6 \pm 0.1$ & $\mathbf{1.000 \pm 0.005}$ & $0.44 \pm 0.06$ & $0.74 \pm 0.07$ & $\mathbf{0.98 \pm 0.01}$ & $0.35 \pm 0.02$ & $\mathbf{44.8 \pm 0.3}$ \\
QLAE & $0.8 \pm 0.1$ & $\mathbf{0.998 \pm 0.005}$ & $0.52 \pm 0.07$ & $0.76 \pm 0.08$ & $\mathbf{0.993 \pm 0.006}$ & $0.35 \pm 0.04$ & $36 \pm 1$ \\
HRR (ours) & $\mathbf{0.85 \pm 0.05}$ & $\mathbf{0.999 \pm 0.005}$ & $\mathbf{0.61 \pm 0.04}$ & $\mathbf{0.88 \pm 0.02}$ & $\mathbf{0.995 \pm 0.007}$ & $\mathbf{0.41 \pm 0.02}$ & $\mathbf{41 \pm 4}$ \\
        \bottomrule
    \end{tabular}
    \vspace{0.8em}
    \caption{Full results for Shapes3D}
    \label{tab:full_results_3dshapes}
    \vspace{1.0em}

    \begin{tabular}{l c c c c c c c}
        \toprule
        model & InfoM $\uparrow$ & InfoE $\uparrow$ & InfoC $\uparrow$ & D $\uparrow$ & I $\uparrow$ & C $\uparrow$ & PSNR (dB) $\uparrow$ \\
        \midrule
        $\beta$-VAE & $\mathbf{0.64 \pm 0.03}$ & $0.68 \pm 0.02$ & $\mathbf{0.59 \pm 0.07}$ & $0.40 \pm 0.04$ & $0.83 \pm 0.01$ & $\mathbf{0.39 \pm 0.04}$ & $29.8 \pm 0.2$ \\
$\beta$-TCVAE & $\mathbf{0.6 \pm 0.2}$ & $0.65 \pm 0.02$ & $\mathbf{0.6 \pm 0.2}$ & $0.3 \pm 0.2$ & $0.79 \pm 0.06$ & $\mathbf{0.3 \pm 0.2}$ & $31.0 \pm 0.1$ \\
VQ-VAE & $\mathbf{0.50 \pm 0.07}$ & $\mathbf{0.86 \pm 0.03}$ & $0.40 \pm 0.03$ & $\mathbf{0.64 \pm 0.04}$ & $0.81 \pm 0.02$ & $\mathbf{0.340 \pm 0.007}$ & $\mathbf{31.4 \pm 0.2}$ \\
QLAE & $\mathbf{0.61 \pm 0.07}$ & $0.79 \pm 0.01$ & $\mathbf{0.46 \pm 0.07}$ & $0.50 \pm 0.03$ & $0.84 \pm 0.02$ & $\mathbf{0.28 \pm 0.01}$ & $28.3 \pm 0.4$ \\
HRR (ours) & $\mathbf{0.63 \pm 0.05}$ & $0.81 \pm 0.01$ & $\mathbf{0.45 \pm 0.07}$ & $0.58 \pm 0.05$ & $\mathbf{0.85 \pm 0.02}$ & $\mathbf{0.37 \pm 0.01}$ & $29.0 \pm 0.6$ \\
        \bottomrule
    \end{tabular}
    \vspace{0.8em}
    \caption{Full results for Falcor3D}
    \label{tab:full_results_falcor3d}
    \vspace{1.0em}

    \begin{tabular}{l c c c c c c c}
        \toprule
        model & InfoM $\uparrow$ & InfoE $\uparrow$ & InfoC $\uparrow$ & D $\uparrow$ & I $\uparrow$ & C $\uparrow$ & PSNR (dB) $\uparrow$ \\
        \midrule
        $\beta$-VAE & $0.54 \pm 0.06$ & $0.56 \pm 0.03$ & $0.46 \pm 0.06$ & $0.31 \pm 0.05$ & $0.84 \pm 0.03$ & $0.30 \pm 0.04$ & $37.0 \pm 0.4$ \\
$\beta$-TCVAE & $0.63 \pm 0.06$ & $0.62 \pm 0.06$ & $\mathbf{0.6 \pm 0.1}$ & $0.35 \pm 0.04$ & $\mathbf{0.87 \pm 0.01}$ & $0.34 \pm 0.04$ & $37.2 \pm 0.4$ \\
VQ-VAE & $0.70 \pm 0.05$ & $\mathbf{0.86 \pm 0.08}$ & $0.5 \pm 0.1$ & $\mathbf{0.64 \pm 0.09}$ & $\mathbf{0.91 \pm 0.07}$ & $\mathbf{0.36 \pm 0.06}$ & $\mathbf{45 \pm 2}$ \\
QLAE & $0.66 \pm 0.06$ & $0.81 \pm 0.05$ & $0.49 \pm 0.05$ & $0.59 \pm 0.04$ & $\mathbf{0.92 \pm 0.03}$ & $0.31 \pm 0.02$ & $38 \pm 4$ \\
HRR (ours) & $\mathbf{0.73 \pm 0.06}$ & $\mathbf{0.89 \pm 0.02}$ & $\mathbf{0.55 \pm 0.05}$ & $\mathbf{0.680 \pm 0.006}$ & $\mathbf{0.92 \pm 0.02}$ & $\mathbf{0.364 \pm 0.005}$ & $\mathbf{43.4 \pm 0.7}$ \\
        \bottomrule
    \end{tabular}
    \vspace{0.8em}
    \caption{Full results for Isaac3D}
    \label{tab:full_results_isaac3d}
    \vspace{1.0em}
    
    \begin{tabular}{l c c c c c c c}
        \toprule
        model & InfoM $\uparrow$ & InfoE $\uparrow$ & InfoC $\uparrow$ & D $\uparrow$ & I $\uparrow$ & C $\uparrow$ & PSNR (dB) $\uparrow$ \\
        \midrule
$\beta$-VAE & $0.37 \pm 0.05$ & $0.23 \pm 0.03$ & $0.29 \pm 0.09$ & $0.08 \pm 0.03$ & $0.57 \pm 0.02$ & $0.10 \pm 0.03$ & $29.5 \pm 0.3$ \\
$\beta$-TCVAE & $0.41 \pm 0.01$ & $0.42 \pm 0.06$ & $0.39 \pm 0.06$ & $0.27 \pm 0.02$ & $0.75 \pm 0.01$ & $0.26 \pm 0.02$ & $\mathbf{35.9 \pm 0.1}$ \\
VQ-VAE & $0.5 \pm 0.1$ & $\mathbf{0.82 \pm 0.03}$ & $\mathbf{0.53 \pm 0.05}$ & $0.49 \pm 0.08$ & $0.73 \pm 0.07$ & $\mathbf{0.34 \pm 0.02}$ & $34.9 \pm 0.5$ \\
QLAE & $\mathbf{0.56 \pm 0.05}$ & $0.78 \pm 0.04$ & $0.48 \pm 0.02$ & $0.51 \pm 0.04$ & $\mathbf{0.78 \pm 0.03}$ & $0.29 \pm 0.02$ & $34.2 \pm 0.3$ \\
HRR (ours) & $\mathbf{0.5 \pm 0.1}$ & $\mathbf{0.83 \pm 0.02}$ & $\mathbf{0.53 \pm 0.06}$ & $\mathbf{0.55 \pm 0.04}$ & $\mathbf{0.769 \pm 0.007}$ & $\mathbf{0.34 \pm 0.02}$ & $\mathbf{37 \pm 1}$ \\
        \bottomrule
    \end{tabular}
    \vspace{0.8em}
    \vspace{1.0em}
    \caption{Full results for MPI3D-C}
    \label{tab:full_results_mpi3d_complex}

\caption{Full InfoMEC results. Bold marks the best per column using the CI-overlap criterion. $\uparrow$: higher is better.}
\label{tab:full_infomec_results}
\end{table}

\section{Additional noise results} \label{app:additional_noise_results}

We include further results for the noise robustness experiment for the remaining datasets. Figs.~\ref{fig:gaussian_noise_grid_isaac3d},~\ref{fig:gaussian_noise_grid_falcor3d},~\ref{fig:gaussian_noise_grid_mpi3d_complex} show the results on Isaac3D, Falcor3D, and MPI3D-C, respectively.

\begin{figure}[h]
    \centering
    \includegraphics[width=1.0\linewidth]{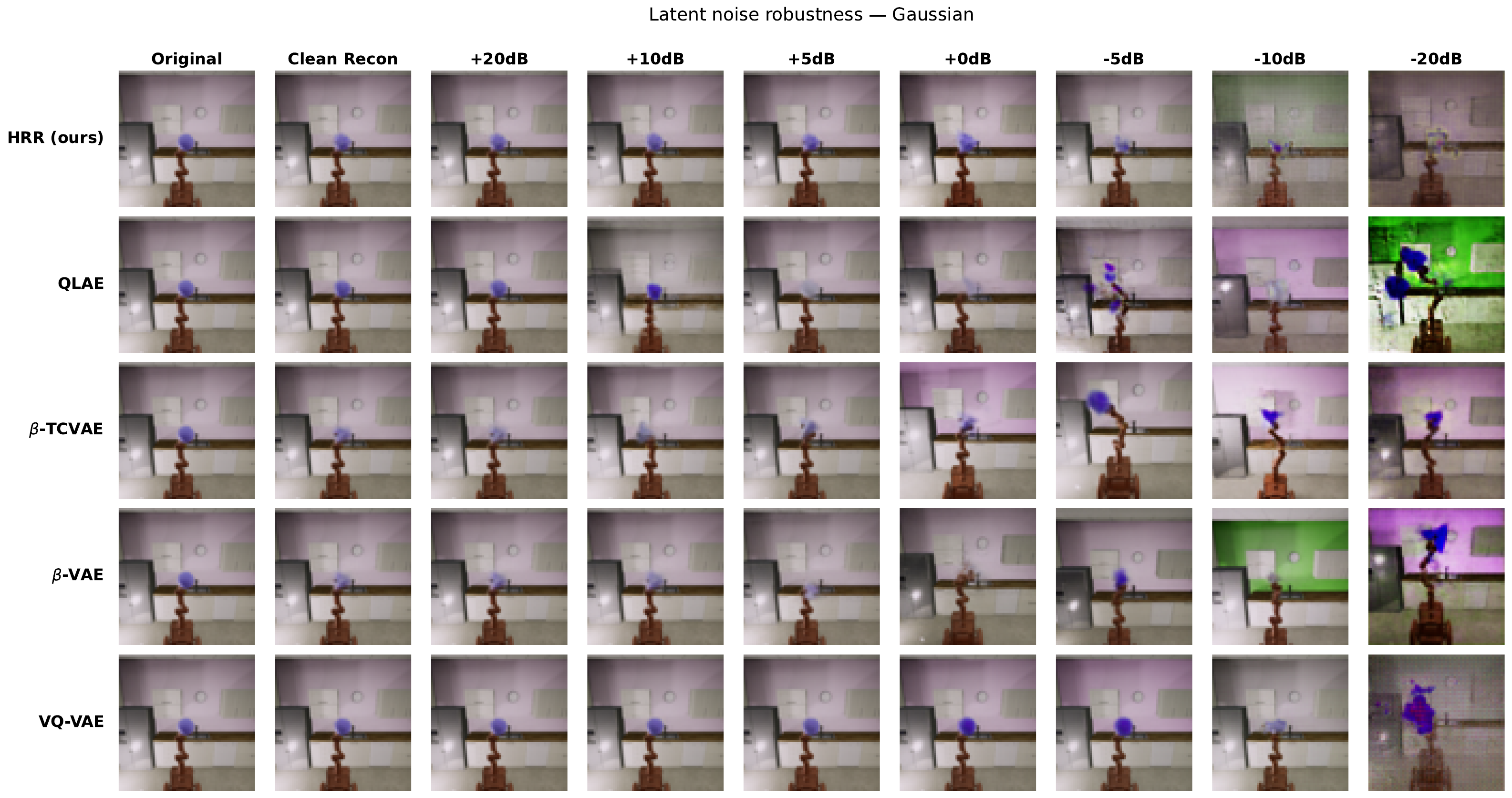}
    \caption{Each row is a visualization of how reconstruction quality degrades as noise intensity increases for each model. For each model, the seed with the highest InfoM score was used for evaluation.}
    \label{fig:gaussian_noise_grid_isaac3d}
\end{figure}

\begin{figure}[h]
    \centering
    \includegraphics[width=1.0\linewidth]{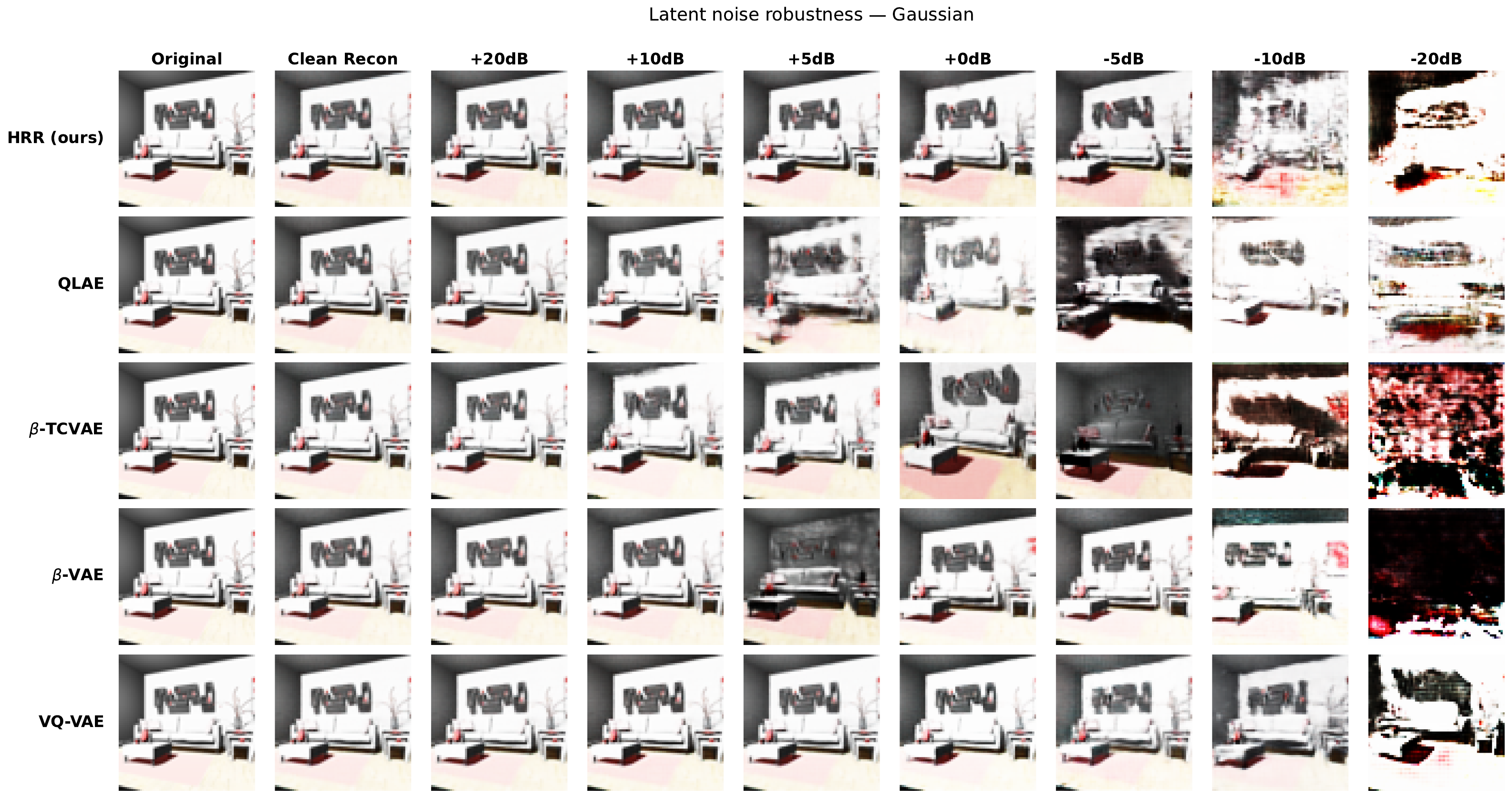}
    \caption{Each row is a visualization of how reconstruction quality degrades as noise intensity increases for each model. For each model, the seed with the highest InfoM score was used for evaluation. }
    \label{fig:gaussian_noise_grid_falcor3d}
\end{figure} 

\begin{figure}[h]
    \centering
    \includegraphics[width=1.0\linewidth]{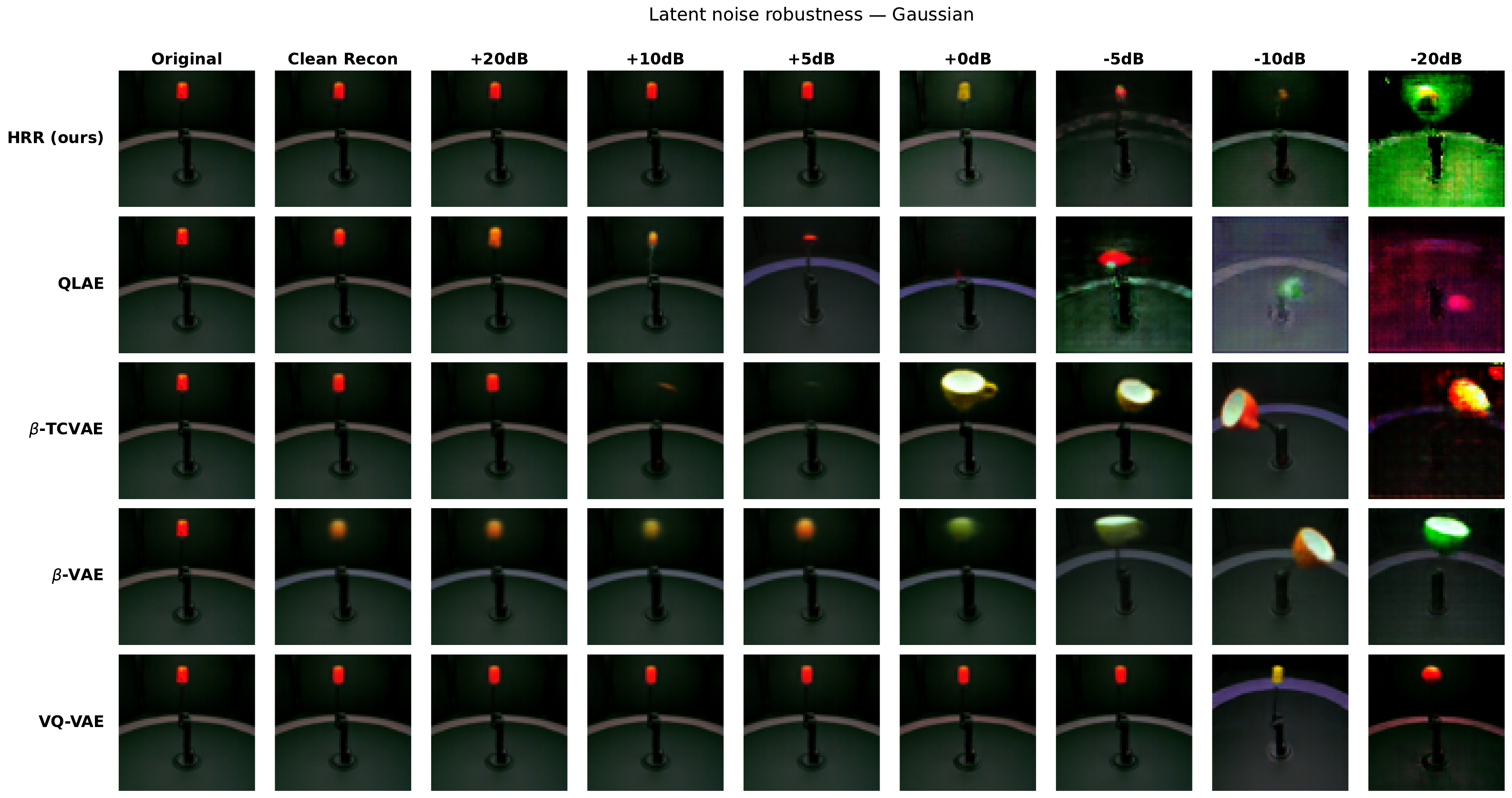}
    \caption{Each row is a visualization of how reconstruction quality degrades as noise intensity increases for each model. For each model, the seed with the highest InfoM score was used for evaluation.}
    \label{fig:gaussian_noise_grid_mpi3d_complex}
\end{figure} 

\clearpage

\section{Additional latent component swaps}

We include further results for the latent component swap experiment for the remaining datasets. Figs.~\ref{fig:component_swap_isaac3d},~\ref{fig:component_swap_falcor3d},~\ref{fig:component_swap_mpi3d_complex} show the results on Isaac3D, Falcor3D, and MPI3D-C, respectively.

\begin{figure}[h]
    \centering
    \includegraphics[width=1.0\linewidth]{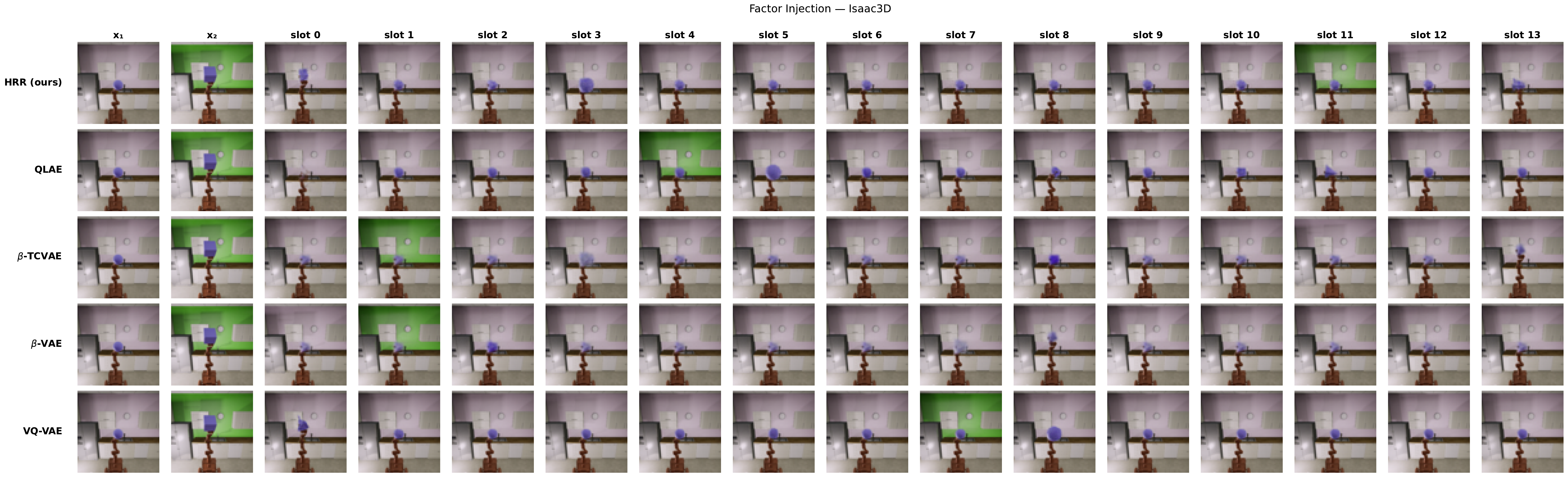}
    \caption{Latent component swaps performance for highest performing model relative to InfoM score. The slot 0 column represents taking the first latent unit from $x_2$, using it for the first latent unit for $x_1$, and decoding the result. Slot 1 does the same, but for the second latent unit, and so on. Each slot is replaced with its original value after visualization, so we expect each decoded result to manipulate either one, or no, aspects of the image.}
    \label{fig:component_swap_isaac3d}
\end{figure}

\begin{figure}[h]
    \centering
    \includegraphics[width=1.0\linewidth]{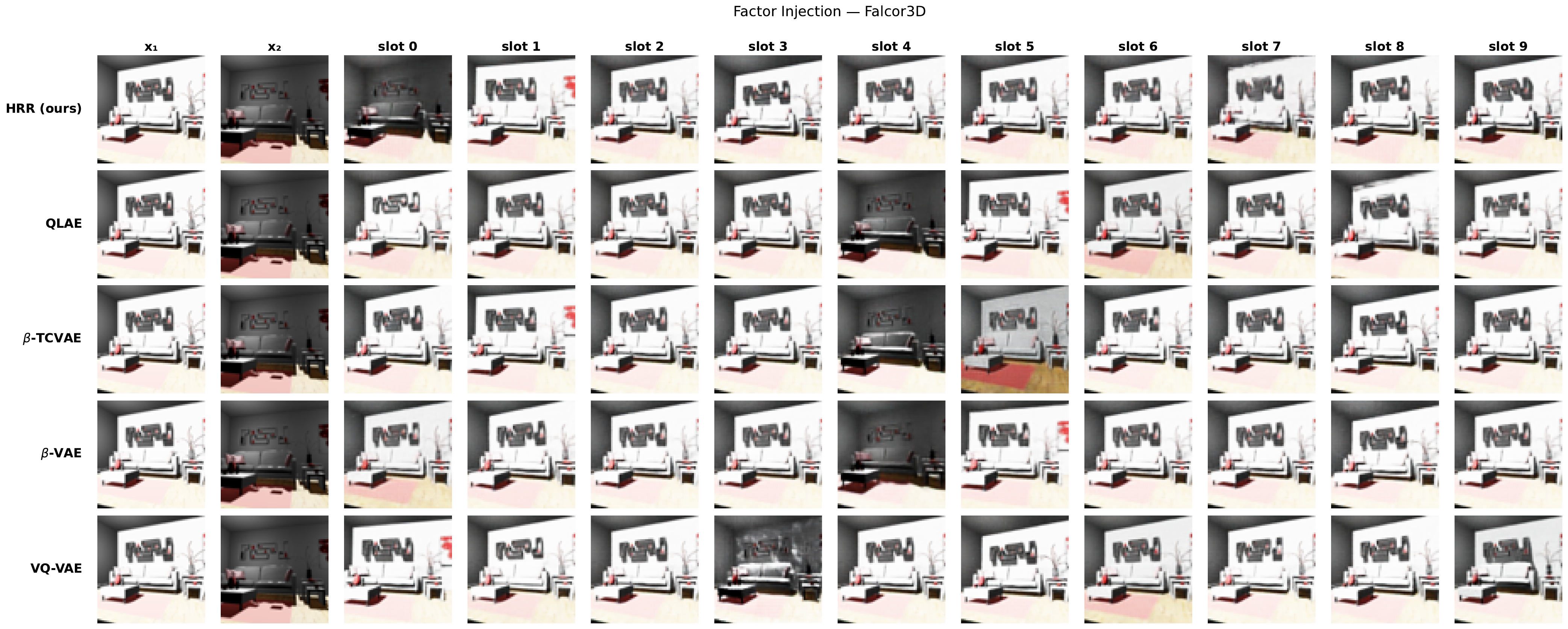}
    \caption{Latent component swaps performance for highest performing model relative to InfoM score. The slot 0 column represents taking the first latent unit from $x_2$, using it for the first latent unit for $x_1$, and decoding the result. Slot 1 does the same, but for the second latent unit, and so on. Each slot is replaced with its original value after visualization, so we expect each decoded result to manipulate either one, or no, aspects of the image.}
    \label{fig:component_swap_falcor3d}
\end{figure}

\begin{figure}[h]
    \centering
    \includegraphics[width=1.0\linewidth]{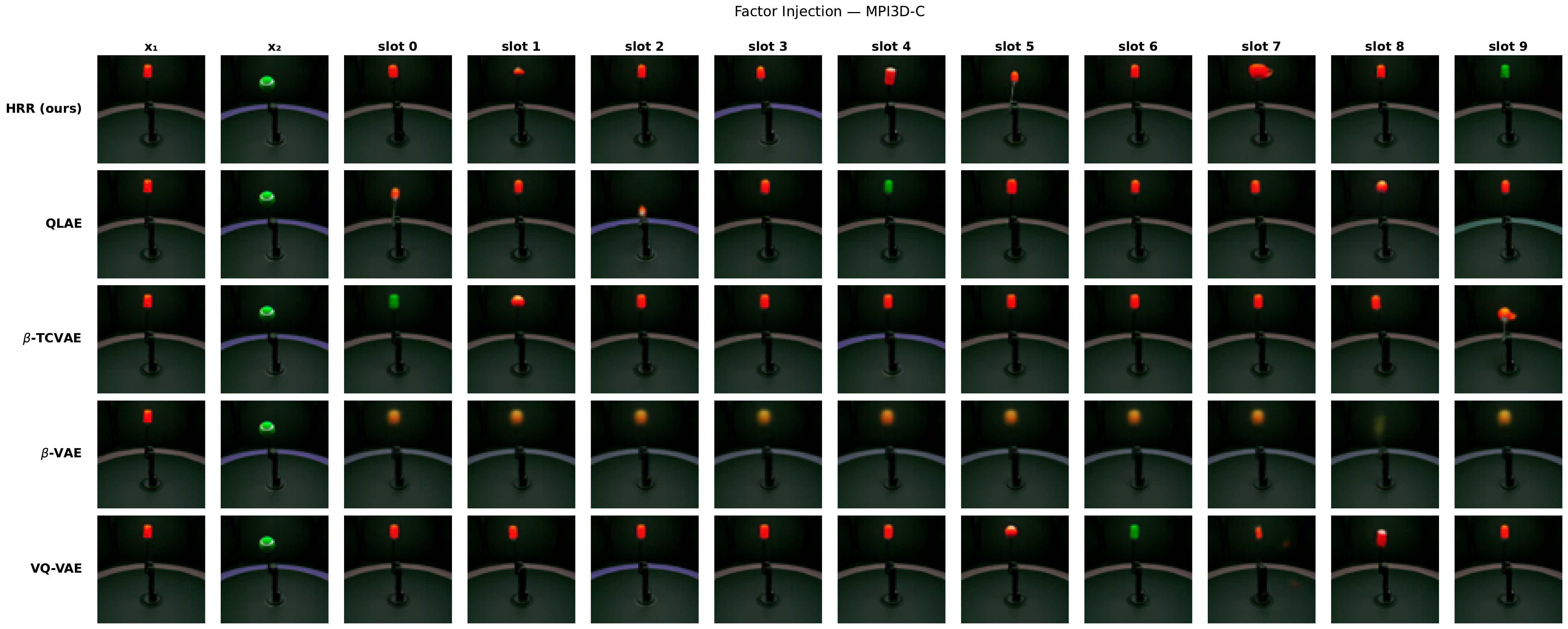}
    \caption{Latent component swaps performance for highest performing model relative to InfoM score. The slot 0 column represents taking the first latent unit from $x_2$, using it for the first latent unit for $x_1$, and decoding the result. Slot 1 does the same, but for the second latent unit, and so on. Each slot is replaced with its original value after visualization, so we expect each decoded result to manipulate either one, or no, aspects of the image.}
    \label{fig:component_swap_mpi3d_complex}
\end{figure}

\clearpage

\section{Additional latent interpolations}

We include further results for the latent interpolation experiment for the remaining datasets. Figs.~\ref{fig:interp_isaac3d},~\ref{fig:interp_falcor3d},~\ref{fig:interp_mpi3d_complex} show the results on Isaac3D, Falcor3D, and MPI3D-C, respectively.

\begin{figure}[h]
    \centering
    \includegraphics[width=1.0\linewidth]{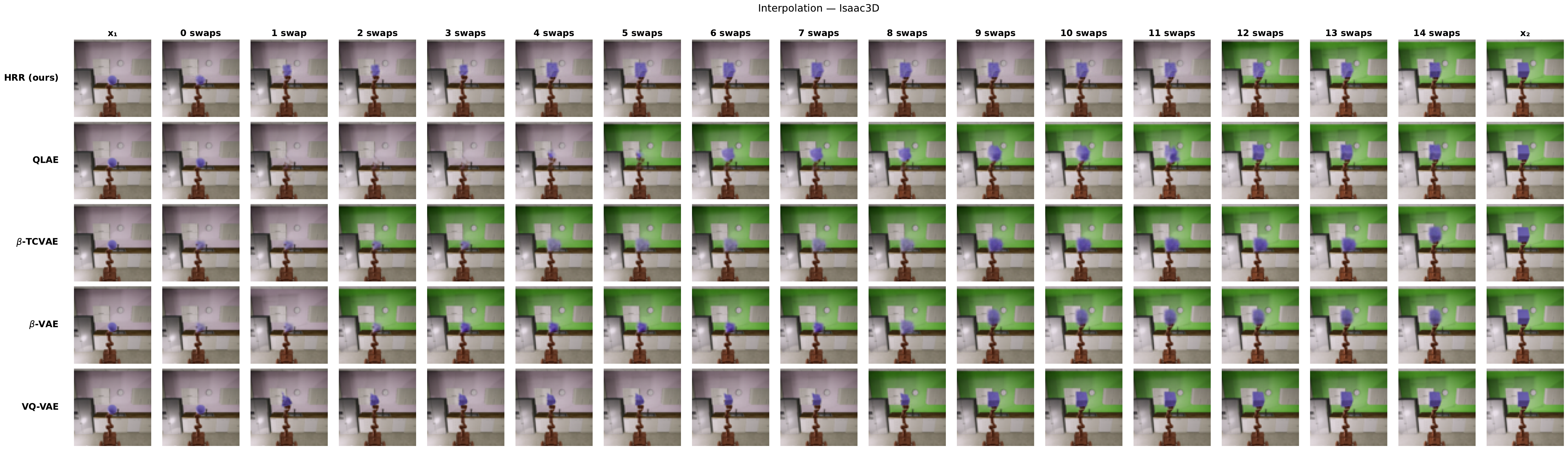}
    \caption{Latent interpolation swaps components progressively until the representation is fully transformed into that of the second image.}
    \label{fig:interp_isaac3d}
\end{figure}

\begin{figure}[h]
    \centering
    \includegraphics[width=1.0\linewidth]{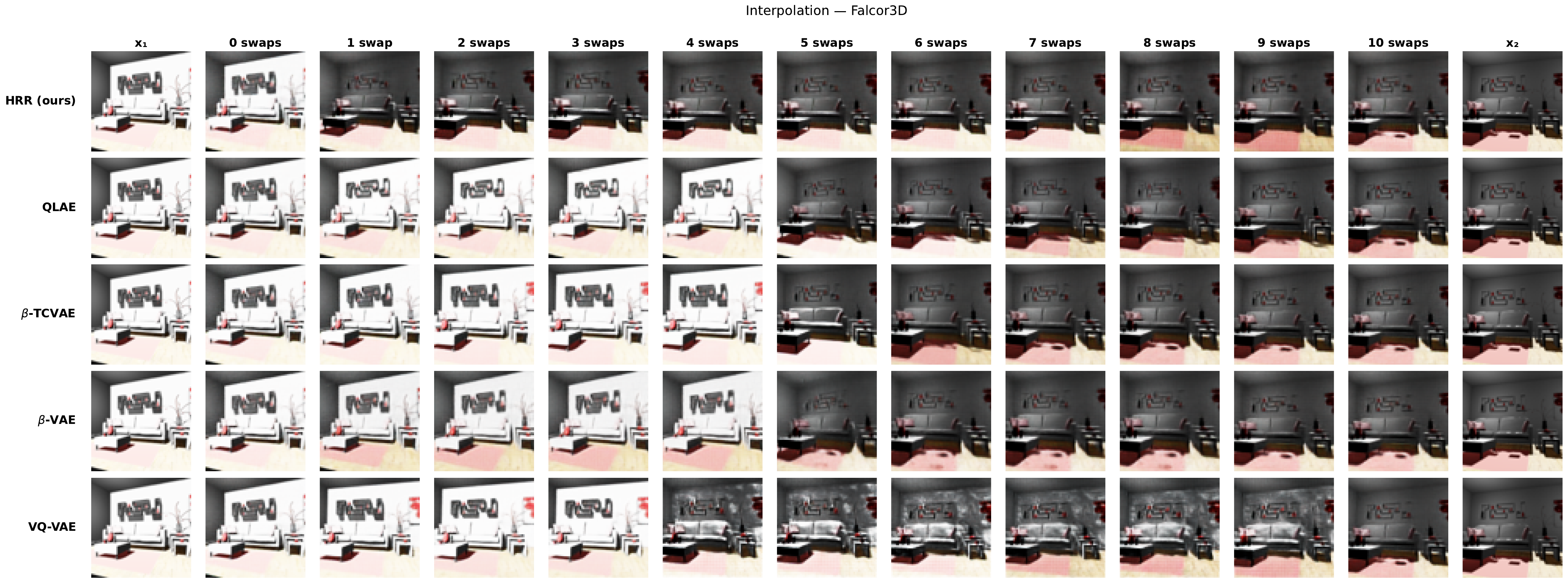}
    \caption{Latent interpolation swaps components progressively until the representation is fully transformed into that of the second image.}
    \label{fig:interp_falcor3d}
\end{figure}

\begin{figure}[h]
    \centering
    \includegraphics[width=1.0\linewidth]{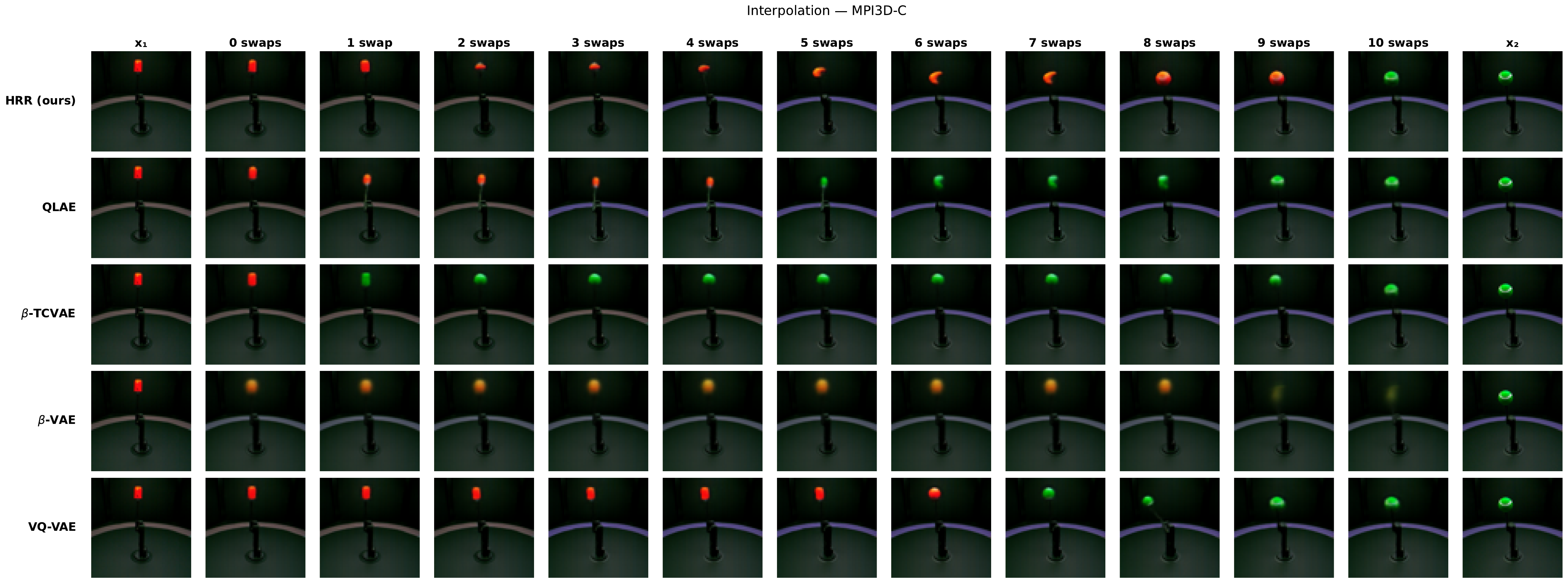}
    \caption{Latent interpolation swaps components progressively until the representation is fully transformed into that of the second image.}
    \label{fig:interp_mpi3d_complex}
\end{figure}

\clearpage

\end{document}